\def\eqref#1{equation~\ref{#1}}
\def\1{\bm{1}}
\def\va{{\bm{a}}}
\def\vb{{\bm{b}}}
\def\vc{{\bm{c}}}
\def\ve{{\bm{e}}}
\def\vf{{\bm{f}}}
\def\vh{{\bm{h}}}
\def\vm{{\bm{m}}}
\def\vo{{\bm{o}}}
\def\vp{{\bm{p}}}
\def\vq{{\bm{q}}}
\def\vu{{\bm{u}}}
\def\vv{{\bm{v}}}
\def\vw{{\bm{w}}}
\def\vx{{\bm{x}}}
\def\vy{{\bm{y}}}
\def\vz{{\bm{z}}}
\def\mB{{\bm{B}}}
\def\mH{{\bm{H}}}
\def\mI{{\bm{I}}}
\def\mJ{{\bm{J}}}
\def\mM{{\bm{M}}}
\def\mN{{\bm{N}}}
\def\mS{{\bm{S}}}
\def\mW{{\bm{W}}}
\def\mX{{\bm{X}}}
\def\mY{{\bm{Y}}}
\DeclareMathAlphabet{\mathsfit}{\encodingdefault}{\sfdefault}{m}{sl}
\SetMathAlphabet{\mathsfit}{bold}{\encodingdefault}{\sfdefault}{bx}{n}
\def\gA{{\mathcal{A}}}
\def\gB{{\mathcal{B}}}
\def\gL{{\mathcal{L}}}
\def\gN{{\mathcal{N}}}
\def\sR{{\mathbb{R}}}
\newcommand{\R}{\mathbb{R}}
\DeclareMathOperator*{\argmax}{arg\,max}
\DeclareMathOperator{\sign}{sign}
\theoremstyle{plain}
\newenvironment{theoremcopy}[1]{\innercustomthm}{\endinnercustomthm}
\newcommand{\update}{\textsc{Up}\xspace}
\newcommand{\aggregate}{\textsc{Agg}\xspace}
\newcommand{\mes}{\textsc{Msg}\xspace}
\newcommand{\init}{\textsc{Init}\xspace}
\def\eqref#1{equation~\ref{#1}}
\def\1{\bm{1}}
\newcommand{\dec}{\textsc{dec}\xspace}
\newcommand{\cmpnn}{{\sf C-MPNN}\xspace}
\newcommand{\rmpnn}{{\sf R-MPNN}\xspace}
\newcommand{\rawl}{{\sf rawl}}
\newcommand{\rwl}{{\sf rwl}}
\newcommand{\llbrace}{\{\!\!\{}
\newcommand{\rrbrace}{\}\!\!\}}
\def\va{{\bm{a}}}
\def\vb{{\bm{b}}}
\def\vc{{\bm{c}}}
\def\ve{{\bm{e}}}
\def\vf{{\bm{f}}}
\def\vh{{\bm{h}}}
\def\vm{{\bm{m}}}
\def\vo{{\bm{o}}}
\def\vp{{\bm{p}}}
\def\vq{{\bm{q}}}
\def\vu{{\bm{u}}}
\def\vv{{\bm{v}}}
\def\vw{{\bm{w}}}
\def\vx{{\bm{x}}}
\def\vy{{\bm{y}}}
\def\vz{{\bm{z}}}
\def\mB{{\bm{B}}}
\def\mH{{\bm{H}}}
\def\mI{{\bm{I}}}
\def\mJ{{\bm{J}}}
\def\mM{{\bm{M}}}
\def\mN{{\bm{N}}}
\def\mS{{\bm{S}}}
\def\mW{{\bm{W}}}
\def\mX{{\bm{X}}}
\def\mY{{\bm{Y}}}
\DeclareMathAlphabet{\mathsfit}{\encodingdefault}{\sfdefault}{m}{sl}
\SetMathAlphabet{\mathsfit}{bold}{\encodingdefault}{\sfdefault}{bx}{n}
\def\gA{{\mathcal{A}}}
\def\gB{{\mathcal{B}}}
\def\gL{{\mathcal{L}}}
\def\gN{{\mathcal{N}}}
\def\sR{{\mathbb{R}}}
\def\cmpnn{{\textnormal{C-MPNN}}}
\newcommand{\true}{{\sf true}}
\def\hrmpnn{{\textnormal{HR-MPNN}}}
\def\hcmpnn{{\textnormal{HC-MPNN}}}
\def\hrmpnns{{\textnormal{HR-MPNNs}}}
\def\hcmpnns{{\textnormal{HC-MPNNs}}}
\def\hrnet{{\textnormal{HRNet}}}
\def\hcnet{{\textnormal{HCNet}}}
\def\hcnets{{\textnormal{HCNets}}}
\def\hypercycle{{\textsf{HyperCycle}}}
\def\rmpnn{{\textnormal{R-MPNN}}}
\def\ar{{\textsf{ar}}}
\newcommand{\hrwl}{{\sf hrwl}}
\newcommand{\hcwl}{{\sf hcwl}}
\newcommand{\col}{\mathsf{col}}
\theoremstyle{plain}
\newtheorem{theorem}{Theorem}[section]
\newtheorem{proposition}[theorem]{Proposition}
\newtheorem{lemma}[theorem]{Lemma}
\theoremstyle{definition}
\theoremstyle{remark}
\newtheorem{remark}[theorem]{Remark}
\newtheorem{example}[theorem]{Example}
\title{Link Prediction with Relational Hypergraphs}
\author{\name Xingyue Huang \email xingyue.huang@cs.ox.ac.uk \\
\addr Department of Computer Science, University of Oxford, UK
\AND
\name Miguel Romero \email mgromero@uc.cl \\
\addr Department of Computer Science, Universidad Católica de Chile, Chile
\AND
\name Pablo Barcel{\'o} \email pbarcelo@uc.cl \\
\addr Institute for Mathematics and Comp. Engineering, Universidad Católica de Chile \& IMFD, Chile
\AND
\name Michael M. Bronstein \email michael.bronstein@cs.ox.ac.uk \\
\addr Department of Computer Science, University of Oxford, UK
\AND
\name {\.I}smail {\.I}lkan Ceylan \email ismail.ceylan@cs.ox.ac.uk \\
\addr Department of Computer Science, University of Oxford, UK
}
\begin{document}

\maketitle

\begin{abstract}
Link prediction with knowledge graphs has been thoroughly studied in graph machine learning, leading to a rich landscape of graph neural network architectures with successful applications. 
Nonetheless, it remains challenging to transfer the success of these architectures to inductive link prediction with \emph{relational hypergraphs}, where the task is over \emph{$k$-ary relations}, substantially harder than link prediction on knowledge graphs with binary relations only.
In this paper, we propose a framework for link prediction with relational hypergraphs, empowering applications of graph neural networks on \emph{fully relational} structures. Theoretically, we conduct a thorough analysis of the expressive power of the resulting model architectures via corresponding relational Weisfeiler-Leman algorithms and logical expressiveness.  Empirically, we validate the power of the proposed architectures on various relational hypergraph benchmarks. The resulting model architectures substantially outperform every baseline for inductive link prediction and also lead to competitive results for transductive link prediction. 
\end{abstract}

\section{Introduction}
\vspace{-0.4em}
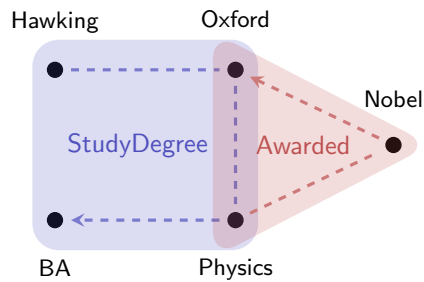
\begin{wrapfigure}{r}{0.46\textwidth}
    \centering
    \vspace{-2.5em}
    \begin{tikzpicture}[
        every node/.style={circle, draw, fill=black, inner sep=2pt, line width=0.5pt}, 
        line width=1.2pt, 
        >=stealth,
        shorten >=3pt, 
        shorten <=3pt,
        transform shape
    ]
    \definecolor{cartoonred}{RGB}{190,80,80}
    \definecolor{cartoonblue}{RGB}{80,80,190}
    \definecolor{cartoongreen}{RGB}{80,190,80}
    \begin{scope}
        \node (a)[label={[font=\small, xshift = 0em, yshift = -0.5em]90:$\mathsf{Hawking}$}] at (0,0) {};
        \node (b)[label={[font=\small, xshift= 0em, yshift = 0em]90:$\mathsf{Oxford}$}] at (2.4,0) {};
        \node (c)[label={[font=\small, xshift = 0em, yshift = -0.5em]-90:$\mathsf{BA}$}] at (0,-2) {};
        \node (d)[label={[font=\small, xshift= 0em, yshift=0.2em]-90:$\mathsf{Physics}$}] at (2.4,-2) {};
        \node (e)[label={[font=\small, xshift = 0em, yshift=0em]90:$\mathsf{Nobel}$}] at (4.5,-1) {};
    
        \draw[-, dashed,color=cartoonblue!70] (a) to (b);
        \draw[-, dashed,color=cartoonblue!70] (b) to (d);
        \draw[->, dashed,color=cartoonblue!70] (d) to (c);
        \node [draw = none, fill=none, text=cartoonblue] at (01.1,-1) {$\mathsf{StudyDegree}$};
        \draw[draw = none, fill = cartoonblue, fill opacity=0.2, rounded corners=10pt] (-0.3,0.4) rectangle (2.7,-2.4);

        \draw[-, dashed,color=cartoonred!70] (d) to (e);
        \draw[->, dashed,color=cartoonred!70] (e) to (b);
        \node [draw = none, fill=none, text=cartoonred] at (3.3, -1) {$\mathsf{Awarded}$};
        \draw[draw = none, fill = cartoonred, fill opacity=0.2, thick, rounded corners=10pt] 
            ($(b)+(-0.3,0.5)$) -- 
            ($(d)+(-0.3,-0.5)$) -- 
            ($(e)+(0.5,0)$) -- cycle;
        
    \end{scope}
    \end{tikzpicture}
   \vspace{-.8em}
    \caption{A relational hypergraph over the relations $\mathsf{StudyDegree}$ and $\mathsf{Awarded}$. The facts $\mathsf{StudyDegree}(\mathsf{Hawking}, \mathsf{Oxford}, \mathsf{Physics}, \mathsf{BA})$ and $\mathsf{Awarded}(\mathsf{Physics}, \mathsf{Nobel}, \mathsf{Oxford})$ are \emph{ordered} hyperedges, where the order of entities in each fact is denoted by dashed arrows.}
    \vspace{-2em}
    \label{fig:relational-hypergraph}
\end{wrapfigure}

Knowledge graphs consist of facts (or, edges) representing different relations between \emph{pairs of nodes}. Knowledge graphs are inherently incomplete~\citep{Ji2020ASO, wangKGESurvey2017} which motivated a large literature on link prediction with knowledge graphs~\citep{wang2014transe,schlichtkrull2017modeling, sun2019rotate, grail2020teru, vashishth2020compositionbased,LiuIndigo2021,zhu2022neural}. This task amounts to predicting missing facts in the knowledge graph and has led to a rich landscape of graph neural network architectures~\citep{schlichtkrull2017modeling, grail2020teru, vashishth2020compositionbased,zhu2022neural}. Our understanding of these architectures is supported by theoretical studies quantifying their expressive power \citep{barcelo2022wlgorelational, LabelingTrick2021, huang2023theory,qiu2024understanding}.

In this work, we are interested in link prediction on \emph{fully relational} data, where every relation is between $k$ nodes, for any \emph{relation-specific} choice of $k$. Relational data can encode rich relationships between entities; e.g., consider a relationship between \emph{four} entities: ``$\mathsf{Hawking}$ went to $\mathsf{Oxford}$ to study $\mathsf{Physics}$ and received a $\mathsf{BA}$ degree''. This can be represented with a fact $\mathsf{StudyDegree}(\mathsf{Hawking}, \mathsf{Oxford}, \mathsf{Physics}, \mathsf{BA})$.  
Clearly, relational data can be represented via relational hypergraphs, where each \emph{ordered, relational hyperedge} in the hypergraph corresponds to a relational fact (see \Cref{fig:relational-hypergraph}).

\textbf{Motivation.}
Given the prevalence of relational data, link prediction with relational hypergraphs has been widely studied in the context of shallow embedding models~\citep{wen2016representation,abbound2020boxe,fatemi2020knowledge, fatemi2021knowledge}, where the idea is to generalize knowledge graph embedding methods to relational hypergraphs. The key limitation of these approaches is that they are all \emph{transductive}: they cannot be directly used to make predictions between nodes that are not seen during training. The same limitation has motivated the development of graph neural network architectures for \emph{inductive} link prediction on knowledge graphs --- enabling for predictions between nodes that are not seen during training~\citep{grail2020teru} --- which eventually led to very strong architectures such as NBFNets~\citep{zhu2022neural}. 
\begin{wrapfigure}{l}{0.36\textwidth}
    \centering
    \vspace{-0.5em}
    \begin{tikzpicture}[
        every node/.style={circle, draw, fill=black, inner sep=2pt, line width=0.5pt}, 
        cross/.style={path picture={
        \draw[thick, black]
            (path picture bounding box.south west) -- (path picture bounding box.north east)
            (path picture bounding box.south east) -- (path picture bounding box.north west);
    }},
        line width=1.2pt, 
        >=stealth,
        shorten >=3pt, 
        shorten <=3pt,
        transform shape,
        scale = 0.7
    ]
    \definecolor{cartoonred}{RGB}{190,80,80}
    \definecolor{cartoonblue}{RGB}{80,80,190}
    \definecolor{cartoongreen}{RGB}{80,190,80}
    \begin{scope}
        \node (a) at (3,0) {};
        \node[label={[xshift=0.4cm]center:\Large$t$}] (b) at (2.1213, 2.1213) {};
        \node[label={[yshift=0.4cm]center:\Large$u$}] (c) at (0,3) {};
        \node (d) at (-2.1213, 2.1213) {};
        \node[label={[xshift=-0.4cm]center:\Large$w$}]  (e) at (-3.0, 0.0) {};
        \node (f) at (-2.1213, -2.1213) {};
        \node[label={[yshift=-0.4cm]center:\Large$v$}] (g) at (0.0, -3.0) {};
        \node (h) at (2.1213, -2.1213) {};
    
        \draw[-, dashed,color=cartoonblue!70] (a) to (b);
        \draw[-, dashed,color=cartoonred!70] (b) to (c);
        \draw[-, dashed,color=cartoonblue!70] (c) to (d);
        \draw[-, dashed,color=cartoonred!70] (d) to (e);
        \draw[-, dashed,color=cartoonblue!70] (e) to (f);
        \draw[-, dashed,color=cartoonred!70] (f) to (g);
        \draw[-, dashed,color=cartoonblue!70] (g) to (h);
        \draw[-, dashed,color=cartoonred!70] (h) to (a);
        
        \draw[->, dashed,color=cartoonred!70] (a) to (b);
        \draw[->, dashed,color=cartoonblue!70] (b) to (c);
        \draw[->, dashed,color=cartoonred!70] (c) to (d);
        \draw[->, dashed,color=cartoonblue!70] (d) to (e);
        \draw[->, dashed,color=cartoonred!70] (e) to (f);
        \draw[->, dashed,color=cartoonblue!70] (f) to (g);
        \draw[->, dashed,color=cartoonred!70] (g) to (h);
        \draw[->, dashed,color=cartoonblue!70] (h) to (a);
        \draw[-, dotted,color=cartoongreen!70] (c) to (e);
        \draw[-, dotted,color=cartoongreen!70] (c) to (g);
        \draw[->, dotted,color=cartoongreen!70] (e) to (b);
        \draw[->, dotted,color=cartoongreen!70] (g) to (b);
        
        \draw[draw = none, fill = cartoongreen, fill opacity=0.2, thick, rounded corners=10pt] 
            ($(b)$) -- 
            ($(c)$) -- 
            ($(g)$) -- cycle;

        \draw[draw = none, fill = cartoongreen, fill opacity=0.2, thick, rounded corners=10pt] 
            ($(b)$) -- 
            ($(c)$) -- 
            ($(e)$) -- cycle;

        \draw[draw = none, fill = cartoonred, fill opacity=0.2, thick, rounded corners=10pt] 
            ($(a)$) -- 
            ($(b)$) -- 
            ($(c)$) -- cycle;
        \draw[draw = none, fill = cartoonblue, fill opacity=0.2, thick, rounded corners=10pt] 
            ($(b)$) -- 
            ($(c)$) -- 
            ($(d)$) -- cycle;
        \draw[draw = none, fill = cartoonred, fill opacity=0.2, thick, rounded corners=10pt] 
            ($(c)$) -- 
            ($(d)$) -- 
            ($(e)$) -- cycle;
        \draw[draw = none, fill = cartoonblue, fill opacity=0.2, thick, rounded corners=10pt] 
            ($(d)$) -- 
            ($(e)$) -- 
            ($(f)$) -- cycle;
        \draw[draw = none, fill = cartoonred, fill opacity=0.2, thick, rounded corners=10pt] 
            ($(e)$) -- 
            ($(f)$) -- 
            ($(g)$) -- cycle;
        \draw[draw = none, fill = cartoonblue, fill opacity=0.2, thick, rounded corners=10pt] 
            ($(f)$) -- 
            ($(g)$) -- 
            ($(h)$) -- cycle;
        \draw[draw = none, fill = cartoonred, fill opacity=0.2, thick, rounded corners=10pt] 
            ($(g)$) -- 
            ($(h)$) -- 
            ($(a)$) -- cycle;
        \draw[draw = none, fill = cartoonblue, fill opacity=0.2, thick, rounded corners=10pt] 
            ($(h)$) -- 
            ($(a)$) -- 
            ($(b)$) -- cycle;
        
    \end{scope}
    \end{tikzpicture}
    \caption{Unary encoders cannot distinguish the query facts $r(u,v,t)$ and $r(u,w,t)$, drawn in green. }
    \label{fig:synthetic-maintext}
    \end{wrapfigure}
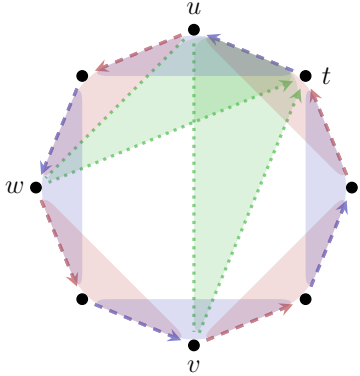
In the same spirit, graph neural networks have been extended for inductive link prediction on relational hypergraphs \citep{yadati2020gmpnn,zhou2023rdmpnn}, but these approaches do not enjoy the same level of success.
This can be attributed to multiple, related factors. In essence, link prediction with relational hypergraphs is a $k$-ary task (for $k$ varying depending on the relation), which is much more challenging than a binary prediction task and requires dedicated approaches. On the other hand,  existing proposals are simple extensions of relational graph neural networks (such as RGCNs~\citep{schlichtkrull2017modeling}), which cannot adequately capture $k$-ary tasks. In fact, these architectures are \emph{unary} encoders that are used for $k$-ary predictions, which is known to be a fundamental limitation~\citep{LabelingTrick2021, huang2023theory}. To make these points concrete, let us consider the example shown in \Cref{fig:synthetic-maintext}. In this example, regardless of the choice of the \emph{unary} encoder, it is not possible to distinguish between the query facts $r(u,w,t)$ and $r(u,v,t)$, because the nodes $w$ and $v$ are isomorphic in the hypergraph. However, an appropriate \emph{ternary} encoder can easily differentiate these facts using the information that the distance between the nodes $u$ and $w$ differs from the distance between the nodes $u$ and $v$. These limitations along with a lack of an established theory motivates our study.

\textbf{Approach.} We first investigate the expressive power of existing graph neural networks proposed for relational hypergraphs --- such as G-MPNNs~\citep{yadati2020gmpnn} and RD-MPNNs~\citep{zhou2023rdmpnn} --- to rigorously identify their limitations. This is achieved by studying the framework of \emph{hypergraph relational message passing neural networks} ($\hrmpnns$) which subsumes these architectures.
To address the limitations of $\hrmpnns$, we introduce \emph{hypergraph conditional message passing neural networks} ($\hcmpnns$) as a framework for inductive link prediction inspired by the conditional message passing paradigm studied for knowledge graphs \citep{zhu2022neural, huang2023theory}. We conduct a systematic expressiveness study showing that $\hcmpnns$ can compute richer properties of nodes --- dependent on $k$ other nodes --- when compared to $\hrmpnns$. 

Specifically, our study for expressive power answers the following questions:
\begin{enumerate} 
    \item \emph{Which nodes can be distinguished by an architecture?}  To answer this question, we generalize existing results given for graph neural networks on knowledge graphs~\citep{barcelo2022wlgorelational,huang2023theory} using Weisfeler-Leman algorithms designated for relational hypergraphs.
    \item \emph{What properties of nodes can be uniformly expressed by an architecture?} To answer this question, we investigate logical expressiveness which situates the class of node properties that can be expressed by an architecture within an appropriate logical fragment.
\end{enumerate}

\textbf{Contributions.} Our main contributions can be summarized as follows:
\begin{enumerate}[noitemsep,topsep=0pt,parsep=1pt,partopsep=0pt,label={\large\textbullet},leftmargin=*]
    \item We rigorously identify the expressive power and limitations of $\hrmpnns$ that encompass most of the existing architectures for link prediction with relational hypergraphs (\Cref{sec:HRMPNNs}).
    \item  We introduce the novel framework of $\hcmpnns$, which includes more expressive architectures, such as HCNets, and addresses the core limitations of $\hrmpnns$ (\Cref{sec:HCMPNNs}).
    \item We present a detailed empirical analysis to validate our theoretical findings (\Cref{sec:Experiments}).
    Experiments for inductive and transductive link prediction with relational hypergraphs show that a simple $\hcmpnns$ architecture surpasses all existing baselines leading to competitive results. Our ablation studies on different model components justify the importance of our model design choices. We supplement the real-world experiments with a synthetic experiment inspired by the example from \Cref{fig:synthetic-maintext} to validate the expressive power of $\hcmpnns$ (\Cref{app:synthetic experiments}). 
   
\end{enumerate}

\section{Related work}
\label{sec:rw}
\textbf{Knowledge graphs.}
Link prediction with knowledge graphs has been studied extensively in the literature. Early literature is dominated by knowledge graph embedding models including TransE \citep{brodes2013transe}, RotatE \citep{sun2019rotate}, ComplEx \citep{trouillon2016complex}, 
TuckER \citep{Balazevic_2019}, and BoxE  \citep{abbound2020boxe}, which are all restricted to the \emph{transductive} regime.
In the space of graph neural networks, RGCN \citep{schlichtkrull2017modeling} and CompGCN \citep{vashishth2020compositionbased} emerged as architectures extending standard message passing neural networks~\citep{GilmerSRVD17} to knowledge graphs using a relational message passing scheme. 
GraIL \citep{grail2020teru} is the first architecture explicitly designed to operate in the \emph{inductive} regime,
 but it suffers from a high computational complexity. 
\citet{zhu2022neural} proposed NBFNets as an architecture that subsumes previous methods such as NeuralLP \citep{NeuralLP} and DRUM \citep{DRUM}.  NBFNets perform strongly and have better computational complexity thanks to their high parallelizability \citep{zhu2022neural}.  Recently, A*Net \citep{zhu2023anet} is proposed to scale NBFNets further with the usage of a neural priority function. NBFNets are shown to fall under the framework of \emph{conditional message passing neural networks} (C-MPNNs) \citep{huang2023theory}, as they compute node representations ``conditioned'' pairwise on other node representations, making these architectures suitable for link prediction tasks and explaining their superior performance. The success of conditional message passing on knowledge graphs serves as a motivation for our work on relational hypergraphs.

\textbf{Relational hypergraphs.} 
Link prediction with relational hypergraphs has been widely studied in the context of shallow embedding models~\citep{wen2016representation,abbound2020boxe,fatemi2020knowledge, fatemi2021knowledge}. To score facts of the form $r(u_1,\cdots,u_k)$, some methods extend the scoring function (i.e., decoder) of existing knowledge graph embedding methods to consider multiple entities. For example, m-TransH \citep{wen2016representation} is an extension of TransH \citep{wang2014transe} designed to handle multiple entities jointly. Similarly, GETD \citep{liu2020tensor} builds on the bilinear embedding method TuckER \citep{Balazevic_2019} to handle higher-arity relations.  \citet{fatemi2020knowledge} proposed HSimplE and HypE that disentangle the position and relation embedding. BoxE \citep{abbound2020boxe} is an embedding model that encodes each relation using box embeddings, and naturally applies to $k$-ary relations while achieving strong results on transductive benchmarks. \citet{fatemi2021knowledge} explores the connection between relational algebra and relational hypergraph embeddings and proposes ReAlE. Recently, \citet{HJE} proposed HJE that jointly considers 3D convolution and relation-aware 2D convolution, whereas \citet{DBLP:journals/corr/abs-2402-08961} utilizes 3D circular convolutional neural network and the alternate mask stack strategy to enhance feature extraction.
In the space of graph neural networks, \citet{feng2018hypergraph} and \citet{yadati2019hypergcn} leverage message-passing methods on \emph{undirected} hypergraphs. 
Subsequently, StarE~\citep{galkin-etal-2020-message} was introduced as a message-passing neural network specifically designed for hyper-relational knowledge graphs, providing a different approach to represent high-arity facts, while \citet{ali2021improving} further studied different inductive settings for hyper-relational knowledge graph completion. \citet{georgiev2022heat} also incorporates a transformer into message passing for hyper-relational knowledge graphs.
The first approach that is tailored to relational hypergraphs is G-MPNN \citep{yadati2020gmpnn}, which operates by relational message passing. RD-MPNNs \citep{zhou2023rdmpnn} builds on this approach and additionally incorporates the positional information of entities in their respective relations during message passing, which is critical for relational facts since the order of nodes in each edge clearly matters. G-MPNN and RD-MPNNs represent the closest related works to the present study and we show that these architectures are instances of $\hrmpnns$ and hence are subject to the same limitations.

\section{Link prediction with relational hypergraphs}

\textbf{Relational hypergraphs.} A \emph{relational hypergraph} $G = (V, E, R, c)$ consists of a set $V$ of \emph{nodes}, a set $E$ of \emph{hyperedges} (or simply {\em edges} or {\em facts}) of the form $e=r(u_1, \ldots, u_k) \in E$ where $r \in R$ is a \emph{relation type}, $u_1, \ldots, u_k \in V$ are nodes, and $k=\mathtt{ar}(r)$ is the arity of the relation $r$. We consider labeled hypergraphs, where the labels are given by a coloring function on nodes $c: V \rightarrow D$. If the range of this coloring satisfies $D = \R^d$, we say $c$ is a $d$-dimensional \emph{feature map} and use the notation $\vx$. 
We write $\rho(e)$ as the relation $r \in R$ of the hyperedge $e \in E$, and $e(i)$ to refer to the node in the $i$-th arity position of the hyperedge $e$. 
We define $  E{(v)}=
    \left\{
    (e,i) \mid  e(i)=v, e \in E,  1 \leq i \leq \mathtt{ar}(\rho(e))
    \right\}$ as the set of edge-position pairs of a node $v$.
Intuitively, this set captures all occurrences of node $v$ in different hyperedges and arity positions. We also define the \emph{positional neighborhood} of a hyperedge $e$ with respect to a position $i$ as  $\gN_{i}(e)=
    \left\{
    (e(j),j) \mid  j \neq i,  1 \leq j \leq \mathtt{ar}(\rho(e))
    \right\}$. 
This set represents all nodes that co-occur with the node at position $i$ in a hyperedge $e$, along with their positions. A \emph{knowledge graph} is a relational hypergraph where for all $r\in R$, $\mathtt{ar}(r)=2$.

\textbf{Link prediction on hyperedges.} 
Given a relational hypergraph $G = (V, E, R, c)$, and a \emph{query} $q(u_1,...,u_{t-1},?,u_{t+1}...,u_k)$, where $q \in R$ is the query relation and ``$?$'' is the querying position, \emph{link prediction} is the problem of scoring all the hyperedges obtained by substituting nodes $v \in V$ in place of ``$?$''. 
We denote a $k$-tuple $(u_1,\ldots,u_k)$ by $\vu$ and 
the tuple $(u_1,\ldots,u_{t-1},u_{t+1},\ldots,u_k)$ by $\tilde{\vu}$. For convenience, we commonly write a query as a tuple $\vq = (q, \tilde{\vu}, t)$.

\textbf{Isomorphisms.} An \emph{isomorphism} from a relational hypergraph ${G = (V,E, R,c)}$ to a relational hypergraph ${G' = (V',E',R, c')}$ is a bijection $f:V\to V'$ such that $c(v)=c'(f(v))$ for all $v\in V$, and $r(u_1,\cdots,u_k)\in E$ if and only if $r(f(u_1),\cdots,f(u_k))\in E'$, for all $r\in R$ and $u_1, \cdots, u_k \in V$.

\textbf{Invariants.} For $k \geq 1$, we define a \emph{$k$-ary relational hypergraph invariant} as a function $\xi$ associating with each relational hypergraph $G = (V, E, R, c)$ a function $\xi(G)$ with domain $V^k$ such that for all relational hypergraphs $G, G'$, all isomorphisms $f$ from $G$ to $G'$, and for all $k$-tuples of nodes $\vu \in V^k$, we have $ \xi(G)(\vu) = \xi(G')(f(\vu))$.

\textbf{Refinements.} Given two relational hypergraph invariants $\xi$ and $\xi'$, we say a function $\xi(G):V^k\to D$ \emph{refines} a function $\xi'(G):V^k\to D$, denoted as $\xi(G) \preceq \xi'(G)$, if for all $\vu,\vu' \in V^k$, $\xi(G)(\vu)=\xi(G)(\vu')$ implies $\xi'(G)(\vu)=\xi'(G)(\vu')$. In addition, we call such functions \emph{equivalent}, denoted as $\xi(G) \equiv \xi'(G)$, if $\xi(G) \preceq \xi'(G)$ and $\xi'(G) \preceq \xi(G)$. A $k$-ary relational hypergraph invariant $\xi$ \emph{refines} a $k$-ary relational hypergraph invariant $\xi'$, if $\xi(G)$ refines $\xi'(G)$ for all relational hypergraphs $G$. Similarly for equivalence.

\section{Hypergraph relational MPNNs}
\label{sec:HRMPNNs}

We first introduce \hrmpnns, which capture existing architectures tailored for relational hypergraphs, such as G-MPNN \citep{yadati2020gmpnn} and RD-MPNN \citep{zhou2023rdmpnn} (\Cref{app: gmpnn-rdmpnn}). 

Let $G = (V, E, R, \vx)$ be a relational hypergraph, where $\vx$ is a feature map that yields the initial features $\vx_v=\vx(v)$ for all nodes $v \in V$. For $\ell \geq 0$, an $\hrmpnn$ iteratively computes a sequence of feature maps $\vh^{(\ell)}: V \mapsto \R^{d(\ell)}$, where the representations $\vh_v^{(\ell)}:=\vh^{(\ell)}(v)$ are given by: 
\begin{align*}
\vh_{v}^{(0)} &= \vx_v, \\
\vh_{v}^{(\ell+1)}  &=\update\Big( \vh_{v}^{(\ell)},  \aggregate\big(\vh_{v}^{(\ell)} , 
 \llbrace \mes_{\rho(e)} \big(\{(\vh_{w}^{(\ell)}, j ) \mid (w,j) \in \gN_{i}(e) \} \big) \mid (e,i) \in E(v) \rrbrace \big)  \Big),
\end{align*}
where $\update$, $\aggregate$, and $\mes_{\rho(e)}$ are differentiable, \emph{update}, \emph{aggregation}, and relation-specific \emph{message} functions, respectively. These functions are layer-specific, but we omit the superscript $(\ell)$ for brevity.  An $\hrmpnn$ has a fixed number of layers $L \geq 0$ and the final representations of nodes are given by the function $\vh^{(L)}: V \rightarrow \R^{d(L)}$. 
We can then use a $k$-ary decoder $\dec_q : \R^{d(L) \times k} \rightarrow \R$, to produce a score for the likelihood of $q(\vu)$ for $q \in R, \vu \in V^k$.

$\hrmpnns$ contain architectures designed for single-relational, undirected hypergraphs, such as HGNN \citep{feng2018hypergraph} and HyperGCN \citep{yadati2019hypergcn} (see \Cref{app: hgnn-hypergcn}). Furthermore, $\hrmpnns$ capture relational message passing neural networks \citep{huang2023theory}, as a special case (see \Cref{app: hrmpnn-rmpnn}).

MPNNs are well-understood both in terms of their ability to distinguish graph nodes \citep{MorrisAAAI19,DBLP:conf/iclr/XuHLJ19} and in terms of their capacity to capture logical node properties \citep{BarceloKM0RS20}. This line of work has been extended to relational architectures \citep{barcelo2022wlgorelational,huang2023theory}. In the next subsections, we provide similar characterizations for $\hrmpnns$.

\subsection{A Weisfeiler-Leman test for HR-MPNNs}
\label{sec:power-hrmpnn}

We formally characterize the ability of $\hrmpnns$ to distinguish nodes in relational hypergraphs via a variant of the 1-dimensional Weisfeiler-Leman test, namely the \emph{hypergraph relational 1-WL test}, denoted by $\hrwl_1$. The test $\hrwl_1$ is a natural generalization of $\rwl_1$ \citep{barcelo2022wlgorelational} to relational hypergraphs. 
Given a relational hypergraph $G = (V, E, R, c)$, for $\ell \geq 0$, $\hrwl_1$ updates the node colorings as:
\begin{align*}
    \hrwl_1^{(0)}(v) &= c(v), \\
    \hrwl_1^{(\ell+1)}(v) &= \tau \Big( 
    \hrwl_1^{(\ell)}(v), 
    \!\llbrace\big(
            \{ 
                (\hrwl_1^{(\ell)}(w),j) \!\mid (w, j) \in \gN_{i}(e)
            \}, \rho(e)
        \big) \!\mid\! (e,i) \!\in\! E(v)
    \rrbrace
    \Big).
\end{align*}
The function $\tau$ is an injective mapping that maps the above pair to a unique color that has not been used in previous iterations: $\hrwl_1^{(\ell)}$ defines a valid node invariant on relational hypergraphs for all $\ell \geq 0$. 

As it turns out, $\hrwl_1$ has the same expressive power as $\hrmpnns$ in terms of distinguishing nodes over relational hypergraphs:
\begin{theorem}
    \label{thm: HRMPNN}
    Let $G = (V, E, R,c)$ be a relational hypergraph, then the following statements hold:
    \begin{enumerate}[leftmargin=.7cm]
        \item For all initial feature maps $\vx$ with $c \equiv \vx$, all $\hrmpnns$ with $L$ layers, and for all $0 \leq \ell \leq L$, it holds that $\hrwl_1^{(\ell)} \preceq \vh^{(\ell)}$.
        \item For all $L \geq 0$, there is an initial feature map $\vx$ with $c \equiv \vx$ and an $\hrmpnn$ with $L$ layers, such that for all $0 \leq \ell \leq L$, we have $\hrwl_1^{(\ell)} \equiv \vh^{(\ell)}$.
    \end{enumerate}
\end{theorem}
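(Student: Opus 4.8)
The plan is to prove the two statements as a matched upper and lower bound, both by induction on the layer index $\ell$, mirroring the standard GNN-equals-WL argument but tracking the extra bookkeeping (relation types, arity positions, and the nested multiset/set structure) that the hypergraph setting introduces.

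For the first statement I would argue by induction that $\hrwl_1^{(\ell)} \preceq \vh^{(\ell)}$. The base case is immediate from $c \equiv \vx$: equal initial colors force equal initial features. For the inductive step, suppose $\hrwl_1^{(\ell+1)}(u) = \hrwl_1^{(\ell+1)}(v)$. Since $\tau$ is injective, this equality is equivalent to the equality of its arguments, namely $\hrwl_1^{(\ell)}(u) = \hrwl_1^{(\ell)}(v)$ together with the equality of the two outer multisets over $E(u)$ and $E(v)$, whose elements are pairs consisting of an inner set $\{(\hrwl_1^{(\ell)}(w), j) \mid (w,j) \in \gN_i(e)\}$ and a relation type $\rho(e)$. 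The inductive hypothesis lets me replace each WL color by the corresponding representation, so the matching bijection between the two outer multisets also aligns the inner sets of (representation, position) pairs and preserves relation types. Feeding these aligned arguments through the deterministic $\mes_{\rho(e)}$, $\aggregate$, and $\update$ functions yields $\vh_u^{(\ell+1)} = \vh_v^{(\ell+1)}$, as required.

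For the second statement, I would first observe that, by the first statement, it suffices to construct a single $\hrmpnn$ (with layer-specific functions) and a feature map $\vx$ with $c \equiv \vx$ for which $\vh^{(\ell)} \preceq \hrwl_1^{(\ell)}$ at every level, since combined with $\hrwl_1^{(\ell)} \preceq \vh^{(\ell)}$ this gives the desired equivalence. Working over the fixed finite hypergraph $G$, the colors and multisets arising at each level range over finite sets of bounded size, so I can invoke the standard injective-encoding results for bounded multisets over a countable domain (together with injective encodings of the finitely many relation types and the bounded arity positions) to realize, at each layer, differentiable $\mes_{\rho(e)}$, $\aggregate$, and $\update$ functions whose composition is injective on the relevant arguments. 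Choosing $\vx$ to injectively encode $c$ handles the base case, and an induction in which layer $\ell+1$ is built on top of representations already equivalent to $\hrwl_1^{(\ell)}$ propagates $\vh^{(\ell)} \equiv \hrwl_1^{(\ell)}$ upward; since the same HRMPNN has independent functions at each layer, matching all intermediate levels simultaneously poses no conflict.

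The main obstacle is getting injectivity to propagate cleanly through the two-level nesting: I need the message function to injectively encode each inner set of (representation, position) pairs while also carrying the relation type, the aggregation to injectively encode the resulting outer multiset jointly with the center representation $\vh_v^{(\ell)}$, and the update to remain injective in both arguments. Verifying that these finite-domain injections can be simultaneously realized by differentiable maps, and that their composite exactly reproduces the argument tuple of $\tau$, is the delicate bookkeeping step on which both directions ultimately hinge.
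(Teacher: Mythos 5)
Your proposal is correct and follows essentially the same route as the paper: item (1) by induction on $\ell$ using injectivity of $\tau$ to peel apart the arguments and the inductive hypothesis to transfer equalities to the features, and item (2) by constructing an \hrmpnn{} whose layer-wise composite injectively reproduces the argument tuple of $\tau$ over the finite domain determined by $G$. The only difference is one of explicitness: where you invoke standard finite-domain injective-encoding results, the paper carries out the construction concretely (sign activations, a fixed feature matrix whose columns serve as colors, positional encodings chosen so that the element-wise products admit a prime-factorization argument, and block matrices to separate relation types), but your plan correctly identifies all the obligations that construction must discharge.
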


Intuitively, item (1) states that $\hrwl_1$ upper bounds the power of any
HR-MPNN: if the test cannot distinguish two nodes, then HR-MPNNs cannot either. On the other hand, item
(2) states that HR-MPNNs can be as expressive as $\hrwl_1$: for any $L$, there is an HR-MPNN that simulates $L$ iterations of the test. In our proof, we explicitly construct this HR-MPNN using a simple architecture: the proof requires a very delicate construction to ensure the HR-MPNN synthetizes the information around a node $v$ (given by its neighborhood $E(v)$), in the same way $\hrwl_1$ does (see \Cref{app:power-hrmpnn}).

\subsection{Logical expressiveness of HR-MPNNs}
\label{sec:logic-hrmpnn}

The previous WL characterization of $\hrmpnns$ is \emph{non-uniform} in the sense that it holds for a given relational hypergraph $G$. We now turn our attention to a \emph{uniform} analysis of the power of HR-MPNNs and study the problem of which \emph{(node) properties} can be expressed as $\hrmpnns$,
which is well-suited for the \emph{inductive} setup.
Following \citet{BarceloKM0RS20}, we investigate \emph{logical} classifiers, i.e., those that can be defined in the formalism of first-order logic (FO). Briefly, a first-order formula $\phi(x)$ with one free variable $x$ defines a logical classifier that assigns value $\true$ to node $u$ in relational hypergraph $G$ whenever $G \models \phi(u)$.
 A logical classifier $\phi(x)$ is {\em captured} by a $\hrmpnn$ $\cal A$ if for every relational hypergraph $G$ the nodes $u$ that are classified as $\true$ by $\phi$ and $\cal A$ are the same.

\textbf{Graded modal logic on hypergraphs.} \citet{BarceloKM0RS20} showed that a logical classifier is captured by an MPNN over single-relational undirected graphs if and only if it can be expressed in \emph{graded modal logic} \citep{DBLP:journals/sLogica/Rijke00, extended-modal}. This result is extended to knowledge graphs by \citet{huang2023theory}. We consider a variant of graded modal logic for hypergraphs. Fix a set of relation types $R$ and a set of node colors $\mathcal{C}$. 
The \emph{hypergraph graded modal logic} (HGML) is the fragment of FO containing the following unary formulas. Firstly, $a(x)$ for $a\in\mathcal{C}$ is a formula.  Secondly, if $\varphi(x)$ and $\varphi'(x)$ are HGML formulas, then $\neg\varphi(x)$ and $\varphi(x)\land \varphi'(x)$ also are. Thirdly, for $r\in R$, $1\leq i\leq \ar(r)$ and  $N\geq 1$:
\begin{align*}
\exists^{\geq N} \tilde{\vy}\, \Big(r(y_1,\dots,y_{i-1},x,y_{i+1},\dots,y_{\ar(r)})  \land \Psi(\tilde{\vy})\Big)
\end{align*}
is a HGML formula, where $\tilde{\vy}=(y_1,\dots,y_{i-1},y_{i+1},\dots,y_{\ar(r)})$ and $\Psi(\tilde{\vy})$ is a Boolean combination of HGML formulas having free variables from $\tilde{\vy}$.  Intuitively, the formula expresses that $x$ participates in at least $N$ edges $e$ at position $i$, where the remaining nodes in $e$ satisfy condition $\Psi$. 

\begin{example}
Consider the set of relations from \Cref{fig:relational-hypergraph} and the property: ``$x$ is a person who obtained a degree $y$ of a subject $z$ at a university $m$ that has been awarded less than two prices $p$ of some subject $w$.'' This can be expressed as the following formula:
$$
    \phi(x) = 
        \text{Person}(x) \, \land \,  \exists y, z, m   \Big( \, 
          \text{StudyDegree}(x, y, z, m) \, \land \,  
          \neg \exists^{\geq 2} p,w \, ( 
            \text{Awarded}(w, p, m)) \, \Big)
$$
It is easy to verify that $\phi(x)$ is indeed a HGML formula. 
\hfill $\diamond$
\end{example}

For any property expressed in HGML, such as $\phi(x)$, does there exist an $\hrmpnn$ that captures this property on  \emph{all} relational hypergraphs with a shared relational vocabulary $R$ and node colors ${\cal C}$?   Indeed, we show that HR-MPNNs are as powerful as HGML: 
\begin{theorem}
\label{thm:hrmpnn-logic}
Each hypergraph graded modal logic classifier is captured by a $\hrmpnn$. 
\end{theorem}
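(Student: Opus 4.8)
The plan is to proceed by structural induction on the HGML formula $\phi(x)$, constructing an $\hrmpnn$ in which a dedicated coordinate of each node embedding tracks the Boolean truth value of a subformula of $\phi$ at that node. Concretely, I would enumerate the subformulas $\psi_1,\dots,\psi_m$ of $\phi$ and work with embeddings in $\{0,1\}^m \subseteq \R^m$, maintaining the invariant that after an appropriate number of layers the coordinate assigned to $\psi_j$ equals $1$ at node $v$ precisely when $G \models \psi_j(v)$. The number of layers $L$ is set to the modal (nesting) depth of $\phi$, and the construction is organized so that layer $\ell$ computes all subformulas of modal depth at most $\ell$ from those of strictly smaller depth already present in the incoming embedding, with depth-$0$ Boolean combinations of colors folded into the initial encoding and the first update.

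For the base case, atomic color formulas $a(x)$ are read directly from the initial features: since we are free to choose any $\vx$ with $c \equiv \vx$, we take $\vx$ to encode $c(v)$ so that each indicator $\mathbf{1}[c(v)=a]$ is recoverable by an MLP. Boolean connectives $\neg\varphi$ and $\varphi \land \varphi'$ do not increase modal depth and are handled inside $\update$, which has access to $\vh_v^{(\ell)}$ and can therefore copy forward previously computed coordinates and compute negations and conjunctions of them by a small MLP (e.g.\ $1-t$ and $\min(t,t')$, both piecewise linear).

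The crux is the counting modality $\exists^{\geq N}\tilde{\vy}\,(r(y_1,\dots,x,\dots,y_{\ar(r)}) \land \Psi(\tilde{\vy}))$. Here I would exploit the relation-specific message function $\mes_r$ together with the fact that the positional neighborhood $\gN_i(e)$ records, for each co-occurring node $w$, its distinct position $j$. Since all positions in $\gN_i(e)$ are distinct and range over $\{1,\dots,\ar(r)\}\setminus\{i\}$, the message function can recover the position $i$ of $x$ as the unique missing index, re-associate each neighbor representation with its position, and evaluate $\Psi$---a Boolean combination of already-tracked subformulas $\chi(y_j)$---on the corresponding coordinates. I would design $\mes_r$ so that, in the coordinate reserved for this modality, it outputs $1$ exactly when the missing position equals $i$ and $\Psi$ holds, and $0$ otherwise (and $\mes_{r'}$ outputs $0$ there for every $r'\neq r$). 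Summing these messages with $\aggregate$ over all $(e,i)\in E(v)$ then yields the integer count of qualifying edges, and $\update$ applies the piecewise-linear threshold $\min(1,\max(0,s-N+1))$, which equals $1$ iff the integer count $s$ is at least $N$. Reading off the coordinate of $\phi$ by a final classifier completes the construction.

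The main obstacle is precisely this modality step: unlike the single-relational undirected setting of \citet{BarceloKM0RS20}, the message must be simultaneously relation-selective, position-aware, and able to evaluate an arbitrary Boolean condition $\Psi$ over a set of position-tagged neighbors. The key observations that make it go through are that positions within an edge are distinct (so the neighborhood set losslessly encodes both the relevant tuple and the position $i$ of $x$), that $\mes_r$ is relation-specific (so distinct relations and modalities route to distinct coordinates without interference), and that the counts are integer-valued (so a single piecewise-linear map realizes the exact threshold at $N$). Care is also needed to ensure all lower-depth coordinates are preserved across layers so that the inductive hypothesis remains available, which is arranged by having $\update$ copy them forward.
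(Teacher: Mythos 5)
Your proposal is correct and shares the paper's overall skeleton: enumerate the subformulas, reserve one coordinate of the feature vector per subformula, maintain the invariant that each coordinate equals the truth value of its subformula at the node, and realize the counting modality by summing per-edge indicators and applying the integer threshold $\min(1,\max(0,s-N+1))$. Where you genuinely diverge is in the treatment of the guard $\Psi(\tilde{\vy})$. The paper first proves a normal-form lemma (\Cref{prop:translate-hgml}): every HGML formula is equivalent to one in HGML$_r$, where each modality's $\Psi$ is a plain conjunction $\varphi_1(y_1)\land\cdots\land\varphi_{\ar(r)}(y_{\ar(r)})$ of per-position conditions. That reduction is not free---it goes through a DNF rewriting plus an inclusion--exclusion-style case split over which disjuncts hold and exactly how the failing ones fail, and then redistributes the threshold $N$ over the resulting mutually exclusive cases. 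The lemma is needed because the paper then commits to a concrete architecture in which the per-edge computation is the element-wise product $\odot_{j\neq i}(\vp_j-\vh_{e(j)}^{(\ell)})$ with carefully chosen positional encodings and truncated ReLU; such a product can only detect that all per-position conditions hold simultaneously, i.e., a conjunction. You instead keep $\mes_r$ abstract and let it evaluate an arbitrary Boolean combination $\Psi$ directly on the position-tagged neighbor set. This is legitimate under the paper's definition of $\hrmpnns$: the message function is an arbitrary relation-specific function of the set $\{(\vh_w^{(\ell)},j)\}$, and on $\{0,1\}$-valued coordinates any fixed Boolean function is exactly realizable by a small piecewise-linear map, which is the same standard of ``differentiable'' the paper itself adopts when using $\sign$ and truncated ReLU. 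Your observation that $\gN_i(e)$ losslessly encodes both the assignment $j\mapsto e(j)$ and the center position $i$ (as the unique missing index) is precisely what makes the direct evaluation sound, and the count over $(e,i)\in E(v)$ with $i$ fixed to the modality's position agrees with the tuple count in the semantics. In short, your route buys a shorter argument at the cost of a less explicit model, while the paper's buys an explicit, simple architecture at the cost of the normal-form detour; both are valid.
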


For the proof, we first show a simple normal form for HGML formulas, and then carefully translate formulas of this form into $\hrmpnn$s. See Appendix~\ref{app:logic-hrmpnn} for further discussion regarding HGML.

\section{Hypergraph conditional MPNNs}
\label{sec:HCMPNNs}

\begin{figure*}
    \centering
    \begin{tikzpicture}[
        every node/.style={circle, draw, fill=white, inner sep=2pt, line width=0.5pt}, 
        line width=1.2pt, 
        >=stealth,
        shorten >=3pt, 
        shorten <=3pt,
        transform shape
    ]
    \definecolor{cartoonred}{RGB}{190,80,80}
    \definecolor{cartoonblue}{RGB}{80,80,190}
    \definecolor{yellow}{RGB}{255,255,153}
    \definecolor{darkgreen}{RGB}{1, 50, 32}
    \begin{scope}[transform shape, scale=0.88]
        \node (a)[fill=black, label={[font=\small]-50:$a$}] at (0,0) {};
        \node (b)[fill=black, label={[font=\small]-130:$b$}] at (2,0) {};
        \node (c)[fill=black,label={[font=\small]50:$c$}] at (0,-2) {};
        \node (d)[fill=black,label={[font=\small]130:$d$}] at (2,-2) {};
        \node (e)[fill=black,label={[font=\small]-50:$e$}] at (4,-1) {};
    
        \draw[-, dashed,color=cartoonblue!70] (a) to (b);
        \draw[-, dashed,color=cartoonblue!70] (b) to (d);
        \draw[->, dashed,color=cartoonblue!70] (d) to (c);
        \node [draw = none, fill=none, text=cartoonblue] at (01.1,-1) {\Large$r_1$};

        \draw[-, dashed,color=cartoonred!70] (d) to (e);
        \draw[->, dashed,color=cartoonred!70] (e) to (b);
        \node [draw = none, fill=none, text=cartoonred] at (2.8, -1) {\Large$r_2$};
        
        \draw[draw = none, fill = cartoonblue, fill opacity=0.2, rounded corners=10pt] (-0.4,0.4) rectangle (2.4,-2.4);
        \draw[draw = none, fill = cartoonred, fill opacity=0.2, thick, rounded corners=10pt] 
            ($(b)+(-0.3,0.5)$) -- 
            ($(d)+(-0.3,-0.5)$) -- 
            ($(e)+(0.5,0)$) -- cycle;
    \node[draw=none, fill=none,inner sep=10pt,label={[yshift=0.1cm]center: Hypergraph $G$}] (InputGraph) at (1.5,-3.0) {};

    \end{scope}
        \begin{scope}[xshift = 4.5cm, transform shape, scale=0.88]
        \node (a)[fill = green, label={[font=\small]-50:$a$}] at (0,0) {};
        \node (b)[fill = brown, label={[font=\small]-130:$b$}] at (2,0) {};
        \node (c)[fill = yellow,label={[font=\small]50:$c$}] at (0,-2) {};
        \node (d)[fill = yellow,label={[font=\small]130:$d$}] at (2,-2) {};
        \node (e)[fill = yellow,label={[font=\small]-50:$e$}] at (4,-1) {};
    
        \draw[-, dashed,color=cartoonblue!70] (a) to (b);
        \draw[-, dashed,color=cartoonblue!70] (b) to (d);
        \draw[->, dashed,color=cartoonblue!70] (d) to (c);
        \node [draw = none, fill=none, text=cartoonblue] at (01.1,-1) {\Large$r_1$};

        \draw[-, dashed,color=cartoonred!70] (d) to (e);
        \draw[->, dashed,color=cartoonred!70] (e) to (b);
        \node [draw = none, fill=none, text=cartoonred] at (2.8, -1) {\Large$r_2$};
        
        \node (alabel)[draw = none, inner sep=0.1pt, label={[font=\small]center:$\textcolor{blue}{\vz_q} + \vp_3$}] at (0,1) {};
        \node (blabel)[draw = none, inner sep=0.1pt, label={[font=\small]0:$\textcolor{blue}{\vz_q} + \vp_1$}] at (3,1) {};
        \node (dlabel)[draw = none, inner sep=2pt, label={[font=\small, yshift=-1em]$\mathbf{0}$}] at (3.3,-2.4) {};
        \draw[->, dotted, bend left=15, shorten >=2pt, shorten <=5pt] (alabel) to (a);
        \draw[->, dotted, bend right=15, shorten >=2pt, shorten <=1pt] (blabel) to (b);
        \draw[->, dotted, bend left=5, shorten >=2pt, shorten <=4pt] (dlabel) to (d);
        \draw[->, dotted, bend left=15, shorten >=2pt, shorten <=4pt] (dlabel) to (c);
        \draw[->, dotted, bend right=10, shorten >=2pt, shorten <=4pt] (dlabel) to (e);

        \draw[draw = none, fill = cartoonblue, fill opacity=0.2, rounded corners=10pt] (-0.4,0.4) rectangle (2.4,-2.4);
        \draw[draw = none, fill = cartoonred, fill opacity=0.2, thick, rounded corners=10pt] 
            ($(b)+(-0.3,0.5)$) -- 
            ($(d)+(-0.3,-0.5)$) -- 
            ($(e)+(0.5,0)$) -- cycle;

    \node[draw=none, fill=none,inner sep=10pt,label={[yshift=0.1cm]center:$\init$ of $\textcolor{blue}{q}(b,?,a)$}] (initialization) at (1.5,-3.0) {};

    \end{scope}
    
    \begin{scope}[xshift = 9cm, transform shape, scale=0.88]
        \node (a)[fill = lime, label={[font=\small]-50:$a$}] at (0,0) {};
        \node (b)[fill = purple, label={[font=\small]-130:$b$}] at (2,0) {};
        \node (c)[fill = red,label={[font=\small]50:$c$}] at (0,-2) {};
        \node (d)[fill = pink, label={[font=\small]130:$d$}] at (2,-2) {};
        \node (e)[fill = blue, label={[font=\small]-50:$e$}] at (4,-1) {};
    
        \draw[-, dashed,color=cartoonblue!70] (a) to (b);
        \draw[-, dashed,color=cartoonblue!70] (b) to (d);
        \draw[->, dashed,color=cartoonblue!70] (d) to (c);
        \node [draw = none, fill=none, text=cartoonblue] at (01.1,-1) {\Large$r_1$};

        \draw[-, dashed,color=cartoonred!70] (d) to (e);
        \draw[->, dashed,color=cartoonred!70] (e) to (b); 
        \node [draw = none, fill=none, text=cartoonred] at (2.8, -1) {\Large$r_2$};

        \draw[draw = none, fill = cartoonblue, fill opacity=0.2, rounded corners=10pt] (-0.4,0.4) rectangle (2.4,-2.4);
        \draw[draw = none, fill = cartoonred, fill opacity=0.2, thick, rounded corners=10pt] 
            ($(b)+(-0.3,0.5)$) -- 
            ($(d)+(-0.3,-0.5)$) -- 
            ($(e)+(0.5,0)$) -- cycle;

    \node[draw=none, fill=none,inner sep=10pt,label={[yshift=0.1cm]center:Message Passing}] (messagepassing) at (1.5,-3.0) {};
    
    \node[draw=none, fill=none,inner sep=10pt,label={[yshift=0.1cm]center:$\dec( \quad ) \mapsto \R$}] (decoder) at (4,0.5) {};
    \node[draw, fill=blue] (decodernode) at (3.9,0.6) {};
    \draw[->, dotted, bend right=30, shorten >=2pt, shorten <=4pt] (e) to (decodernode);

    \end{scope} 
    
    \end{tikzpicture}
   \vspace{-2em}
    \caption{Given a relational hypergraph $G$ with $V = \{a,b,c,d,e\}$, $E = 
    \{r_1(a,b,d,c), \allowdisplaybreaks r_2(d,e,b)\}$, $R = \{r_1,r_2\}$ and a query $q(b, ?, a)$, HCNet conditions on the nodes $a$ and $b$ and then applies message passing to compute the score for $q(b,e,a)$. Here, $\vz_q$ is the learnable relation vector for query relation, and $\vp_i$ is the positional encoding of the $i$-th arity position.}
    \label{fig:hcnet-visualization}
\end{figure*}
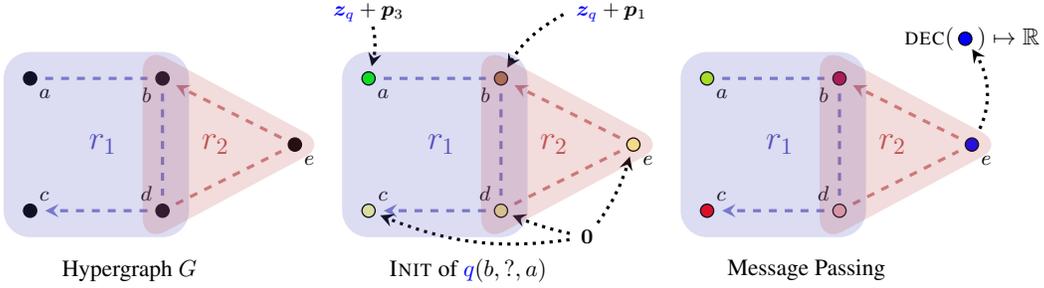

In this section, we propose \emph{hypergraph conditional message passing networks} ($\hcmpnns$), a generalization of $\cmpnn$s \citep{huang2023theory} to relational hypergraphs. 

Let $G = (V, E, R, \vx)$ be a relational hypergraph, where $\vx$ is a feature map. Given a query $\vq= (q,\tilde{\vu}, t)$, for $\ell \geq 0$, an $\hcmpnn$ computes a sequence of feature maps $\vh_{v|\vq}^{(\ell)}$ as follows:
\begin{align*}
\vh_{v|\vq}^{(0)} &= \init(v,\vq), \\
\vh_{v|\vq}^{(\ell+1)}  &=\update\Big( \vh_{v|\vq}^{(\ell)}, \aggregate\big(\vh_{v|\vq}^{(\ell)},  \llbrace \mes_{\rho(e)}\big( \{(\vh_{w|\vq}^{(\ell)}, j ) \mid (w, j) \in \gN_i(e) \}, \vq \big) \mid (e,i) \in E(v) \rrbrace \big)  \Big), 
\end{align*}
where $\init$, $\update$, $\aggregate$, and $\mes_{\rho(e)}$ are differentiable \emph{initialization}, \emph{update}, \emph{aggregation}, and relation-specific \emph{message} functions, respectively. An $\hcmpnn$ has a fixed number of layers $L \geq 0$, and the final
conditional node representations are given by $\vh_{v|\vq}^{(L)} $. 
We denote by $\vh^{(\ell)}_\vq : V \rightarrow \R^{d(\ell)}$ the function $\vh^{(\ell)}_\vq (v) := \vh_{v|\vq}^{(\ell)}$.

To ensure that $\hcmpnns$ compute $k$-ary representations (see \Cref{app:$k$-ary invariants}), we impose a generalized version of \emph{target node distinguishability} proposed by \citet{huang2023theory}.
An initialization function\footnote{$\hcmpnns$ can also take into account node features with slight modification. See detail discussions in \Cref{app: node-features}.} satisfies \emph{generalized target node distinguishability} if
for all $\vq  = (q, \tilde{\vu}, t)$: 
\[
\init(u,\vq) \neq \init(v,\vq), \forall u \in \tilde{\vu}, v \notin \tilde{\vu} ~~~\text{  and  }~~~ \init(u_i,\vq) \neq \init(u_j,\vq), \forall u_i, u_j \in \tilde{\vu}, u_i \neq u_j
\] 

Differently from message passing on \emph{simple} hypergraphs, we need to consider the relation type of each edge (multi-relational) and the relative position of each node (directed) in the edges on relational hypergraphs. Hence, the message function $\mes_{\rho(e)}$ needs to be relation-specific while also keeping track of the positions $j$ of nodes $w$ in their respective neighborhoods $\gN_i(e)$. We can then obtain the scores of  query $\vq$ applying a unary decoder $\dec$ on $\vh_{v|\vq}^{(L)} $. 

\subsection{Hypergraph conditional networks}

We define a \emph{basic} $\hcmpnn$, which we call \emph{hypergraph conditional networks} ($\hcnets$). For a query $\vq = (q, \tilde{\vu}, t)$, an $\hcnet$ computes the following representations for all $\ell \geq 0$:
\begin{align*}
\vh_{v|\vq}^{(0)} &= \sum_{i \neq t} \mathbbm{1}_{v = u_i} *(\vp_{i} + \vz_q ), \\
\vh_{v|\vq}^{(\ell+1)}  &=\sigma \Big( \mW^{(\ell)}\Big[\vh_{v|\vq}^{(\ell)} \Big\|  \sum_{(e,i) \in E(v) }g_{\rho(e),q}^{(\ell)}\Big(\odot_{j \neq i}(\alpha^{(\ell)}\vh_{e(j)|\vq}^{(\ell)} \!\!+ (1-\alpha^{(\ell)})\vp_{j})  \Big) \Big]
+ \vb^{(\ell)}
\Big), 
\end{align*}
where $g_{\rho(e),q}^{(\ell)}$ is learnable message function, $\sigma$ is an activation function,  $\mW^{(\ell)}$ is a learnable weight matrix, $\vb^{(\ell)}$ as learnable bias term per layer, $\vz_q$ is the learnable query vector for $q \in R$, and $\mathbbm{1}_{C}$ is the indicator function that returns $1$ if condition $C$ is true, and $0$ otherwise. As usual, $*$ is scalar multiplication, and $\odot$ is element-wise multiplication of vectors. We write $\alpha$ to refer to a learnable scalar and $\vp_i$ to refer to the positional encoding at position $i$, which is sinusoidal positional encoding~\citep{vaswani2023attention}.

In particular, we set $g_{\rho(e),q}^{(\ell)}$ to be a \emph{query-dependent} diagonal linear map $\operatorname{Diag}(\mW_r\vz_q)$ where $\mW_r$ is a learnable matrix for each relation $r$. Alternatively, we can adopt a \emph{query-independent} map by replacing $\mW_r\vz_q$ with learnable vector $\vw_r$ for each relation $r$.

Intuitively, the model initialization ensures that all \emph{source nodes} (i.e., nodes that appear in $\tilde{\vu}$) are initialized to their respective positions in the query edge, and all other nodes are initialized as the zero vector $\mathbf{0}$ satisfying generalized target node distinguishability, shown in \Cref{fig:hcnet-visualization}.

\subsection{A Weisfeiler-Leman test for HC-MPNNs}
\label{sec:power-hcmpnn}

To analyze the expressive power of $\hcmpnns$ for distinguishing nodes, we can still use the $\hrwl_1$ test provided we restrict ourselves to initial colorings $c$ that respect the given query $\vq$. Formally, given a query $\vq = (q, \tilde{\vu}, t)$ on a relational hypergraph $G = (V, E, R, c)$, we say that the coloring $c$ satisfies \emph{generalized target node distinguishability with respect to $\vq$} if:
\begin{align*}
    c(u) \neq c(v) \quad \forall u \in \tilde{\vu}, v \notin \tilde{\vu} \qquad  \text{and} \qquad 
    c(u_i) \neq c(u_j) \quad \forall u_i, u_j \in \tilde{\vu}, u_i \neq u_j.
\end{align*}
Note that initial colorings satisfying this property are equivalent to the initializations of $\hcmpnns$. As a direct consequence of \Cref{thm: HRMPNN} we obtain: 
\begin{theorem}
    \label{thm: HCMPNN}
    Let $G = (V, E, R,c)$ be a relational hypergraph and $\vq = (q, \tilde{\vu}, t)$ be a query such that $c$ satisfies generalized target node distinguishability with respect to $\vq$. Then the following statements hold:
    \begin{enumerate}[leftmargin=.6cm]
        \item For all $\hcmpnns$ with $L$ layers and initialization 
        $\init$ with
        $\init \equiv c$, $0 \leq \ell \leq L$, we have 
    $\hrwl_1^{(\ell)} \preceq \vh_\vq^{(\ell)}$.
        \item For all $L \geq 0$, there is an $\hcmpnn$ with $L$ layers 
        s.t.
        $0 \leq \ell \leq L$, 
        $\hrwl_1^{(\ell)} \equiv \vh_\vq^{(\ell)}$ holds.
    \end{enumerate}
\end{theorem}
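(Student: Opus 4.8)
The plan is to exploit the observation that, once the query $\vq=(q,\tilde{\vu},t)$ is fixed, an $\hcmpnn$ collapses into an ordinary $\hrmpnn$, which lets me transfer both directions of \Cref{thm: HRMPNN} essentially for free. Concretely, for a fixed $\vq$ I would define a feature map $\vx_v := \init(v,\vq)$ and, for each relation $r$, a query-free message function obtained by freezing the second argument of $\mes_{r}(\cdot,\vq)$, leaving $\update$ and $\aggregate$ untouched. With these choices the $\hcmpnn$ recursion for $\vh^{(\ell)}_{v|\vq}$ becomes syntactically identical to the $\hrmpnn$ recursion for $\vh^{(\ell)}_v$: the base cases agree since $\vh^{(0)}_v=\vx_v=\init(v,\vq)=\vh^{(0)}_{v|\vq}$, and a straightforward induction propagates the equality $\vh^{(\ell)}_v=\vh^{(\ell)}_{v|\vq}$ through every layer. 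This reduction is the crux; the remaining work is bookkeeping to check that the hypotheses of \Cref{thm: HRMPNN} are satisfied on each side.

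For item (1), I would start from an arbitrary $\hcmpnn$ with $\init\equiv c$ and pass to its induced $\hrmpnn$. Since $\vx=\init(\cdot,\vq)\equiv c$, the hypothesis $c\equiv\vx$ of \Cref{thm: HRMPNN}(1) holds, yielding $\hrwl_1^{(\ell)}\preceq\vh^{(\ell)}$ for all $0\le\ell\le L$. Because $\vh^{(\ell)}_v=\vh^{(\ell)}_{v|\vq}$ under the reduction, this is exactly $\hrwl_1^{(\ell)}\preceq\vh^{(\ell)}_\vq$, as required. Here I only need to note that $\hrwl_1$ is initialised with the same coloring $c$ throughout, so no discrepancy arises between the two frameworks.

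For item (2), I would run the reduction in the opposite direction. \Cref{thm: HRMPNN}(2) supplies, for each $L$, a feature map $\vx$ with $c\equiv\vx$ and an $\hrmpnn$ such that $\hrwl_1^{(\ell)}\equiv\vh^{(\ell)}$ for all $0\le\ell\le L$. I lift this to an $\hcmpnn$ by setting $\init(v,\vq):=\vx_v$ and letting each message function ignore its query argument and coincide with the corresponding $\hrmpnn$ message function, with $\update$ and $\aggregate$ inherited. Applying the reduction gives $\vh^{(\ell)}_{v|\vq}=\vh^{(\ell)}_v$ and hence $\hrwl_1^{(\ell)}\equiv\vh^{(\ell)}_\vq$. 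The one point requiring care is that the lifted $\init$ must be a legitimate $\hcmpnn$ initialisation, i.e.\ it must satisfy generalized target node distinguishability with respect to $\vq$. This is where the standing hypothesis on $c$ enters: since $\vx\equiv c$ as unary invariants, $\vx$ induces the same partition of $V$ as $c$, so the inequalities $c(u)\neq c(v)$ for $u\in\tilde{\vu},\,v\notin\tilde{\vu}$ and $c(u_i)\neq c(u_j)$ for distinct $u_i,u_j\in\tilde{\vu}$ transfer verbatim to $\init(\cdot,\vq)=\vx$. For queries other than $\vq$ the initialisation can be completed by any fixed distinguishability-respecting scheme, since only the behaviour at the fixed $\vq$ is analysed.

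The main obstacle I anticipate is not the reduction, which is essentially definitional, but aligning the two framework conventions cleanly: verifying that fixing $\vq$ genuinely removes every query-dependence from the $\hcmpnn$ update, and that the lifted initialisation in item (2) simultaneously reproduces $\vx$ at $\vq$ and satisfies generalized target node distinguishability. Once these checkpoints are confirmed, both claims follow directly from \Cref{thm: HRMPNN}, which is precisely why this result is a corollary of the $\hrmpnn$ characterisation.
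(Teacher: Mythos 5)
Your proposal is correct and follows essentially the same route as the paper: both directions are handled by the same freeze-the-query translation between $\hcmpnns$ and $\hrmpnns$ (taking $\vx = \init(\cdot,\vq)$ one way and defining $\init$ from $\vx$ the other way), followed by an appeal to \Cref{thm: HRMPNN}. The extra checkpoints you flag — that $\vx\equiv c$ transfers the hypothesis of \Cref{thm: HRMPNN}(1), and that generalized target node distinguishability of $c$ makes the lifted initialization in item (2) legitimate — are exactly the details the paper's terser argument relies on.
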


\Cref{thm: HCMPNN} tells us that  $\hcmpnns$ are stronger than $\hrmpnns$ due to the initialization: $\hcmpnns$ can initialize nodes differently based on the query $\vq$, whereas $\hrmpnns$ always assign the same initialization for all queries. In fact, the ternary edges from \Cref{fig:synthetic-maintext} cannot be distinguished by $\hrmpnns$ but they can be distinguished by $\hcmpnns$.

\subsection{Logical expressiveness of HC-MPNNs}
\label{sec:logic-hcmpnn}

We remark that \Cref{thm:hrmpnn-logic} can be translated to $\hcmpnns$ by slightly modifying the logic. We consider \emph{symbolic} queries $\vq=(q,\tilde{\vb},t)$, where now each $b\in \tilde{\vb}$ is a constant symbol. Our vocabulary contains relation types $r\in R$ and node colors ${\cal C}$, as before, and additionally the constants $b\in\tilde{\vb}$. 
We define \emph{hypergraph graded modal logic with constants} (HGML$_c$) as HGML but, as atomic cases, we additionally have formulas of the form $\varphi(x) = (x=b)$ for some constant $b$ (see \Cref{app:logic-hcmpnn} for details). This allows us to \emph{identify} variables with individual constants.

\begin{example}
Now that we have a richer vocabulary with constants, we can now represent more formulas ``conditioned'' on the constants appearing in the query. For instance, given a symbolic query with $\tilde{\vb} = (\mathsf{Physics},\mathsf{BA})$ , we can express a more complex formula $\psi(x)$ that represents ``$x$ is a person with a BA degree of Physics at some University $m$, where less than two prizes $p$ in total have been awarded in Physics.'' 
as follows: 
\begin{align*}
    \psi(x) = 
        \text{Person}(x)\, \land\,  & \exists y, z, m   \Big( 
          \text{StudyDegree}(x, y, z, m) \land (z = \mathsf{Physics}) \land (y = \mathsf{BA})\\
           & \qquad \qquad \qquad
          \land \neg \left( \exists^{\geq 2} p, w \, \left( 
            \text{Awarded}(w, p, m) \land (w=\mathsf{Physics})
          \right) \right)  
        \Big)
\end{align*}
Note that this formula $\psi(x)$ cannot be expressed as an HGML formula but it can be as an HGML$_c$ formula, due to the additional introduction of constants. \hfill $\diamond$
\end{example}

We prove the following result showing that $\hcmpnn$s can capture richer $k$-ary node properties: 
\begin{theorem}
\label{thm:HGMLc}
Each HGML$_c$ classifier can be captured by a $\hcmpnn$ over valid relational hypergraphs.
\end{theorem}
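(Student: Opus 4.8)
The plan is to reduce \Cref{thm:HGMLc} to the already-established \Cref{thm:hrmpnn-logic} by encoding the constants of a symbolic query into the initialization of an $\hcmpnn$, so that running the $\hcmpnn$ over a valid relational hypergraph becomes indistinguishable from running an $\hrmpnn$ over a hypergraph whose coloring has been augmented with one fresh color per constant. Concretely, fix a symbolic query $\vq = (q, \tilde{\vb}, t)$ with constants $b_1,\dots,b_{k-1}$ appearing in $\tilde{\vb}$.

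First I would set up a purely syntactic translation $(\cdot)^\star$ from $\text{HGML}_c$ to $\text{HGML}$. For each constant $b_i$ introduce a fresh node color $c_{b_i}\notin\mathcal{C}$, and define $(\cdot)^\star$ to be the identity on base-color atoms $a(x)$, to commute with $\neg$, $\land$, and the graded quantifiers $\exists^{\geq N}\tilde{\vy}(\dots)$, and to replace every atom $(x=b_i)$ by the color atom $c_{b_i}(x)$. The image $\psi^\star(x)$ is then a plain $\text{HGML}$ formula over the enlarged color set $\mathcal{C}^\star = \mathcal{C}\cup\{c_{b_1},\dots,c_{b_{k-1}}\}$, and the translation is compositional so it correctly handles occurrences of $(y_j=b_i)$ nested inside the condition $\Psi(\tilde{\vy})$. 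Next I would make the semantics precise: a valid relational hypergraph for $\vq$ carries a coloring satisfying generalized target node distinguishability, so each constant $b_i$ is realized by a unique node $u_i$. From such a $G$ I build $G^\star$ over vocabulary $(R,\mathcal{C}^\star)$ by keeping all edges and overwriting the color of each $u_i$ with $c_{b_i}$, leaving every other node's color untouched. Since each fresh color then occurs at exactly one node, a routine induction on formula structure yields $G\models\varphi(u)\iff G^\star\models\varphi^\star(u)$ for every $\text{HGML}_c$ formula $\varphi$ and node $u$; the only nontrivial base case is $(x=b_i)$, which holds at $u$ precisely when $u$ carries color $c_{b_i}$ in $G^\star$.

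I would then invoke \Cref{thm:hrmpnn-logic} to obtain an $\hrmpnn$ $\mathcal{A}^\star$ capturing the $\text{HGML}$ classifier $\psi^\star$ over \emph{all} hypergraphs with vocabulary $(R,\mathcal{C}^\star)$, hence in particular over the images $G^\star$. Finally I would transfer $\mathcal{A}^\star$ into an $\hcmpnn$ $\mathcal{A}$ by reusing the (query-independent) $\mes_{\rho(e)}$, $\aggregate$ and $\update$ functions of $\mathcal{A}^\star$ verbatim, and choosing $\init(v,\vq)$ to reproduce the augmented color of $v$ in $G^\star$: it reads the base color $c(v)$ and, using $\vq$, overrides it with the encoding of $c_{b_i}$ exactly when $v$ is the node $u_i$ realizing $b_i$ (on reserved feature coordinates, so that constant encodings never collide with base-color encodings). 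This $\init$ satisfies generalized target node distinguishability, and by construction the layer-$0$ features of $\mathcal{A}$ on $G$ coincide with those of $\mathcal{A}^\star$ on $G^\star$; since all later layers are identical, the two models produce identical final representations, so $\mathcal{A}$ captures $\psi$ on valid hypergraphs.

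The main obstacle is the bridge in the last step: one must verify carefully that the abstract $\hcmpnn$ initialization is genuinely expressive enough to realize the augmented coloring from the query alone, i.e.\ that marking the constant nodes with distinct encodings while respecting isomorphism-invariance produces features equivalent to the coloring of $G^\star$, and that restricting attention from all hypergraphs to the subclass of images $G^\star$ (where each fresh color appears exactly once, so that $\psi^\star\equiv\psi$) loses nothing, since a classifier captured over all $(R,\mathcal{C}^\star)$-hypergraphs is a fortiori captured over this subclass. The remaining steps — the syntactic translation and the inductive semantic equivalence — are routine.
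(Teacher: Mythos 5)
Your route is genuinely different from the paper's. The paper does not reduce \Cref{thm:HGMLc} to \Cref{thm:hrmpnn-logic} as a black box; it reruns the same inductive construction, treating the new base case $\varphi(x)=(x=b)$ directly as an extra feature coordinate whose value is supplied by the initialization function $\init$, with validity of the hypergraph guaranteeing that the resulting initial coloring satisfies generalized target node distinguishability. Your reduction --- compile the constants into fresh colors, recolor the hypergraph, invoke \Cref{thm:hrmpnn-logic} over the enlarged vocabulary, and then realize the augmented coloring through $\init$ --- is a cleaner modularization: it reuses the earlier theorem verbatim and would inherit any strengthening of it for free, at the cost of the extra bookkeeping of the auxiliary structure $G^\star$.

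There is, however, one concrete flaw in your construction as stated: \emph{overwriting} the color of the constant node $u_i$ with the fresh color $c_{b_i}$ destroys its original color $c(u_i)$, so the claimed equivalence $G\models\varphi(u)\iff G^\star\models\varphi^\star(u)$ already fails at the base case $\varphi(x)=a(x)$ when $u=u_i$ and $c(u_i)=a$, and consequently inside graded quantifiers whose witness tuples contain constant nodes, where the counts change. The base case $a(x)$ is therefore not routine. The fix is local: color $G^\star$ over the product alphabet $\mathcal{C}\times(\{\bot\}\cup\{b_1,\dots,b_{k-1}\})$, giving $u_i$ the color $(c(u_i),b_i)$ and every other node $v$ the color $(c(v),\bot)$, and translate $a(x)$ as the finite disjunction of all product colors with first component $a$ and $(x=b_i)$ as the disjunction of those with second component $b_i$; both are expressible since HGML is closed under Boolean combinations of unary formulas. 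With that change your induction, the appeal to \Cref{thm:hrmpnn-logic}, and the transfer to an $\hcmpnn$ via $\init$ all go through: $\init$ may legitimately depend on $v$ and on the interpretations of the constants (exactly as HCNet's indicator-based initialization does), and on valid hypergraphs the distinct second components guarantee generalized target node distinguishability.
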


\section{Experimental evaluation}
\label{sec:Experiments}
We evaluate $\hcnet$ on a broad range of experiments:
\begin{enumerate}[noitemsep,topsep=0pt,parsep=1pt,partopsep=0pt,label={\large\textbullet},leftmargin=*]
\item \textbf{Inductive experiments} (\Cref{sec:inductive-experiments}): We evaluate $\hcnet$ for inductive link prediction with relational hypergraphs and report very substantial improvements reflecting on our theoretical findings. 
\item \textbf{Transductive experiments} (\Cref{sec:transductive-experiments}): We evaluate $\hcnet$ for transductive link prediction with relational hypergraphs and report competitive results.
\item \textbf{Knowledge graph experiments}  (\Cref{sec:kg-inductive-experiments}): $\hcnet$ can handle knowledge graphs as a special case and our evaluation shows that it can match the performance of models such as NBFNets.
\item \textbf{Ablation on initialization and positional encoding} (\Cref{sec: ablation study}):  We conduct ablation studies on the choice of initialization and positional encoding in $\hcnets$. 
\item \textbf{Expressiveness evaluation} (\Cref{app:synthetic experiments}): We conduct a synthetic experiment on $\hypercycle$ dataset, building on the counter-example in \Cref{fig:synthetic-maintext} to showcase the expressivity differences between $\hrmpnns$ and $\hcmpnns$.
\end{enumerate}

\textbf{Experimental setups.}
 In all experiments, we consider a 2-layer MLP as the decoder and adopt layer normalization and dropout in all layers before applying ReLU activation and skip-connection. 
 During the training, we remove edges that are currently being treated as positive tuples to prevent overfitting for each batch. We choose the best checkpoint based on its evaluation of the validation set. 
 In terms of evaluation, we adopt \emph{filtered ranking protocol}. For each test edge $q(u_1, \ldots,u_k)$ where $k = \ar(q)$, and for each position $t \in \{1,...,k\}$, we replace the $t$-th entities by all other possible entities such that the query after replacement is not in the graph. We consider the query-independent message function for all datasets except WikiPeople. We report Mean Reciprocal Rank (MRR), Hits@$1$, and Hits@$3$ for inductive experiments and MRR, Hits@$1$, and Hits@$10$ for transductive experiments as evaluation metrics and provide averaged results of \emph{five} runs on different seeds. We reported standard deviations and execution time \& memory used along with all other experiment details in \Cref{app: experiment-details}. Furthermore, we provide a detailed discussion of computational complexity between $\hrmpnns$ and $\hcmpnns$ in \Cref{app:complexity}, and a discussion of the impact of relational hypergraphs' density on the performance in \Cref{app: density}. We ran all experiments on a single NVIDIA V100 GPU. The code for experiments is provided in \url{https://github.com/HxyScotthuang/HC-MPNN}.

\subsection{Inductive experiments}
\label{sec:inductive-experiments}
\textbf{Datasets.} \citet{yadati2020gmpnn} constructed three inductive datasets, WP-IND, JF-IND, and MFB-IND from existing transductive datasets on relational hypergraphs: WikiPeople \citep{guan2019nalp}, JF17K \citep{wen2016representation}, and M-FB15K \citep{fatemi2020knowledge}, with their statistics in \Cref{tab:inductive-statistics}.

\textbf{Baselines.} We compare with the baseline models HGNN \citep{feng2018hypergraph}, HyperGCN \citep{yadati2019hypergcn}, and three variants of G-MPNN \citep{yadati2020gmpnn} with different aggregation functions.
Since HGNN and HyperGCN are designed for simple hypergraphs, \citet{yadati2020gmpnn} tested them on transformed relational hypergraphs where the relations are ignored. In addition, \citet{yadati2020gmpnn} initialized nodes with given node features, whereas we ignore the node feature and initialize each node with the respective initialization defined in $\hcnets$.
We modify RD-MPNNs~\citep{zhou2023rdmpnn} by replacing learned entity embeddings to be all one vector $\mathbf{1}^d$ to enable inductive link prediction.
We adopt the \emph{batching trick}~\citep{zhu2022neural} on MFB-IND.
Hyper-parameters are reported in \Cref{tab: inductive-hyper-parameter-statistics}.

\begin{table*}[t]
    \centering
    \caption{Results of inductive link prediction experiments on WP-IND, JF-IND, and MFB-IND. }
    \label{tab:inductive}
    \begin{tabular}{l>{\centering\arraybackslash}p{2.3em}>{\centering\arraybackslash}p{2.3em}>{\centering\arraybackslash}p{2.3em}>{\centering\arraybackslash}p{2.3em}>{\centering\arraybackslash}p{2.3em}>{\centering\arraybackslash}p{2.3em}>{\centering\arraybackslash}p{2.3em}>{\centering\arraybackslash}p{2.3em}>{\centering\arraybackslash}p{2.3em}}
    \toprule 
     & \multicolumn{3}{c}{ WP-IND } & \multicolumn{3}{c}{ JF-IND } & \multicolumn{3}{c}{ MFB-IND } \\
    & MRR & Hits@1 & Hits@3 & MRR & Hits@1 & Hits@3 & MRR & Hits@1 & Hits@3 \\
   \midrule
    HGNN & 0.072 & 0.045 & 0.112 & 0.102 & 0.086 & 0.128 & 0.121 & 0.076 & 0.114 \\
    HyperGCN & 0.075 & 0.049 & 0.111 & 0.099 & 0.088 & 0.133 & 0.118 & 0.074 & 0.117 \\
    G-MPNN-sum & 0.177 & 0.108 & 0.191 & 0.219 & 0.155 & 0.236 & 0.124 & 0.071 & 0.123 \\
    G-MPNN-mean& 0.153 & 0.096 & 0.145 & 0.112 & 0.039 & 0.116 & 0.241 & 0.162 & 0.257 \\
    G-MPNN-max & 0.200 & 0.125 & 0.214 & 0.216 & 0.147 & 0.240 & 0.268 & 0.191 & 0.283 
    \\
    RD-MPNN & 0.304 & 0.238 & 0.328 & 0.402 &	0.308 &	0.453 & 0.122 & 0.082 & 0.125 \\
     \midrule
    \hcnet &   \textbf{0.414} & \textbf{0.352} & \textbf{0.451} & \textbf{0.435}& \textbf{0.357} & \textbf{0.495} & \textbf{0.368}  & \textbf{0.223} &  \textbf{0.417}  \\ 
    \bottomrule
    \end{tabular}
\end{table*}

\textbf{Results.} 
We report the inductive experiments results in \Cref{tab:inductive}, and observe that $\hcnet$ outperforms all the existing baseline methods by a large margin, doubling the metric on WP-IND and JF-IND and substantially increasing on MFB-IND. Notably, we emphasize that $\hcnet$ does not utilize the provided node features whereas other baseline models do, further highlighting the effectiveness of $\hcnet$ in generalizing to entirely new graphs in the absence of node features.  This is because $\hcnet$ is more expressive by computing query-dependent $k$-ary invariants 
instead of query-agnostic unary invariants in $\hrmpnns$ such as RD-MPNNs and G-MPNNs with different aggregation functions.  Overall, these results perfectly align with the main theoretical findings presented in this paper.

\subsection{Transductive experiments}
\label{sec:transductive-experiments}

\begin{table*}[t]
    \centering
    \caption{Transductive link prediction experiments on FB-AUTO, WikiPeople, JF17K, and MFB15K}
    \label{tab:results-transductive}
    \scriptsize
    \setlength{\tabcolsep}{5pt}
    \begin{tabular}{lcccccccccccc}
    \toprule
     & \multicolumn{3}{c}{FB-AUTO} & \multicolumn{3}{c}{WikiPeople} & \multicolumn{3}{c}{JF17K} & \multicolumn{3}{c}{MFB15K} \\
     & MRR & Hits@1 & Hits@10 & MRR & Hits@1 & Hits@10 & MRR & Hits@1 & Hits@10 & MRR & Hits@1 & Hits@10 \\
    \midrule
    m-DistMult & 0.784 & 0.745 & 0.845 & - & - & - & 0.463 & 0.372 & 0.634 & 0.705 & 0.633 & 0.844 \\
    m-CP & 0.752 & 0.704 & 0.837 & - & - & - & 0.392 & 0.303 & 0.560 & 0.680 & 0.605 & 0.828 \\
    m-TransH & 0.728 & 0.727 & 0.728 & - & - & - & 0.444 & 0.370 & 0.581 & 0.623 & 0.531 & 0.809 \\
    RAE       & 0.703 & 0.614 & 0.854 & 0.253 & 0.118 & 0.463 & 0.396 & 0.312 & 0.561 & - & - & - \\
    NaLP      & 0.672 & 0.611 & 0.774 & 0.338 & 0.272 & 0.466 & 0.366 & 0.290 & 0.516 & - & - & - \\
    tNaLP+    & 0.729 & 0.645 & 0.826 & 0.339 & 0.269 & 0.473 & 0.449 & 0.370 & 0.598 & - & - & - \\
    HINGE     & 0.678 & 0.630 & 0.765 & 0.333 & 0.259 & 0.477 & 0.473 & 0.397 & 0.618 & - & - & - \\
    NeuInfer  & 0.737 & 0.700 & 0.805 & 0.350 & 0.282 & 0.467 & 0.451 & 0.373 & 0.604 & - & - & - \\
    HypE      & 0.804 & 0.774 & 0.856 & 0.263 & 0.127 & 0.486 & 0.494 & 0.408 & 0.656 & 0.777 & 0.725 & 0.881 \\
    RAM       & 0.830 & 0.803 & 0.876 & 0.363 & 0.271 & 0.500 & 0.539 & 0.463 & 0.690 & - & - & - \\
    BoxE      & 0.844 & 0.814 & 0.898 & - & - & - & 0.560 & 0.472 & 0.722 & 0.761 & 0.702 & 0.877 \\
    HyperMLN  & 0.831 & 0.803 & 0.877 & - & - & - & 0.556 & 0.482 & 0.717 & - & - & - \\
    HyConvE   & 0.847 & 0.820 & 0.901 & 0.362 & 0.275 & 0.501 & \textbf{0.590} & 0.478 & 0.729 & - & - & - \\
    ReAIE     & 0.873 & \textbf{0.852} & 0.909 & - & - & - & 0.559 & 0.482 & 0.705 & \textbf{0.801} & \textbf{0.755} & \textbf{0.901}\\
    RD-MPNN   & 0.810 & 0.714 & 0.888 & - & - & - & 0.512 & 0.445 & 0.685 & - & - & - \\
    HJE & 0.872 & 0.848 & 0.903 & \textbf{0.450} & \textbf{0.375} & 0.582 & \textbf{0.590} & \textbf{0.507} & 0.729 & - & - & - \\
    HyCubE & \textbf{0.881} & 0.860 & 0.918 & 0.448 &	0.368 &	\textbf{0.592} & 0.584 &	0.508	&	\textbf{0.730} & -&- &-\\
    HyPlanE & 0.866 & 0.843 &0.909 &0.402&	0.323& 0.549& 0.569	&0.496 &0.708 & -&- &-\\
    \midrule
    \hcnet    & 0.871 & 0.842 & \textbf{0.922} & 0.421 & 0.344 & 0.565 & 0.540 & 0.440 & \textbf{0.730} & 0.759 & 0.693 & 0.884 \\
    \bottomrule
    \end{tabular}
    \vspace{-1em}
\end{table*}

\textbf{Datasets \& Baselines.} We evaluate $\hcnets$ on the link prediction task with relational hypergraphs, namely the publicly available FB-AUTO, MFB15K, \citep{fatemi2020knowledge}, WikiPeople \citep{Guan2021LinkPO}, and JF17K\footnote{Note that we include JF17K for comparing with literature, even though JF17K suffers from redundant entries and severe test leakages~\citep{galkin-etal-2020-message}.} \citep{wen2016representation}.  
These datasets include facts of different arities up to $9$. We have taken the result of embedding methods m-DistMult, m-CP, m-TransH from \citet{wen2016representation}, RAE from \citet{rae}, NaLP from \citet{guan2019nalp}, tNaLP+ from \citet{Guan2021LinkPO}, HINGE from \citet{HINGE}, NeuInfer from \citet{guan-etal-2020-neuinfer}, 
HypE from \citet{fatemi2020knowledge},
BoxE from \citet{abbound2020boxe},
RAM from \citet{liu2021ram}, 
HyperMLN from \citet{hypermln},
HyConvE from \citet{wang2023hyconve}, 
ReAIE from \citet{fatemi2021knowledge}, HJE from \citep{HJE}, HyCubE, HyPlanE from \citep{DBLP:journals/corr/abs-2402-08961} and GNN method RD-MPNN from \citet{zhou2023rdmpnn}. The statistics of the datasets are reported in \cref{tab:transductive-statistics}, and the hyper-parameter choices in \Cref{tab: transductive-hyper-parameter-statistics}.  The full tables are in \Cref{tab:transductive-fb-wiki,tab:transductive-jf-mfb} which additionally report H@3.

\textbf{Results.} 
We summarize the results for the transductive link prediction tasks and report them in \Cref{tab:results-transductive}. 
$\hcnet$ is competitive even comparing with existing embedding methods specifically designed for transductive link prediction tasks.
This demonstrates the effectiveness of $\hcnets$ also on transductive datasets.

\subsection{Knowledge graphs experiments}
\label{sec:kg-inductive-experiments}

In addition, to evaluate the model's performance on knowledge graphs—a specific instance of relational hypergraphs—we conduct inductive experiments on knowledge graphs, where every edge has an arity of 2, and compare the outcomes with those of current state-of-the-art models.

\textbf{Setup.} We evaluate $\hcnet$ on $4$ standard inductive splits of WN18RR \citep{brodes2013transe} and FB15k-237 \citep{Dettmers2018FB}, which was proposed in \citet{grail2020teru}. We provide the details of the datasets in \Cref{tab: binary-inductive-dataset-statistics}. Contrary to the standard experiment setting \citep{zhu2022neural,zhu2023anet} on knowledge graph $G = (V,E,R,\vx)$ where for each relation $r(u,v) \in E$, an inverse-relation $r^{-1}$ is introduced as a fresh relation symbol and $r^{-1}(v,u)$ is added in the knowledge graph, in our setup we do \emph{not} augment inverse edges for $\hcnet$. This makes the task more challenging. We compare $\hcnet$ with models designed \emph{only} for inductive binary link prediction task with knowledge graphs, namely GraIL \citep{grail2020teru}, NeuralLP \citep{NeuralLP}, DRUM \citep {DRUM}, NBFNet \citep{zhu2022neural}, RED-GNN \citep{zhang2022redgnn}, and A*Net \citep{zhu2023anet}, and we take the results provided in \citet{zhu2023anet} for comparison. 

\textbf{Results.}
We report the results in \Cref{tab:binary_inductive_hits10}. We observe that $\hcnets$ are highly competitive even compared with state-of-the-art models specifically designed for link prediction with knowledge graphs. $\hcnets$ reach the top $3$ for $7$ out of $8$ datasets, and obtain a very close result for the final dataset. 
Note here that the top $2$ models are NBFNet \citep{zhu2022neural} and A*Net \citep{zhu2023anet}, which share a similar idea of $\hcnet$ and are all based on conditional message passing. The difference in results lies in the different message functions, which are further supported in Table 1 of \citet{huang2023theory}. 

However, we highlight that $\hcnet$ does \emph{not} augment with inverse relation edges, as described in the set-up of the experiment. $\hcnet$ can recognize the directionality of relational edges and pay respect to both incoming and outgoing edges during message passing. No current link prediction model based on message passing can explicitly take care of this without edge augmentation. In fact, \Cref{thm:kg} implies that all current models based on conditional message passing, including NBFNets, need inverse relation augmentation to match the expressive power of $\hcnet$. 
Theoretically speaking, this allows us to claim that $\hcnet$ is strictly more powerful than all other models in the baseline that are based on conditional message passing, assuming all considered models are expressive enough to match their corresponding relational WL test.

\begin{table}[t]
\centering
\caption{Binary inductive experiment on knowledge graph for Hits@10 result. The best result is in \textbf{bold}, and second/third best in \underline{underline}.}
\label{tab:binary_inductive_hits10}
\begin{tabular}{lcccccccc}
\toprule
\multirow{2}{*}{Method} & \multicolumn{4}{c}{FB15k-237} & \multicolumn{4}{c}{WN18RR} \\
& v1 & v2 & v3 & v4 & v1 & v2 & v3 & v4 \\
\midrule
GraIL & 0.429 & 0.424 & 0.424 & 0.389 & 0.760 & 0.776 & 0.409 & 0.687 \\
NeuralLP & 0.468 & 0.586 & 0.571 & 0.593 & 0.772 & 0.749 & 0.476 & 0.706\\
DRUM & 0.474 & 0.595 & 0.571 & 0.593 & 0.777 & 0.747 & 0.477 & 0.702\\
NBFNet & \underline{0.574} & \textbf{0.685} & \textbf{0.637} & \underline{0.627} & \textbf{0.826} & \underline{0.798} & \textbf{0.568} & 0.694\\
RED-GNN & 0.483 & 0.629 & 0.603 & \underline{0.621} & 0.799 & 0.780 & 0.524 & \underline{0.721}\\
A*Net & \textbf{0.589} & \underline{0.672} & \underline{0.629} & \textbf{0.645} & \underline{0.810} & \textbf{0.803} & \underline{0.544} & \textbf{0.743} \\
\midrule
$\hcnet$ & \underline{0.566} & \underline{0.646} & \underline{0.614} & 0.610 & \underline{0.822} & \underline{0.790} & \underline{0.536} & \underline{0.724}\\
\bottomrule
\end{tabular}
\end{table}

\subsection{Ablation studies on the impact of initialization and positional encoding}
\label{sec: ablation study}
\begin{table}[!t]
    \centering
    \begin{minipage}[t]{0.49\textwidth}
        \centering
        \caption{Ablation on Initialization}
        \label{tab:ablation-initialization}
        \begin{tabular}{cc|>{\centering\arraybackslash}p{2.3em}>{\centering\arraybackslash}p{2.3em}>{\centering\arraybackslash}p{2.3em}>{\centering\arraybackslash}p{2.3em}}
        \toprule
        \multicolumn{2}{c}{ $\init$ } & \multicolumn{2}{c}{ WP-IND } & \multicolumn{2}{c}{ JF-IND } \\
        $\vz_q$ & $\vp_i$ & MRR & Hits@3 & MRR  & Hits@3 \\
        \midrule
        - & - & 0.388  & 0.421 & 0.390 & 0.451 \\
        \checkmark & - & 0.387 & 0.421 & 0.392  & 0.447 \\
        - & \checkmark & 0.394 & 0.430 & 0.393 & 0.456 \\
        \checkmark & \checkmark & $\textbf{0.414}$ & $\textbf{0.451}$ & $\textbf{0.435}$  & $\textbf{0.495}$ \\
        \bottomrule
        \end{tabular}
    \end{minipage}
    \begin{minipage}[t]{0.5\textwidth}
        \centering
        \caption{Ablation on Positional Encoding}
        \label{tab:ablation-positional}
        \begin{tabular}{c|>{\centering\arraybackslash}p{2.3em}>{\centering\arraybackslash}p{2.3em}>{\centering\arraybackslash}p{2.3em}>{\centering\arraybackslash}p{2.3em}}
        \toprule
        \multirow{2}{*}{ PE } & \multicolumn{2}{c}{ WP-IND } & \multicolumn{2}{c}{ JF-IND } \\
         & MRR  & Hits@3 & MRR  & Hits@3 \\
        \midrule
        Constant & 0.393  & 0.426 & 0.356  & 0.428 \\
        One-hot & 0.395  & 0.428 & 0.368  & 0.432 \\
        Learnable & 0.396  & 0.425 & 0.416 & 0.480 \\
        Sinusoidal & $\textbf{0.414}$ & $\textbf{0.451}$ & $\textbf{0.435}$  & $\textbf{0.495}$ \\
        \bottomrule
        \end{tabular}
    \end{minipage}
\end{table}

To assess the contribution of each model component, we conduct ablation studies mainly on different choices of positional encodings and initialization functions on WP-IND and JF-IND datasets with the same empirical setup described in \Cref{sec:inductive-experiments}. 
Complete results are reported in \Cref{app: experiment-details}. 

\textbf{Initialization.} We conduct experiments to validate the impact of different initialization by evaluating all combinations of whether including positional encoding $\vp_i$ or learnable query vectors $\vz_q$.  
From \Cref{tab:ablation-initialization}, we observe that both positional encoding $\vp_i$ and the relation $\vz_q$ are essential in the initialization, as removing either of them worsens the overall performance of $\hcnet$. A closer look reveals that the removal of the positional encoding is more detrimental compared to removing relational embedding since the model could deduce the relation types based on implicit information such as the arity of the query relation.

\textbf{Positional encoding.} We also examine the importance of the choice of positional encodings, which serves as an indicator of which position the given entities lie in a hyperedge. We provide experiments on multiple choices of positional encodings and report the results in \Cref{tab:ablation-positional}.
Empirically, we notice that the sinusoidal positional encoding produces the best results due to its ability to measure sequential dependency between neighboring entities, compared with one-hot positional encoding which assumes orthogonality among each position. We also notice that learnable embeddings do not produce better results since it is generally hard to learn a suitable embedding that respects the order of the nodes in a relation based on random initialization. Finally, constant embedding evidently performs the worst as it pays no respect to position information and treats all hyperedges with the same set of nodes in the same way regardless of the order of the nodes in these edges.

\subsection{Expressiveness evaluation}

\label{app:synthetic experiments}
We carry out a synthetic experiment with a custom-built dataset $\hypercycle$ to showcase that $\hcmpnns$ are more expressive than $\hrmpnns$ in the task of link prediction with relational hypergraphs. 

\textbf{Dataset.} We construct $\hypercycle$, a synthetic dataset that consists of multiple relational hypergraphs with relation $R = \{r_0, r_1,r_2\}$. Each relational hypergraph $G$ is parameterized by 2 hyper-parameters: the number of nodes $n$ which is always a multiple of $4$, and the arity of each edge $k$. 
We construct a directed hyper-edge of arity $k$ with alternating relations between $r_1$ and $r_2$ for all $k$ consecutive nodes in this cycle. We present one example of such relational hypergraph in \Cref{fig:synthetic}, where $n=8$ and $k=3$. We generate the dataset by choosing $n = \{8, 12, 16, 20\}$ and $k = \{3, 4, 5, 6, 7\}$. We then randomly pick 70\% of the generated graphs as the training set and the remaining 30\% as the testing set. (See details in \Cref{app:synthetic experiments details}).

\textbf{Objective.} The objective of this task is for each node to identify the node that is located at the ``opposite point''  in the cycle of the given node as true. Formally speaking, for a relational hypergraph $G(n,k)$, we want to predict a $2$-ary (hyper-)edge of relation $r_0$ between any node $x_i$ and its ``opposite point'' $x_{(i+n/2 \bmod n)}$ for all $1 \leq i \leq n$, i.e., classify $r_0(x_i, x_{(i+n/2)\bmod n})$ as true. The negative sample is generated by considering the $r_0$ relation (hyper-)edges that connect the ``2-hop'' neighboring node, i.e., classify $r_0(x_i, x_{(i+2)\bmod n})$ as false. Note that since $n \neq 4$, we will never have $(i+n/2)\bmod n \equiv (i+2) \bmod n$.

\begin{wrapfigure}{r}{7.6cm}
    \centering
    \begin{tikzpicture}[
        every node/.style={circle, draw, fill=white, inner sep=2pt, line width=0.5pt}, 
        cross/.style={path picture={
        \draw[thick, black]
            (path picture bounding box.south west) -- (path picture bounding box.north east)
            (path picture bounding box.south east) -- (path picture bounding box.north west);
    }},
        line width=1.2pt, 
        >=stealth,
        shorten >=3pt, 
        shorten <=3pt,
        transform shape
    ]
    \definecolor{cartoonred}{RGB}{190,80,80}
    \definecolor{cartoonblue}{RGB}{80,80,190}
    \definecolor{cartoongreen}{RGB}{80,190,80}
    \begin{scope}
        \node (a) at (3,0) {};
        \node (b) at (2.1213, 2.1213) {};
        \node (c) at (0,3) {};
        \node (d) at (-2.1213, 2.1213) {};
        \node (e) at (-3.0, 0.0) {};
        \node (f) at (-2.1213, -2.1213) {};
        \node (g) at (0.0, -3.0) {};
        \node (h) at (2.1213, -2.1213) {};
    
        \draw[-, dashed,color=cartoonblue!70] (a) to (b);
        \draw[-, dashed,color=cartoonred!70] (b) to (c);
        \draw[-, dashed,color=cartoonblue!70] (c) to (d);
        \draw[-, dashed,color=cartoonred!70] (d) to (e);
        \draw[-, dashed,color=cartoonblue!70] (e) to (f);
        \draw[-, dashed,color=cartoonred!70] (f) to (g);
        \draw[-, dashed,color=cartoonblue!70] (g) to (h);
        \draw[-, dashed,color=cartoonred!70] (h) to (a);
        
        \draw[->, dashed,color=cartoonred!70] (a) to (b);
        \draw[->, dashed,color=cartoonblue!70] (b) to (c);
        \draw[->, dashed,color=cartoonred!70] (c) to (d);
        \draw[->, dashed,color=cartoonblue!70] (d) to (e);
        \draw[->, dashed,color=cartoonred!70] (e) to (f);
        \draw[->, dashed,color=cartoonblue!70] (f) to (g);
        \draw[->, dashed,color=cartoonred!70] (g) to (h);
        \draw[->, dashed,color=cartoonblue!70] (h) to (a);

        \draw[->, dashed,color=black!70, bend left=15] (c) to node[draw = none, fill=none,midway, left] {$r_0$}(g);

        \draw[->, dotted,color=gray!80, bend left=20] (c) to node[draw = none, fill=none,midway, left] {$r_0$}(e);
        \draw[draw = none, fill = cartoonred, fill opacity=0.2, thick, rounded corners=10pt] 
            ($(a)$) -- 
            ($(b)$) -- 
            ($(c)$) -- cycle;
        \draw[draw = none, fill = cartoonblue, fill opacity=0.2, thick, rounded corners=10pt] 
            ($(b)$) -- 
            ($(c)$) -- 
            ($(d)$) -- cycle;
        \draw[draw = none, fill = cartoonred, fill opacity=0.2, thick, rounded corners=10pt] 
            ($(c)$) -- 
            ($(d)$) -- 
            ($(e)$) -- cycle;
        \draw[draw = none, fill = cartoonblue, fill opacity=0.2, thick, rounded corners=10pt] 
            ($(d)$) -- 
            ($(e)$) -- 
            ($(f)$) -- cycle;
        \draw[draw = none, fill = cartoonred, fill opacity=0.2, thick, rounded corners=10pt] 
            ($(e)$) -- 
            ($(f)$) -- 
            ($(g)$) -- cycle;
        \draw[draw = none, fill = cartoonblue, fill opacity=0.2, thick, rounded corners=10pt] 
            ($(f)$) -- 
            ($(g)$) -- 
            ($(h)$) -- cycle;
        \draw[draw = none, fill = cartoonred, fill opacity=0.2, thick, rounded corners=10pt] 
            ($(g)$) -- 
            ($(h)$) -- 
            ($(a)$) -- cycle;
        \draw[draw = none, fill = cartoonblue, fill opacity=0.2, thick, rounded corners=10pt] 
            ($(h)$) -- 
            ($(a)$) -- 
            ($(b)$) -- cycle;
        
    \end{scope}
    \end{tikzpicture}
    \caption{Example relational hypergraph from dataset $\hypercycle$ with $n=8$ and $k=3$: $r_1$ is blue, $r_2$ is red, and the objective is to classify the black edge as true and the gray edge as false. }
    \label{fig:synthetic}
\end{wrapfigure}

\textbf{Design \& Result.} We claim that $\hcmpnn$ can correctly predict all the testing triplets, whereas $\hrmpnn$ fails to learn this pattern and will only achieve $50\%$ accuracy, which is no better than random guessing. This is exactly due to the lack of expressiveness of $\hrmpnns$ by relying on a $k$-ary decoder for link prediction. Theoretically, all nodes of the relational hypergraphs in $\hypercycle$, due to their rotational symmetry introduced by alternating relation types $r_1,r_2$, can be partitioned into two sets. Since the nodes within each set are isomorphic to each other, it is impossible for any $\hrmpnns$ to distinguish between these nodes by only computing its unary invariant. Thus $\hrmpnns$ cannot possibly solve this task, as whenever they classify the target ``opposite point'' node to be true, they also have to classify the  ``2-hop'' node to be true, and vice versa. Empirically, we observe that $\hrmpnn$ always fails to learn anything meaningful, reaching an accuracy of $50\%$.

However, $\hrmpnn$ can bridge this gap by introducing the relevant notion of ``distance''. As $\hrmpnn$ carries out message-passing after identifying the source node, the relative distance between the source node and the target ``opposite point'' node will be different than the one with the ``2-hop'' node. Thus, by keeping track of the distance from the source node, $\hcnet$ will compute a different embedding for the positive triplet and the negative triplet, effectively solving this task. Empirically, we observe that $\hcnet$ easily reaches $100\%$ accuracy, consistent with our theory. 

\section{Summary, discussions, and limitations}
\label{sec:conclusion}
 We investigated two frameworks of relational message-passing neural networks on the task of link prediction with relational hypergraphs, namely $\hrmpnns$ and $\hcmpnns$. Furthermore, we studied the expressive power of these two frameworks in terms of relational WL and logical expressiveness.
We then proposed a simple yet powerful model instance of $\hcmpnns$ called $\hcnet$ and presented its superior performance on inductive link prediction tasks, which is further supported by additional transductive link prediction and synthetic experiments. One limitation lies in the potentially high computational complexity of our approach when applied to large relational hypergraphs. Our approach is also limited to link prediction and a potential future avenue is to investigate complex query answering on fully relational data.
Our study extends the success of link prediction with knowledge graphs to relational hypergraphs where higher-arity relations can be effectively modeled with GNNs, advancing their applications to fully relational structures.

\section*{Broader impact}
\label{app:social-impact}
 This work mainly focused on link prediction with relational hypergraphs, which has a wide range of applications and thus many potential societal impacts. One potential negative impact is the enhancement of malicious network activities like phishing or pharming through the use of powerful link prediction models.  We encourage further studies to mitigate these issues. 
 
\bibliography{main}
\bibliographystyle{tmlr}

\appendix
\section{R-MPNNs and C-MPNNs}
\label{app:rmpnn-cmpnn-definition}
In this section, we follow \citet{huang2023theory} and define \emph{relational message passing neural networks} (R-MPNNs) and \emph{conditional message passing neural networks} (C-MPNNs). For ease of presentation, we omit the discussion regarding history functions and readout functions from \citet{huang2023theory}.

\textbf{R-MPNNs.} Let $G=(V,E,R,\vx)$ be a knowledge graph, where $\vx$ is a feature map. A \emph{relational message passing neural network} (R-MPNN) computes a sequence of feature maps $\vh^{(\ell)} : V\to \mathbb{R}^{d(\ell)}$, for $\ell \geq 0$. For simplicity, we write $\vh^{(\ell)}_v$ instead of $\vh^{(\ell)}(v)$. For each node $v\in V$, the representations $\vh^{(\ell)}_v$ are iteratively computed as:  
\begin{align*}
 \vh_v^{(0)} & = \vx_v \\
\vh_v^{(\ell+1)} &= \ \update \left(\vh_v^{(\ell)}, \aggregate(\{\!\! \{ \mes_r(\vh_w^{(\ell)})|~  w \in \mathcal{N}_r(v), r \in R\}\!\!\}) \right),
\end{align*}
where $\update$, $\aggregate$, and $\mes_r$ are differentiable \emph{update}, \emph{aggregation}, and relation-specific \emph{message} functions, respectively,   $\mathcal{N}_r(v) := \{u \mid r(u,v) \in E\}$ is the \emph{neighborhood} of a node $v\in V$ relative to a relation $r \in R$.
An R-MPNN has a fixed number of layers $L\geq 0$, and then, the final node representations are given by the map $\vh^{(L)}:V\to \mathbb{R}^{d(L)}$. The final representations can be used for node-level predictions. For link-level tasks, we use a binary decoder ${\dec_q: \mathbb{R}^{d(L)} \times  \mathbb{R}^{d(L)}\to  \sR}$, which produces a score for the likelihood of the fact $q(u,v)$, for $q \in R$. 

\textbf{C-MPNNs.} Let $G=(V,E,R,\vx)$ be a knowledge graph, where $\vx$ is a feature map. A \emph{conditional message passing neural network} (C-MPNN) iteratively computes pairwise representations, relative to a fixed query $q\in R$ and a fixed node $u\in V$, as follows:
\begin{align*}
\vh_{v|u,q}^{(0)} &= \init(u,v,q)\\
\vh_{v|u,q}^{(\ell+1)} &= \update \Big( \vh_{v|u,q}^{(\ell)}, \aggregate(\{\!\! \{ \mes_r(\vh_{w|u,q}^{(\ell)},\vz_q)|~  w \in \mathcal{N}_r(v), r \in R \}\!\!\})\Big), 
\end{align*} 
where $\init$, $\update$, $\aggregate$, and $\mes_r$ are differentiable \emph{initialization}, \emph{update}, \emph{aggregation}, and relation-specific \emph{message} functions, respectively.

We denote by $\vh_q^{(\ell)}:V\times V\to \mathbb{R}^{d(\ell)}$ the function $\vh_q^{(\ell)}(u,v):= \vh_{v|u,q}^{(\ell)}$, and denote $\vz_q$ to be a learnable vector representing the query $q\in R$. A C-MPNN has a fixed number of layers $L\geq 0$, and the final pair representations are given by $\vh_q^{(L)}$. To decode the likelihood of the fact $q(u,v)$ for some $q \in R$, we simply use a unary decoder $\dec: \mathbb{R}^{d(L)}\to  \sR$, parameterized by a 2-layer MLP. In addition, we require $\init(u,v,q)$ to satisfy \emph{target node distinguishability}: for all $q \in R$ and $ v \neq u \in V$, it holds that $\init(u,u,q) \neq \init(u,v,q)$.

\section{HR-MPNNs subsume existing models}
\label{app: proof-of-subsume}

In this section, we provide further details on how the proposed framework $\hrmpnns$ subsumes existing models as claimed. 

\subsection{HR-MPNNs subsume G-MPNNs and RD-MPNNs}
\label{app: gmpnn-rdmpnn}
 To see why HR-MPNNs subsume RD-MPNNs \citep{zhou2023rdmpnn} and G-MPNNs \citep{yadati2020gmpnn}, which are prominent examples of message passing model on relational hypergraphs in the literature, it suffices to instantiate some components of HR-MPNNs with particular functions. 
 
 An RD-MPNN can be seen as an instance of an HR-MPNN that uses summation as $\aggregate$, and a relation-specific message function $\mes_r$ which, for each relation $r$, applies summation followed by a linear map with non-linearity. The update function $\update$ is a one-layer Multi-layer Perceptron (MLP). 
 
 Similarly, a G-MPNN instance can be seen as an $\hrmpnn$ that uses either summation, mean, or max as $\aggregate$, and a message function $\mes_r$ which, for each relation $r$, applies a \emph{Hadamard product} of the relational embedding. 

\subsection{HR-MPNNs subsuming HGNNs and HyperGCNs}
\label{app: hgnn-hypergcn}
To see why $\hrmpnn$s generalize HGNNs \citep{feng2018hypergraph} and HyperGCNs \citep{yadati2019hypergcn} on simple, undirected hypergraph, first note that (i) these models are single-relational - no relation types - so they are a special case in this sense and (ii) the hyperedges in these undirected hypergraphs are unordered.

To recover HGNN, we can set the message function $\mes_r$ to be mean, ignoring the relation types $r$, and ignore the relative position in the formula (as there is no ordering in simple, undirected hypergraph). Then, we can choose the $\aggregate$ function to be symmetrically normalized mean, similar to the aggregation in GCN \citep{kipf2017semisupervised}.

To recover HyperGCN, we set $\aggregate$ to be the symmetrically normalized mean, and $\mes_r$ function to be $w_{i,j} * \argmax_{\vh_j} | \vh_i - \vh_j|$, with some weight $w_{i,j}$ (again ignoring the relation $r$ and position $i$), provided that the message function has access to the feature of considered node $\vh_i$.

\subsection{HR-MPNNs subsume R-MPNNs}
We formally show that the $\rmpnn$s framework is subsumed by the $\hrmpnns$ framework when applied to the knowledge graph. 
\label{app: hrmpnn-rmpnn}
\begin{theorem}
    Let $G = (V, E, R, \vx)$ be a knowledge graph, then given any $\rmpnn$ instance $\mathcal{A}$ with $L$ layer parameterized by $\aggregate_{\mathcal{A}}^{(\ell)}$, $\update_{\mathcal{A}}^{(\ell)}$, and ${\mes_{\mathcal{A}}}_{r}$ for $0 < \ell \leq L$, $r \in R$, there exists a $\hrmpnn$ instance $\mathcal{B}$ with $L$ layer, parameterized by $\aggregate_{\mathcal{B}}^{(\ell)}$, $\update_{\mathcal{B}}^{(\ell)}$, and ${\mes_{\mathcal{B}}}_{r}$, such that for all $v \in V$, we have $\vh_{\mathcal{A},G}^{(\ell)}(v) = \vh_{\mathcal{B},G}^{(\ell)}(v)$ for all $0 \leq \ell \leq L$.
\end{theorem}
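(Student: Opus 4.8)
The plan is to exploit that a knowledge graph is precisely a relational hypergraph in which every relation has arity $2$, and that in this binary setting the \emph{position} $j$ recorded by an $\hrmpnn$ message already encodes the \emph{direction} of the underlying edge. Concretely, if $v = e(i)$ for a binary edge $e = r(w_1,w_2)$, then the positional neighborhood $\gN_i(e)$ is the singleton $\{(w_{3-i},\,3-i)\}$; hence $v$ sits at the \emph{target} position ($i=2$) exactly when its unique co-occurring node carries position $j=1$. This is precisely the case captured by the $\rmpnn$ neighborhood $\gN_r(v) = \{u \mid r(u,v) \in E\}$, whereas the occurrences of $v$ at the source position ($i=1$) correspond to outgoing edges that an $\rmpnn$ ignores. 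So the construction must make $\mathcal{B}$ process the former and silently drop the latter.

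Given the $\rmpnn$ $\mathcal{A}$, I would build $\mathcal{B}$ by setting $\update_{\mathcal{B}}^{(\ell)} = \update_{\mathcal{A}}^{(\ell)}$ and, for each relation $r$, defining the $\hrmpnn$ message on the (singleton) binary input by
\[
{\mes_{\mathcal{B}}}_r^{(\ell)}\big(\{(\vh_w,j)\}\big) =
\begin{cases}
{\mes_{\mathcal{A}}}_r^{(\ell)}(\vh_w) & \text{if } j = 1,\\
\bot & \text{if } j = 2,
\end{cases}
\]
where $\bot$ is a fixed sentinel symbol. Finally I let $\aggregate_{\mathcal{B}}^{(\ell)}(\vh_v, M)$ first discard every occurrence of $\bot$ from the multiset $M$ and then apply $\aggregate_{\mathcal{A}}^{(\ell)}$ to the filtered multiset, ignoring its first argument $\vh_v$. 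Crucially, the branching of ${\mes_{\mathcal{B}}}_r$ is on the discrete structural position $j$ and not on the continuous argument $\vh_w$, so differentiability in $\vh_w$ is inherited directly from ${\mes_{\mathcal{A}}}_r$.

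I would then prove $\vh_{\mathcal{B},G}^{(\ell)}(v) = \vh_{\mathcal{A},G}^{(\ell)}(v)$ for all $v \in V$ and $0 \le \ell \le L$ by induction on $\ell$. The base case is immediate since both models initialize with $\vx_v$. For the inductive step, fix $v$ and split $E(v)$ into occurrences at position $2$ and at position $1$. Each $(e,2)$ with $e = r(w,v)$ contributes, via $\gN_2(e) = \{(w,1)\}$, the message ${\mes_{\mathcal{A}}}_r^{(\ell)}(\vh_{\mathcal{B},G}^{(\ell)}(w)) = {\mes_{\mathcal{A}}}_r^{(\ell)}(\vh_{\mathcal{A},G}^{(\ell)}(w))$ by the induction hypothesis; these are exactly the elements of the $\rmpnn$ multiset $\llbrace \mes_r(\vh_w^{(\ell)}) \mid w \in \gN_r(v),\, r \in R \rrbrace$, with matching multiplicities (self-loops and parallel edges are tracked identically by $E(v)$ and $\gN_r(v)$). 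Each $(e,1)$ with $e = r(v,w)$ contributes, via $\gN_1(e) = \{(w,2)\}$, only $\bot$, which $\aggregate_{\mathcal{B}}$ removes. Hence the filtered multiset fed to $\aggregate_{\mathcal{A}}^{(\ell)}$ coincides exactly with the $\rmpnn$ multiset, and applying the shared $\update^{(\ell)}$ yields equality at level $\ell+1$.

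I expect the main obstacle to be conceptual rather than computational: reconciling the fact that an $\hrmpnn$ aggregates over \emph{all} occurrences of $v$ (and therefore also processes the outgoing edges that an $\rmpnn$ never sees) with the inherently directed semantics of $\rmpnn$s. The sentinel-and-filter device resolves this cleanly, and it is legitimate precisely because the framework permits the message function to branch on the discrete position $j$ while remaining differentiable in its vector argument; equivalently, one could append a binary flag coordinate to messages and have $\aggregate_{\mathcal{B}}$ mask out the flagged entries. Everything else reduces to a routine multiset bookkeeping that the induction makes precise.
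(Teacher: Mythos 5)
Your proposal is correct and follows essentially the same route as the paper: the paper likewise keeps $\update$ and $\aggregate$ from $\mathcal{A}$ (with the aggregation ignoring its first argument) and defines ${\mes_{\mathcal{B}}}_r$ to filter its input on the position tag $j=1$, so that only the occurrences of $v$ at the tail position (i.e.\ the incoming edges forming $\gN_r(v)$) contribute, followed by the same induction on $\ell$. If anything, your sentinel-and-discard device is slightly more explicit than the paper, which only asserts that the filtering ``can be achieved'' and leaves implicit how the vacuous messages arising from the $(e,1)$ occurrences are removed before $\aggregate_{\mathcal{A}}$ is applied.
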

\begin{proof}
    Given an R-MPNN instance $\mathcal{A}$ with $L$ layer, we can have that for $0 \leq \ell \leq L$, we have
    \begin{align*}
        \vh_{\mathcal{A},G}^{(0)}(v) & = \vx(v) \\
        \vh_{\mathcal{A},G}^{(\ell+1)}(v) &= \ \update_{\mathcal{A}}^{(\ell)} \left(\vh_{\mathcal{A},G}^{(\ell)}(v), \aggregate_{\mathcal{A}}^{(\ell)}(\{\!\! \{ {\mes_\gA}_r(\vh_{\mathcal{A},G}^{(\ell)}(w))|~  w \in \mathcal{N}_r(v), r \in R\}\!\!\}) \right),
    \end{align*}
    Note that we can now rewrite the updating formula in the following form:
    \begin{align*}
        \vh_{\mathcal{A},G}^{(\ell+1)}(v) &=\update_{\gA}^{(\ell)}\Big( \vh_{\gA,G}^{(\ell)}(v),  \aggregate_{\gA}^{(\ell)}\big(
             \llbrace {\mes_\gA}_{\rho(e)} \big(\{(\vh_{\gA,G}^{(\ell)}(w), j ) \!\mid\! (w,j) \in \gN_{i}(e) \} \big) \!\mid\! (e,i) \in E(v) ,i = 2\rrbrace \big)  \Big)
    \end{align*}
    We then parameterize a $\hrmpnn$ instance $\mathcal{B}$ with $L$ layer of the following form:
    \begin{align*}
    \vh_{\mathcal{B},G}^{(0)}(v)  &= \vx(v)  \\
        \vh_{\mathcal{B},G}^{(\ell+1)}(v) &=\update_{\gB}^{(\ell)}\Big( \vh_{\gB,G}^{(\ell)}(v),  \aggregate_{\gB}^{(\ell)}\big(\vh_{\gB,G}^{(\ell)}(v) , 
             \llbrace {\mes_{\gB}}_{\rho(e)} \big(\{(\vh_{\gB,G}^{(\ell)}(w), j ) \!\mid\! (w,j) \in \gN_{i}(e) \} \big) \!\mid\! (e,i) \in E(v) \rrbrace \big)  \Big)
    \end{align*}
    where we have for all $0 < \ell \leq L$, $r \in R$, $\update_{\gB}^{(\ell)} := \update_{\gA}^{(\ell)}$, $\aggregate_{\gB}^{(\ell)}(\vx,S) := \aggregate_{\gA}^{(\ell)}(S)$, for some vector $\vx$ and some (multi-)set $S$, and
    \begin{equation*}
        {\mes_{\gB}}_{\rho(e)}(\{(\vh^{(\ell)}(w), j ) \!\mid\! (w,j) \in \gN_{i}(e) \}) := {\mes_{\gA}}_{\rho(e)}(\{(\vh^{(\ell)}(w), j ) \!\mid\! (w,j) \in \gN_{i}(e), j=1 \})
    \end{equation*}
    We argue that ${\mes_{\gB}}_{\rho(e)}$ can be achieved by applying a filtering function on each element of the set to check if the second argument of the tuple is $1$ or not.

    Now we are ready to prove the theorem by induction. 
    First notice that the base case $\ell = 0$ trivially holds. For the inductive case, assume that for all $v \in V$, we have $\vh_{\mathcal{A},G}^{(\ell)}(v) = \vh_{\mathcal{B},G}^{(\ell)}(v)$. Then, notice that for $0 < \ell \leq L$:
    \begin{align*}
        \vh_{\mathcal{A},G}^{(\ell+1)}&(v) =\update_{\gA}^{(\ell)}\Big( \vh_{\gA,G}^{(\ell)}(v),  \aggregate_{\gA}^{(\ell)}\big(
             \llbrace {\mes_\gA}_{\rho(e)} \big(\{(\vh_{\gA,G}^{(\ell)}(w), j ) \!\mid\! (w,j) \in \gN_{i}(e) \} \big) \!\mid\! (e,i) \in E(v) ,i = 2\rrbrace \big)  \Big) \\
        &= \update_{\gA}^{(\ell)}\Big( \vh_{\gA,G}^{(\ell)}(v),     \aggregate_{\gA}^{(\ell)}\big(
             \llbrace {\mes_\gA}_{\rho(e)} \big(\{(\vh_{\gA,G}^{(\ell)}(w), j ) \!\mid\! (w,j) \in \gN_{i}(e), j=1 \} \big) \!\mid\! (e,i) \in E(v), i=2\rrbrace \big)  \Big) \\
        &=\update_{\gB}^{(\ell)}\Big( \vh_{\gB,G}^{(\ell)}(v),  \aggregate_{\gB}^{(\ell)}\big(\vh_{\gB,G}^{(\ell)}(v) , 
             \llbrace {\mes_{\gB}}_{\rho(e)} \big(\{(\vh_{\gB,G}^{(\ell)}(w), j ) \!\mid\! (w,j) \in \gN_{i}(e) \} \big) \!\mid\! (e,i) \in E(v) \rrbrace \big)  \Big)\\
        &= \vh_{\mathcal{B},G}^{(\ell+1)}(v) 
    \end{align*}

    \begin{remark}
        Note that analogously we can show that $\hcmpnns$ subsumes $\cmpnn$s by noticing \emph{generalized target node distinguishability} in $\hcmpnns$ degrades to \emph{target node distinguishability} in the context of knowledge graph. See further detailed discussion in \Cref{sec:kg-hcmpnn}.
    \end{remark}
\end{proof}

\section{Proof of \Cref{thm: HRMPNN}}
\label{app:power-hrmpnn}

\begin{theoremcopy}{\ref{thm: HRMPNN}}
    Let $G = (V, E, R,c)$ be a relational hypergraph, then the following statements hold:
    \begin{enumerate}[leftmargin=.7cm]
        \item For all initial feature maps $\vx$ with $c \equiv \vx$, all $\hrmpnns$ with $L$ layers, and for all $0 \leq \ell \leq L$, it holds that $\hrwl_1^{(\ell)} \preceq \vh^{(\ell)}$.
        \item For all $L \geq 0$, there is an initial feature map $\vx$ with $c \equiv \vx$ and an $\hrmpnn$ with $L$ layers, such that for all $0 \leq \ell \leq L$, we have $\hrwl_1^{(\ell)} \equiv \vh^{(\ell)}$.
    \end{enumerate}
\end{theoremcopy}
\begin{proof}
First, for simplicity of notation, we define $\vm^{(\ell)}_{e,i} =  \mes_{\rho(e)} \Big(\{(\vh_{w}^{(\ell)}, j ) \mid (w,j) \in \gN_{i}(e) \} \Big)$ for edge $e$, position $1\leq i\leq \ar(\rho(e))$, and $\ell\geq 0$.

To prove item (1), we first take an initial feature map $\vx$ with $c \equiv \vx$ and a $\hrmpnn$ with $L$ layers. We apply induction on $\ell$. The base case where $\ell = 0$ follows directly as $\hrwl_1^{(0)} \equiv c \equiv \vx \equiv \vh^{(0)}$. For the inductive case, assume $\hrwl_1^{(\ell+1)}(u) = \hrwl_1^{(\ell+1)}(v)$ for some node pair $u,v \in V$ and for some $\ell \geq 1$. By injectivity of $\tau$, it follows that $\hrwl_1^{(\ell)}(u) = \hrwl_1^{(\ell)}(v) $ and 
\begin{align*}
    &\llbrace(
            \{ 
                (\hrwl_1^{(\ell)}(w),j) \mid (w, j) \in \gN_{i}(e)
            \}
        ,\rho(e))\mid (e,i) \in E(u)
    \rrbrace =\\& \llbrace(
            \{ 
                (\hrwl_1^{(\ell)}(w),j) \mid (w, j) \in \gN_{i'}(e')
            \}
        ,\rho(e'))\mid (e',i') \in E(v)
    \rrbrace \\
\end{align*}
By inductive hypothesis, we have $\vh_u^{(\ell)} = \vh_v^{(\ell)}$ and
\begin{align*}
    & \llbrace(
            \{ 
                (\vh_w^{(\ell)},j) \mid (w, j) \in \gN_{i}(e)
            \}
        ,\rho(e))\mid (e,i) \in E(u)
    \rrbrace = \\ &  \llbrace(
            \{ 
                (\vh_w^{(\ell)},j) \mid (w, j) \in \gN_{i'}(e')
            \}
        ,\rho(e'))\mid (e',i') \in E(v)
    \rrbrace.
\end{align*}
Thus we have
\begin{align*}
&\llbrace
            \mes_{\rho(e)}^{(\ell)} \Big(\{(\vh_{w}^{(\ell)}, j ) \mid (w, j) \in \gN_{i}(e) \} \Big)\mid (e,i) \in E(u)
    \rrbrace  = \\ & \llbrace
           \mes_{\rho(e')}^{(\ell)} \Big(\{(\vh_{w}^{(\ell)}, j ) \mid (w, j) \in \gN_{i'}(e') \} \Big)\mid (e',i') \in E(v)
    \rrbrace
\end{align*}
and then:
\begin{align*}
    \llbrace \vm_{e,i}^{(\ell)} \mid (e,i) \in E(u) \rrbrace &= \llbrace \vm_{e',i'}^{(\ell)} \mid (e',i') \in E(v)\rrbrace. 
\end{align*}
We thus conclude that 
\begin{align*}
    \vh_{u}^{(\ell+1)}  &=\update^{(\ell)}\Big( \vh_{u}^{(\ell)},  \aggregate\Big(\vh_{u}^{(\ell)},  \llbrace \vm^{(\ell)}_{e,i} \mid (e,i) \in E(u) \rrbrace \Big)  \Big) \\
    &=\update^{(\ell)}\Big( \vh_{v}^{(\ell)},  \aggregate\Big(\vh_{v}^{(\ell)},  \llbrace \vm^{(\ell)}_{e',i'} \mid (e',i') \in E(v) \rrbrace \Big)  \Big) \\
    &= \vh_{v}^{(\ell+1)}.
\end{align*}

Now we proceed to show item (2). We use a model of $\hrmpnn$ in the following form and show that any iteration of $\hrwl_1$ can be simulated by a specific layer of such instance of $\hrmpnn$:
\begin{align*}
    \vh_{v}^{(0)} &= \vx_v \\
    \vh_{v}^{(\ell+1)}  &=f^{(\ell)}\Big( \Big[\vh_{v}^{(\ell)} \Big\| \sum_{(e, i)\in E(v) }g_{\rho(e)}^{(\ell)}\Big(\odot_{j \neq i}(\vh_{e(j)}^{(\ell)} + \vp_{j})  \Big) \Big] \Big). 
\end{align*}
Here, $f^{(\ell)}(\vz)=\sign(\mW^{(\ell)}\vz - \vb)$ where $\mW^{(\ell)}$ is a parameter matrix, $\vb$ is the bias term, in this case the all-ones vector $\vb=(1,\dots,1)^T$, and as non-linearity we use the sign function $\sign$. For a relation type $r\in R$, the function $g_r^{(\ell)}$ has the form $g_r^{(\ell)}(\vz)=\mY_r^{(\ell)}\sign(\mW_r^{(\ell)}\vz - \vb)$, where $\mW_r^{(\ell)}$ and $\mY_r^{(\ell)}$ are parameter matrices and $\vb$ is the all-ones bias vector. Recall that $\odot$ denotes element-wise multiplication and $\vp_j$ is the positional encoding at position $j$, which in this case is a parameter vector.

We shall use the following lemma shown in \citet{MorrisAAAI19}[Lemma 9]. The matrix $\mJ$ denotes the all-ones matrix (with appropriate dimensions).

\begin{lemma}[\citep{MorrisAAAI19}]
\label{lemma:independent-vectors-morris}
Let $\mB\in\mathbb{N}^{s\times t}$ be a matrix whose columns are pairwise distinct.
Then there is a matrix $\mX\in\mathbb{R}^{t\times s}$ such that the matrix $\sign(\mX\mB - \mJ)\in\{-1,1\}^{t\times t}$ is non-singular.
\end{lemma}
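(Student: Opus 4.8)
The plan is to construct $\mX$ explicitly so that $\sign(\mX\mB-\mJ)$ realizes a \emph{fixed} non-singular sign pattern, using only that the columns $\vb_1,\dots,\vb_t\in\sN^s$ of $\mB$ are pairwise distinct. First I would build a single direction $\vc\in\R^s$ that linearly separates all columns. Since $\vb_i\neq\vb_j$ gives $\vb_i-\vb_j\neq\mathbf{0}$, the set of $\vc$ with $\vc^\top(\vb_i-\vb_j)=0$ for some $i\neq j$ is a finite union of proper hyperplanes and cannot exhaust $\R^s$; intersecting its complement with the open positive orthant yields a $\vc$ with strictly positive entries for which the scalars $a_j:=\vc^\top\vb_j$ are pairwise distinct and satisfy $a_j\geq 0$ (as $\vb_j\in\sN^s$). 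Relabelling the columns of $\mB$ only permutes the columns of the final sign matrix and preserves non-singularity, so I may assume $0\leq a_1<a_2<\dots<a_t$.

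The key idea is to take every row of $\mX$ proportional to $\vc$: set the $i$-th row to $\mu_i\vc$ with $\mu_i>0$, so that $\mX=\vmu\,\vc^\top$ is rank one yet $(\mX\mB-\mJ)_{ij}=\mu_i a_j-1$. Writing $\theta_i:=1/\mu_i>0$ and using $a_j\geq 0$, one has $\sign(\mu_i a_j-1)=\sign(a_j-\theta_i)$, so each row of the sign matrix is governed by a single positive threshold. Choosing the thresholds to interleave the sorted values, $\theta_i\in(a_{i-1},a_i)$ (with $\theta_1\in(0,a_1)$, which presupposes $a_1>0$; the case $a_1=0$ is treated below), forces the $(i,j)$ entry to be $+1$ exactly when $j\geq i$ and $-1$ when $j<i$. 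Hence $\sign(\mX\mB-\mJ)$ is the staircase matrix $\mM$ with $+1$ on and above the diagonal and $-1$ strictly below it, and all entries genuinely lie in $\{-1,+1\}$ because the $\theta_i$ avoid every $a_j$. It then remains to observe that $\mM$ is non-singular, which follows from $\det\mM=2^{t-1}\neq 0$; for instance, the row operations $R_i\leftarrow R_i-R_{i+1}$ reduce $\mM$ to a matrix with nonzero diagonal without changing the determinant.

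I expect the only delicate point to be the fixed offset $-1$, which restricts the realizable thresholds to be positive and makes the zero-column case special: if some $\vb_j=\mathbf{0}$ then $a_j=0$, and that column of $\sign(\mX\mB-\mJ)$ is forced to be constantly $-1$, since $\vx^\top\mathbf{0}-1=-1$ for every row $\vx$. The plan is to handle this by placing the (at most one, by distinctness) zero column first and using the opposite staircase orientation, with thresholds $\theta_i\in(a_i,a_{i+1})$ so that the entry is $-1$ whenever $j\leq i$; the forced all-$(-1)$ leading column is then automatically consistent with this pattern, and the resulting matrix again has determinant $\pm 2^{t-1}\neq 0$ by the same elimination. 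Everything else is elementary linear algebra over the explicit rank-one $\mX=\vmu\,\vc^\top$, so this offset-induced corner case is essentially the only obstacle.
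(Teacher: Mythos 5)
Your proof is correct. The paper itself does not prove this lemma (it cites Morris et al.), but its proof of the neighbouring variant, Lemma~\ref{lemma:sign-matrix-fts}, follows essentially the strategy you propose: a rank-one $\mX$ whose rows are positive multiples of a single linear functional sending the columns of $\mB$ to pairwise distinct nonnegative scalars, followed by interleaved per-row thresholds that realize a fixed non-singular staircase sign pattern. The one genuine difference is how the separating functional is obtained: the paper takes the explicit integer vector $\vz=(1,k+1,(k+1)^2,\dots,(k+1)^{s-1})$, which maps distinct columns with entries bounded by $k$ to distinct numbers by uniqueness of base-$(k+1)$ representations, whereas you take a generic $\vc$ in the open positive orthant avoiding the finitely many hyperplanes $\vc^\top(\vb_i-\vb_j)=0$. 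Both are valid; the explicit choice is constructive and also pins down the resulting values, which the paper exploits in Lemma~\ref{lemma:sign-matrix-fts} to produce a prescribed submatrix of $\bm{Fts}$ rather than merely some non-singular matrix. Your handling of the all-zeros column is a worthwhile addition rather than a digression: the statement as given permits a zero column (the paper's variant excludes it by hypothesis), that column of $\sign(\mX\mB-\mJ)$ is indeed forced to be constantly $-1$, and your reversed-staircase orientation together with the determinant computation $\pm 2^{t-1}\neq 0$ resolves it correctly. As a minor simplification, the reversed orientation with thresholds $\theta_i\in(a_i,a_{i+1})$ works uniformly whether or not $a_1=0$, so the case split could be avoided entirely.
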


For a matrix $\mB$, we denote by $\mB_i$ its $i$-th column. Let $n=|V|$ and without loss of generality assume $V=\{1,\dots,n\}$. Let $m$ be the maximum arity over all edges of $G$. We will write feature maps $\vh:V\to \R^d$ for $G=(V,E,R,c)$ also as matrices $\mH\in \R^{d\times n}$, where the column $\mH_v$ corresponds to the $d$-dimensional feature vector for node $v$.

Let $\bm{Fts}$ be the following $nm\times n$ matrix:
$$\bm{Fts} = 
\begin{bmatrix}
-1 & -1 & \cdots & -1 & -1 \\
\vdots & \vdots & \vdots & \vdots & \vdots\\ 
-1 & -1 & \cdots & -1 & -1 \\
1 & -1 & \cdots & -1 & -1\\
\vdots & \vdots & \vdots & \vdots & \vdots\\ 
1 & -1 & \cdots & -1 & -1\\
\vdots & \ddots & \ddots & \ddots & \vdots\\
1 & 1 & \cdots & 1 & -1 \\
\vdots & \vdots & \vdots & \vdots & \vdots\\ 
1 & 1 & \cdots & 1 & -1
\end{bmatrix}
$$
That is, $(\bm{Fts})_{ij} = -1$ if $m \times j \geq i$, and $(\bm{Fts})_{ij} = 1$ otherwise. 
We shall use the columns of $\bm{Fts}$ as node features in our simulation.
The following lemma is a simple variation of Lemma A.5 from~\citet{huang2023theory}, which in turn is a variation of \Cref{lemma:independent-vectors-morris} above.
\begin{lemma}
\label{lemma:sign-matrix-fts}
Let $\mB\in \mathbb{N}^{s\times t}$ be a matrix such that $t\leq n$, and all the columns are pairwise distinct and different from the all-zeros column. Then there is a matrix $\mX\in\mathbb{R}^{nm\times s}$ such that the matrix $\sign(\mX\mB -\mJ)\in \{-1,1\}^{nm\times t}$ is precisely the sub-matrix of $\bm{Fts}$ given by its first $t$ columns.
\end{lemma}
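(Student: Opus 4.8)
The plan is to reduce the problem to a single dimension via one linear functional, and then assemble $\mX$ block by block so that its $n$ row-groups realize the nested sign patterns that the columns of $\bm{Fts}$ exhibit. The whole construction is a positive-valued refinement of the separation idea underlying \Cref{lemma:independent-vectors-morris}: there one only needs a nonsingular sign matrix, whereas here I must hit the \emph{exact} staircase, which forces me to control both the ordering of the induced scalars and the sign of the fixed offset $1$ coming from $-\mJ$. First I would produce a row vector $\vy\in\R^{1\times s}$ whose values $v_j := \vy\,\mB_j$ are pairwise distinct and strictly positive. Taking $\vy=(1,B,B^2,\dots,B^{s-1})$ with $B$ larger than every entry of $\mB$ gives a base-$B$ encoding of the integer, entrywise-nonnegative columns; since the columns are pairwise distinct this map is injective, so the $v_j$ are distinct, and since the columns are nonzero and $\vy$ is positive each $v_j>0$. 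After relabelling the columns (discussed below) I assume $v_1 > v_2 > \cdots > v_t > 0$.

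Next I would exploit the block structure. Writing $k(i)=\lceil i/m\rceil$, a direct check on the definition $(\bm{Fts})_{ij}=-1 \Leftrightarrow mj\ge i$ shows that within each block $k$ (rows $(k-1)m+1,\dots,km$) the target entries are constant across the block: they equal $+1$ for $j\le k-1$ and $-1$ for $j\ge k$. Hence it suffices to specify one row vector $\vx^{(k)}$ per block $k=1,\dots,n$, repeated $m$ times, with $\sign(\vx^{(k)}\mB_j-1)=+1$ for $j<k$ and $=-1$ for $j\ge k$. I set $\vx^{(k)}=(1/\theta_k)\,\vy$ for thresholds chosen to interleave the $v_j$, namely $\theta_1 > v_1 > \theta_2 > v_2 > \cdots > \theta_t > v_t > \theta_{t+1}=\cdots=\theta_n > 0$; the hypothesis $t\le n$ guarantees there are enough row-blocks to host the $t$ sign transitions. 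Then $\sign(\vx^{(k)}\mB_j-1)=\sign(v_j-\theta_k)$, which is $+1$ exactly when $v_j>\theta_k$, i.e.\ when $j\le k-1$, and $-1$ otherwise; the strict interleaving ensures $v_j\neq\theta_k$, so no argument of $\sign$ is ever $0$. Stacking these rows yields $\mX\in\R^{nm\times s}$ with $\sign(\mX\mB-\mJ)$ equal to the first $t$ columns of $\bm{Fts}$.

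The step requiring the most care is the column ordering, which is the one genuinely subtle point rather than a routine calculation. A single functional $\vy$ can only sort the columns by the scalars $v_j$, and an arbitrary prescribed ordering of distinct integer points need not be realizable by any linear functional; indeed the nested halfspaces $\{z:\vy z\ge\theta_k\}$ force the staircase to be assigned in order of decreasing $v_j$. I therefore reorder (relabel) the columns of $\mB$ so that $v_1>\cdots>v_t$, which permutes the output columns correspondingly. This is harmless in the intended application to item~(2) of \Cref{thm: HRMPNN}, where the only thing used is that the $t$ output columns are \emph{exactly} the $t$ distinct staircase vectors of $\bm{Fts}$ — an injective, isomorphism-invariant assignment of feature vectors to colors. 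The remaining obligations, namely verifying distinctness and positivity of the $v_j$ and the existence of the interleaving thresholds, are immediate once the $v_j$ are distinct positive reals.
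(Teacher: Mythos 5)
Your proposal is correct and follows essentially the same route as the paper: a single base-$B$ linear functional turns the distinct nonzero columns into distinct positive scalars, the columns are relabelled so these scalars are decreasing, and a rank-one matrix $\mX$ is built whose $n$ row-blocks rescale that functional so the threshold $1$ from $-\mJ$ is crossed at exactly the right column in each block (your interleaved thresholds $\theta_k$ are the reciprocals of the paper's scalars $x_j$). Your explicit remarks on avoiding zero arguments of $\sign$ and on why the column relabelling is harmless in the application are slightly more careful than the paper's ``without loss of generality,'' but the construction is the same.
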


\begin{proof}
Let $\vz = (1,k+1,(k+1)^2,\dots,(k+1)^{s-1})\in \mathbb{N}^{1\times s}$, where $k$ is the largest entry in $\mB$, and $\vb = \vz\mB\in \mathbb{N}^{1\times t}$. By construction, the entries of $\vb$ are positive and pairwise distinct. Without loss of generality, we assume that $\vb = (b_1,b_2,\dots,b_t)$ for $b_1>b_2>\cdots>b_t>0$. As the $b_i$ are ordered, we can choose numbers $x_1,\dots, x_t\in\mathbb{R}$ such that $b_i\cdot x_j  < 1$ if $i\geq j$, and 
$b_i\cdot x_j  > 1$ if $i<j$, for all $i,j\in \{1,\dots,t\}$. Let $\vx = (x_1,\dots,x_t, 2/b_t,\dots,2/b_t)^T\in\mathbb{R}^{n\times 1}$. Note that $(2/b_t)\cdot b_i > 1$, for all $i\in \{1,\dots, t\}$. Let $\vx' = (\vx_1,\dots,\vx_1,\vx_2,\dots,\vx_2,\dots,\vx_n,\dots,\vx_n)^T\in\mathbb{R}^{nm\times 1}$ be the vector obtained from $\vx$ by replacing each entry $\vx_i$ with $m$ consecutive copies of $\vx_i$. Then $\sign(\vx'\vb - \mJ)$ is precisely the sub-matrix of $\bm{Fts}$ given by its first $t$ columns. We can choose $\mX = \vx'\vz\in \mathbb{R}^{nm\times s}$.
\end{proof}

We conclude item (2) by showing the following lemma:

\begin{lemma}
\label{lemma:simulation-rhwl1-hrmpnns}
There exist a family of feature maps $\{\vh^{(\ell)}:V\to\R^{nm}\mid 0\leq \ell\leq L\}$, family of matrices $\{\mW^{(\ell)}\mid 0\leq \ell< L\}$ and $\{\{\mW_r^{(\ell)},\mY_r^{(\ell)}\}\mid 0\leq \ell< L, r\in R\}$, and positional encodings $\{\vp_j\mid 1\leq j\leq m\}$ such that:
\begin{itemize}[leftmargin=.7cm]
\item $\vh^{(\ell)} \equiv \hrwl_{1}^{(\ell)}$ for all $0\leq \ell\leq L$.
\item $\vh_v^{(\ell)}\in\R^{nm}$ is a column of $\bm{Fts}$ for all $0\leq \ell\leq L$ and $v \in V$.
\item $\vh_v^{(\ell+1)} = f^{(\ell)}\Big( \Big[\vh_{v}^{(\ell)} \Big\| \sum_{(e, i)\in E(v) }g_{\rho(e)}^{(\ell)}\Big(\odot_{j \neq i}(\vh_{e(j)}^{(\ell)} + \vp_{j})  \Big) \Big] \Big)$ for all $0\leq \ell< L$ and $v\in V$, where $f^{(\ell)}$ and $g_r^{(\ell)}$ are defined as above, i.e.\ $f^{(\ell)}(\vz)=\sign(\mW^{(\ell)}\vz - \vb)$ and $g_r^{(\ell)}(\vz)=\mY_r^{(\ell)}\sign(\mW_r^{(\ell)}\vz - \vb)$ (vector $\vb$ is the all-ones vector).
\end{itemize}
\end{lemma}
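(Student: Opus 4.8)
The plan is to prove the lemma by induction on $\ell$, building the feature maps, the matrices $\mW^{(\ell)}, \mW_r^{(\ell)}, \mY_r^{(\ell)}$, and the positional encodings $\vp_j$ one layer at a time, while maintaining the invariant that every $\vh_v^{(\ell)}$ is a column of $\bm{Fts}$ that encodes the color $\hrwl_1^{(\ell)}(v)$ as an integer in $\{1,\dots,n\}$. Since $\hrwl_1^{(0)} \equiv c$ has at most $n$ color classes, the base case is handled by injectively assigning each class a distinct column of $\bm{Fts}$ and setting $\vx_v = \vh_v^{(0)}$ to be the column of the class of $v$; this gives $\vh^{(0)} \equiv \hrwl_1^{(0)}$ together with the column invariant, and matches the required initial form.

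For the inductive step I would assume the invariant at level $\ell$ and follow the architecture $\vh_v^{(\ell+1)} = f^{(\ell)}([\vh_v^{(\ell)} \| \sum_{(e,i)\in E(v)} g_{\rho(e)}^{(\ell)}(\odot_{j\ne i}(\vh_{e(j)}^{(\ell)}+\vp_j))])$, splitting the work into three pieces. First, I would choose the positional encodings $\vp_j$ so that the Hadamard product $\vz_{e,i} := \odot_{j\ne i}(\vh_{e(j)}^{(\ell)}+\vp_j)$ injectively encodes the neighbor type $\{(\hrwl_1^{(\ell)}(w), j) : (w,j)\in\gN_i(e)\}$, i.e.\ the hole position $i$ together with the tuple of neighbor colors. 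Second, inside each relation-specific $g_r^{(\ell)}(\vz)=\mY_r^{(\ell)}\sign(\mW_r^{(\ell)}\vz-\vb)$ I would use Lemma \ref{lemma:independent-vectors-morris}, applied to the matrix of the finitely many distinct values $\vz_{e,i}$, to pick $\mW_r^{(\ell)}$ making $\sign(\mW_r^{(\ell)}\vz_{e,i}-\vb)$ a distinct sign pattern per type, and then $\mY_r^{(\ell)}$ mapping it to a distinct basis vector $\ve_{(r,\text{type})}$ in a space indexed by (relation, type) pairs; summing over $(e,i)\in E(v)$ yields a histogram $\vn_v^{(\ell)}\in\mathbb{N}^{\dots}$ that injectively encodes the whole multiset defining $\hrwl_1^{(\ell+1)}(v)$. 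Third, the concatenation $[\vh_v^{(\ell)}\|\vn_v^{(\ell)}]$ takes at most $n$ distinct integer values, one per $\hrwl_1^{(\ell+1)}$-color, all distinct and nonzero, so Lemma \ref{lemma:sign-matrix-fts} supplies $\mW^{(\ell)}$ with $\sign(\mW^{(\ell)}\mB-\mJ)$ equal to the first $t$ columns of $\bm{Fts}$, where $\mB$ collects these values; this makes $\vh_v^{(\ell+1)}=f^{(\ell)}([\vh_v^{(\ell)}\|\vn_v^{(\ell)}])$ a column of $\bm{Fts}$ with injective color-to-column map, so $\vh^{(\ell+1)}\equiv\hrwl_1^{(\ell+1)}$ and the invariant is preserved.

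The \textbf{main obstacle} is the first piece: showing that a single additive positional encoding renders the Hadamard product injective on types. The plan is a genericity argument. Fixing the finitely many pairs $(c,j)$ with $c\in\{1,\dots,n\}$ and $1\le j\le m$, coordinate $k$ of $\vz_{e,i}$ equals $\prod_{j\ne i}((\vh_{c_j}^{(\ell)})_k + (\vp_j)_k)$, a polynomial in the variables $(\vp_j)_k$. Two distinct types must produce distinct polynomials in some coordinate, since from $\prod_{j\in S}((\vp_j)_k + a_j)$ one recovers the set $S$ of non-hole positions and each root $a_j = (\vh_{c_j}^{(\ell)})_k \in \{-1,1\}$; ranging over all $k$ recovers every column $\vh_{c_j}^{(\ell)}$, hence every $c_j$ (the columns of $\bm{Fts}$ being pairwise distinct) and the hole $i$ (since $\ar(r)$ is fixed inside $g_r^{(\ell)}$). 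Each potential collision therefore confines $\vp$ to a proper subvariety, and any $\vp_j$ outside the finite union of these subvarieties makes $\vz_{e,i}$ injective; as an explicit alternative, placing each $(c,j)$-encoding on a disjoint coordinate block turns the product into a literal concatenation. A minor point is that Lemma \ref{lemma:sign-matrix-fts} is stated for $\mB\in\mathbb{N}^{s\times t}$, whereas the $\vh$-block carries $\pm1$ entries; this is accommodated by shifting to nonnegative integer encodings, a constant absorbed into the choice of $\mW^{(\ell)}$.
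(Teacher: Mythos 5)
Your proposal follows the same skeleton as the paper's proof: the identical base case, and the same three-stage inductive step (make the Hadamard product an injective encoding of the positional neighbourhood type, use Lemma~\ref{lemma:independent-vectors-morris} inside $g_r^{(\ell)}$ to convert these into per-relation one-hot vectors whose sum over $E(v)$ is a histogram of the multiset defining $\hrwl_1^{(\ell+1)}(v)$, then use Lemma~\ref{lemma:sign-matrix-fts} to fold the concatenation back into columns of $\bm{Fts}$). The one genuinely different ingredient is how you establish injectivity of $\vz_{e,i}=\odot_{j\neq i}(\vh_{e(j)}^{(\ell)}+\vp_j)$ on neighbourhood types. The paper constructs the $\vp_j$ explicitly (entries $4$ and $12$, so each factor $\vh+\vp_j$ has entries among the primes $\{3,5,11,13\}$) and separates types by comparing prime factorizations of well-chosen coordinates; you instead argue by genericity, viewing each coordinate of $\vz_{e,i}$ as a polynomial in the variables $(\vp_j)_k$ and using unique factorization of $\prod_{j\in S}(x_j+a_j)$ to show that distinct types yield distinct polynomials, so any $\vp$ outside a finite union of proper subvarieties works. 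Your argument is sound and arguably cleaner, at the cost of being non-constructive; the paper's version buys an explicit encoding. Both arguments correctly range over all $n$ candidate columns of $\bm{Fts}$ (not just those realized at level $\ell$), which is what makes the $\vp_j$ layer-independent as the lemma requires.

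One small inaccuracy in your final step: you observe that Lemma~\ref{lemma:sign-matrix-fts} requires $\mB\in\mathbb{N}^{s\times t}$ while the $\vh$-block of $[\vh_v^{(\ell)}\|\vn_v^{(\ell)}]$ carries $\pm1$ entries, and propose to fix this by shifting to nonnegative integers with ``a constant absorbed into the choice of $\mW^{(\ell)}$''. An additive shift is affine, not linear, and the bias in $f^{(\ell)}(\vz)=\sign(\mW^{(\ell)}\vz-\vb)$ is pinned to the all-ones vector, so the shift cannot literally be absorbed into $\mW^{(\ell)}$. The paper's remedy stays linear: since the columns of $\bm{Fts}$ are linearly independent, there is a matrix $\mM$ with $\mM\bm{Fts}=\mI$, so $\mM\vh_v^{(\ell)}$ is a one-hot (hence nonnegative-integer) recoding of the colour, and one sets $\mW^{(\ell)}=\mX\mM'$ where $\mM'$ applies $\mM$ blockwise and $\mX$ comes from Lemma~\ref{lemma:sign-matrix-fts}. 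This is a routine repair rather than a conceptual gap, but as stated your fix does not go through.
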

\begin{proof}
We proceed by induction on $\ell$. Suppose that the node coloring $\hrwl_{1}^{(0)}\equiv c$ with colors $1,\dots,p$, for $p\leq n$. Then we choose $\vh^{(0)}$ such that $\vh^{(0)}_v = \bm{Fts}_{c(v)}$, i.e., $\vh^{(0)}_v$ is the $c(v)$-th column of $\bm{Fts}$. Thus, $\vh^{(0)}$ satisfies the required conditions. 

For the inductive case, assume that $\vh^{(\ell)} \equiv \hrwl_{1}^{(\ell)}$ for $0 \leq \ell < L$ and that $\vh_v^{(\ell)}$ is a column of $\bm{Fts}$ for all $v\in V$. We shall define parameter matrices $\mW^{(\ell)}$ and $\{\{\mW_r^{(\ell)},\mY_r^{(\ell)}\}\mid r\in R\}$ and positional encodings $\{\vp_j\mid 1\leq j\leq m\}$ such that the conditions of the lemma are satisfied.

For $1\leq j\leq m$, the positional encoding $\vp_j$ is independent of $\ell$. Let $\tilde{\vp}_j = 4\vb + 8\ve_j\in \mathbb{R}^m$, where $\vb$ is the $m$-dimensional all-ones vector and $\ve_j$ is the $m$-dimensional one-hot encoding of $j$. In other words, all entries of $\tilde{\vp}_j$ are $4$ except for the $j$-th entry which is $12$. We define $\vp_j=(\tilde{\vp}_j,\dots,\tilde{\vp}_j)\in\mathbb{R}^{nm}$ to be the concatenation of $n$ copies of $\tilde{\vp}_j$.

Let $r\in R$ and define $E^{pos}_r=\{(e,i)\mid e\in E, \rho(e)=r, 1\leq i\leq \ar(r)\}$. For $(e,i)\in E^{pos}_r$, define
$$
\vo_{e,i}^{(\ell)}= \odot_{j \neq i}(\vh_{e(j)}^{(\ell)} + \vp_{j}) \qquad \qquad\widetilde{\col}_{e,i}^{(\ell)} = \{ (\hrwl_1^{(\ell)}(w),j) \!\mid (w, j) \in \gN_{i}(e)\}.
$$
We claim that for $(e,i), (e',i')\in E^{pos}_r$, we have $$\vo_{e,i}^{(\ell)}=\vo_{e',i'}^{(\ell)} \text{ if and only if }\widetilde{\col}_{e,i}^{(\ell)}=\widetilde{\col}_{e',i'}^{(\ell)}.$$
Suppose first that $\widetilde{\col}_{e,i}^{(\ell)}=\widetilde{\col}_{e',i'}^{(\ell)}$. By inductive hypothesis, we have 
$$\{ (\vh_w^{(\ell)},j) \!\mid (w, j) \in \gN_{i}(e)\} =  \{ (\vh_w^{(\ell)},j) \!\mid (w, j) \in \gN_{i'}(e')\}.$$
It follows that $\vo_{e,i}^{(\ell)}=\vo_{e',i'}^{(\ell)}$. 
Suppose now that $\widetilde{\col}_{e,i}^{(\ell)}\neq \widetilde{\col}_{e',i'}^{(\ell)}$. We consider two cases. Assume first $i\neq i'$. Then $\vo_{e,i}^{(\ell)}$ and $\vo_{e',i'}^{(\ell)}$ differ on the $i$-th coordinate, that is, $(\vo_{e,i}^{(\ell)})_i\neq (\vo_{e',i'}^{(\ell)})_i$. Indeed, note that the entries of vectors of the form $\vh_{w}^{(\ell)} + \vp_{j}$ are always prime numbers in $\{3,5,11,13\}$ (the entries of $\vh_{w}^{(\ell)}$ are always in $\{-1,1\}$ by inductive hypothesis). The $i$-th coordinate of all the vector factors in the product $\vo_{e,i}^{(\ell)}=\odot_{j \neq i}(\vh_{e(j)}^{(\ell)} + \vp_{j})$ has value $3$, and hence $(\vo_{e,i}^{(\ell)})_i = 3^{\ar(r)-1}$. On the other hand, there exists a vector factor in the product $\vo_{e',i'}^{(\ell)}=\odot_{j \neq i'}(\vh_{e'(j)}^{(\ell)} + \vp_{j})$ (the factor $\vh_{e'(i)}^{(\ell)} + \vp_{i}$), whose $i$-th coordinate is $11$. Hence $(\vo_{e,i}^{(\ell)})_i$ and $(\vo_{e',i'}^{(\ell)})_i$ have different prime factorizations and then they are distinct. Now assume $i=i'$. Since $\widetilde{\col}_{e,i}^{(\ell)}\neq \widetilde{\col}_{e',i'}^{(\ell)}$, there must be a position $j^*$ such that $\hrwl_1^{(\ell)}(e(j^*))\neq \hrwl_1^{(\ell)}(e'(j^*))$. By inductive hypothesis, $\vh_{e(j^*)}^{(\ell)}\neq \vh_{e'(j^*)}^{(\ell)}$. Again by inductive hypothesis, we know that $\vh_{e(j^*)}^{(\ell)}$ and $\vh_{e'(j^*)}^{(\ell)}$ are columns of $\bm{Fts}$, say w.l.o.g.\ the $k$-th and $k'$-th columns, respectively, for $1\leq k<k'\leq n$. By construction of $\bm{Fts}$, all the $m$ entries of $\vh_{e(j^*)}^{(\ell)}$ from coordinates $\{km+1,\dots,km+m\}$ are $1$, while these are $-1$ for $\vh_{e'(j^*)}^{(\ell)}$. We claim that $\vo_{e,i}^{(\ell)}$ and $\vo_{e',i'}^{(\ell)}$ differ on the $(km+j^*)$-th coordinate. Consider the product  $\vo_{e,i}^{(\ell)}=\odot_{j \neq i}(\vh_{e(j)}^{(\ell)} + \vp_{j})$. The $(km+j^*)$-th coordinate of the factor $\vh_{e(j^*)}^{(\ell)} + \vp_{j^*}$ is 13, while it is in $\{3,5\}$ for the remaining factors. For the product $\vo_{e',i'}^{(\ell)}=\odot_{j \neq i'}(\vh_{e'(j)}^{(\ell)} + \vp_{j})$, 
the $(km+j^*)$-th coordinate of the factor $\vh_{e'(j^*)}^{(\ell)} + \vp_{j^*}$ is 11, while it is in $\{3,5\}$ for the remaining factors. Hence $(\vo_{e,i}^{(\ell)})_{km+j^*}$ and $(\vo_{e',i'}^{(\ell)})_{km+j^*}$ have different prime factorizations and then they are distinct.

Let $r\in R$. It follows from the previous claim that if we interpret $\vo^{(\ell)}$ and $\widetilde{\col}^{(\ell)}$ as colorings for $E_r^{pos}$, then these two colorings are equivalent (i.e., the produce the same partition). Let 
$s_r$ be the number of colors involved in these colorings, and let $\vo_1,\dots,\vo_{s_r}\in\mathbb{R}^{nm}$ be an enumeration of the distinct vectors appearing in $\{\vo_{e,i}^{(\ell)}\mid (e,i)\in E_{r}^{pos}\}$. 
Let $\mS_r$ be the $(nm\times s_r)$-matrix whose columns are $\vo_1,\dots,\vo_{s_r}$. Fix an enumeration $r_1,\dots,r_{|R|}$ of $R$ and define $s=\sum_{r\in R}s_r$. Now we are ready to define our sought 
 matrices $\mW_r^{(\ell)}$ and $\mY_r^{(\ell)}$, for $r\in R$. We define $\mW_r^{(\ell)}$ to be the $(s_r\times nm)$-matrix obtained from applying \Cref{lemma:independent-vectors-morris} to the matrix $\mS_r$. Let $\widetilde{\mY}_r^{(\ell)}\in \mathbb{R}^{s_r\times s_r}$ be the inverse matrix of $\sign(\mW_r^{(\ell)}\mS_r - \mJ)$. Suppose $r=r_k$ for $1\leq k\leq |R|$. Then, the matrix $\mY_r^{(\ell)}$ is the $(s\times s_r)$-matrix defined as the vertical concatenation of the following $|R|$ matrices: $\mN_{r_{1}},\dots, \mN_{r_{k-1}}$, $\widetilde{\mY}_r^{(\ell)}$, $\mN_{r_{k+1}},\dots, \mN_{r_{|R|}}$, where $\mN_{r'}$ is the all-zeros $(s_{r'}\times s_r)$-matrix. By construction, $\mY_r^{(\ell)}\sign(\mW_r^{(\ell)}\mS_r - \mJ)$ is the vertical concatenation of $\mN_{r_{1}},\dots, \mN_{r_{k-1}}$, ${\mI}_r$, $\mN_{r_{k+1}},\dots, \mN_{r_{|R|}}$, where $\mI_{r}$ is the $s_r\times s_r$ identity matrix. In particular, if we consider $g_r^{(\ell)}(\vz)=\mY_r^{(\ell)}\sign(\mW_r^{(\ell)}\vz - \vb)$ as in the statement of the lemma, then for each $(e,i)\in E_r^{pos}$, the vector $\vm_{e,i}^{(\ell)} = g_{r}^{(\ell)}(\vo_{e,i}^{(\ell)})$ has the form $\vm_{e,i}^{(\ell)}=(\mathbf{0}_{r_1},\dots,\mathbf{0}_{r_{k-1}}, \vc_{e,i}^{(\ell)}, \mathbf{0}_{r_{k+1}},\dots,\mathbf{0}_{r_{|R|}})^T\in\{0,1\}^{s}$, where $\mathbf{0}_{r'}$ is the all-zeros vector of dimension $s_{r'}$ and $\vc_{e,i}^{(\ell)}\in\{0,1\}^{s_r}$ is a one-hot encoding of edge color $\vo_{e,i}^{(\ell)}$, or equivalently, of edge color $\widetilde{\col}_{e,i}^{(\ell)}$. It follows that the vector
 $$\vf_{v}^{(\ell)} = \sum_{(e, i)\in E(v) }g_{\rho(e)}^{(\ell)}(\vo_{e,i}^{(\ell)})=\sum_{r\in R}\sum_{(e,i)\in E(v)\cap E_r^{pos}} g_{r}^{(\ell)}(\vo_{e,i}^{(\ell)})$$
 has the form $\vf_{v}^{(\ell)} = (\va_{r_1},\dots,\va_{r_{|R|}})^T\in \mathbb{N}^{s}$, where $\va_r$ is the $s_r$-dimensional vector whose entry $(\va_r)_j$, for $1\leq j\leq s_r$, is the number of elements $(e,i)$ in $E(v)\cap E_r^{pos}$ with color $j$, that is, such that $\vo_{e,i}^{(\ell)} = \vo_j$. In particular, $\va_r$ is an encoding of the multiset $\llbrace \widetilde{\col}_{e,i}^{(\ell)}\mid (e,i)\in E(v)\cap E_r^{pos}\rrbrace$ and hence $\vf_{v}^{(\ell)}$ is an encoding of the multiset $\llbrace (\widetilde{\col}_{e,i}^{(\ell)}, \rho(e))\mid (e,i)\in E(v)\rrbrace$. Note that this multiset is precisely the multiset $\llbrace \col^{(\ell)}(e,i)\mid (e,i)\in E(v)\rrbrace$ from the definition of the update rule of the hypergraph relational 1-WL test. Hence, the feature map given by the concatenation $[\vh_{v}^{(\ell)} || \ \vf_{v}^{(\ell)}]$, for all $v\in V$, is equivalent to $\hrwl_1^{(\ell+1)}$.

 It remains to define the function $f^{(\ell)}$, given by the parameter matrix $\mW^{(\ell)}$, so that the feature map $\vh^{(\ell+1)}$ satisfies the conditions of the lemma. Since the columns of $\bm{Fts}$ are independent, there exists a matrix $\mM\in \mathbb{R}^{n\times nm}$ such that $\mM\bm{Fts}$ is the $n\times n$ identity matrix. Since each $\vh_{v}^{(\ell)}$, with $v\in V$, is a column of $\bm{Fts}$, then $\mM \vh_{v}^{(\ell)}\in \{0,1\}^n$ corresponds to a one-hot encoding of the column or color $\vh_{v}^{(\ell)}$. Let $\mM'$ be the $(n+s)\times (nm+s)$ matrix with all entries $0$ except for the upper-left $(n
 \times nm)$-submatrix which is $\mM$, and the lower-right $(s\times s)$-submatrix which is the $(s\times s)$ identity matrix. By construction, we have $\mM' [\vh_{v}^{(\ell)} || \ \vf_{v}^{(\ell)}] = [\mM \vh_{v}^{(\ell)} || \ \vf_{v}^{(\ell)}]\in \mathbb{N}^{n+s}$. Let $\vz_1,\dots, \vz_q$, with $q\leq n$, be the distinct vectors of the form $[\mM \vh_{v}^{(\ell)} || \ \vf_{v}^{(\ell)}]$ and let $\mB$ be the $((n+s)\times q)$-matrix whose columns are precisely $\vz_1,\dots, \vz_q$. We can apply Lemma~\ref{lemma:sign-matrix-fts} to $\mB$ to obtain a matrix $\mX\in\mathbb{R}^{nm\times (n+s)}$ such that $\sign(\mX\mB - \mJ)$ is the matrix given by the first $q$ columns of $\bm{Fts}$. We define our sought matrix $\mW^{(\ell)}$ to be $\mW^{(\ell)}=\mX\mM'$. 
\end{proof}
\end{proof}

\section{HGML and proof of \Cref{thm:hrmpnn-logic}}
\label{app:logic-hrmpnn}

\subsection{HGML formulas}

Fix a set of relation types $R$ and a set of node colors $\mathcal{C}$. 
The \emph{hypergraph graded modal logic} (HGML) is the fragment of FO containing the following unary formulas. Firstly, $a(x)$ for $a\in\mathcal{C}$ is a formula.  Secondly, if $\varphi(x)$ and $\varphi'(x)$ are HGML formulas, then $\neg\varphi(x)$ and $\varphi(x)\land \varphi'(x)$ also are. Thirdly, for $r\in R$, $1\leq i\leq \ar(r)$ and  $N\geq 1$:
\begin{align*}
\exists^{\geq N} \tilde{\vy}\, \Big(r(y_1,\dots,y_{i-1},x,y_{i+1},\dots,y_{\ar(r)})  \land \Psi(\tilde{\vy})\Big)
\end{align*}
is a HGML formula, where $\tilde{\vy}=(y_1,\dots,y_{i-1},y_{i+1},\dots,y_{\ar(r)})$ and $\Psi(\tilde{\vy})$ is a boolean combination of HGML formulas having free variables from $\tilde{\vy}$.  Intuitively, the formula expresses that $x$ participates in at least $N$ edges $e$ at position $i$, such that the remaining nodes in $e$ satisfies $\Psi$. 

Let  $G=(V,E,R,c)$ be a relational hypergraph where the range of the node coloring $c$ is $\mathcal{C}$. Next, we define the semantics of HGML. We define when a node $v$ of $G$ satisfies a HGML formula $\varphi(x)$, denoted by $G\models \varphi(v)$, recursively as follows:
\begin{itemize}[leftmargin=.7cm]
\item if $\varphi(x)=a(x)$ for $a\in {\cal C}$, then  $G\models \varphi(v)$ iff $a$ is the color of $v$ in $G$, i.e., $c(v)=a$.
\item if $\varphi(x) = \neg \varphi'(x)$, then $G\models \varphi(v)$ iff $G\not\models \varphi'(v)$.
\item if $\varphi(x) = \varphi'(x)\land \varphi''(x)$, then $G\models \varphi(v)$ iff $G\models \varphi'(v)$ and $G\models \varphi''(v)$.
\item if $\varphi(x) = \exists^{\geq N} \tilde{\vy}\, (r(y_1,\dots,y_{i-1},x,y_{i+1},\dots,y_{\ar(r)})  \land \Psi(\tilde{\vy}))$ then $G\models \varphi(v)$ iff there exists at least $N$ tuples $(w_1,\dots w_{i-1},w_{i+1}, \dots, w_{\ar(r)})$ of nodes of $G$ such that $r(w_1,\dots,w_{i-1},v,w_{i+1},\dots,w_{\ar(r)})$ holds in $G$ and the boolean combination $\Psi(w_1,\dots w_{i-1},w_{i+1}, \dots, w_{\ar(r)})$ evaluates to true.
\end{itemize}

As an example, consider the set of relations from \Cref{fig:relational-hypergraph}, that is, relations $\{\text{Person}(x), \text{StudyDegree}(x, y, z, m), \text{Awarded}(w, p, m)\}$. Consider the property: ``$x$ is a person who obtained a degree $y$ of a subject $z$ at a university $m$ that has been awarded less than two prices $p$ of some subject $w$.'' This can be expressed as the following HGML formula:
\begin{multline*}
    \phi(x) = 
        \text{Person}(x) \, \land \,  \exists y, z, m   \Big( \, 
          \text{StudyDegree}(x, y, z, m) \, \land \,  
          \neg \exists^{\geq 2} p,w \, ( 
            \text{Awarded}(w, p, m)) \, \Big)
\end{multline*}

Observe that HGML formulas have a restricted form and hence they are not able to represent all logical queries, which hints at the fundamental limitations of our studied models. For instance, formulas in HGML can only express local properties of nodes. That is, properties of the form “a node is connected (via hyper-edges) to other nodes satisfying other (local) properties”. This is illustrated in the example above as the variables $y,z,m$ are forced to appear together with $x$ in the hyper-edge $\text{StudyDegree}(x, y, z, m)$. Another limitation of HGML is that once we quantify over the neighboring variables for $x$ (in the example $y,z,m$), we can only check (local) HGML properties separately for the neighboring variables and combine them via Boolean combinations. In the example above, we check the property ``$m$  has been awarded less than two prices $p$ of some subject $w$'' for university $m$ via the HGML formula $\alpha(m)=\neg \exists^{\geq 2} p,w \, (\text{Awarded}(w, p, m))$. In particular, we cannot check properties that involve simultaneously two or more neighboring variables, as these properties would not be HGML properties (they would not even be unary). As an example, consider the property ``$x$ is a person who obtained a degree $y$ of a subject $z$ at a university $m$ that has been awarded less than two prices $p$ in subject $z$.'' Now we do not impose that $m$ has less than two prices in any subject, but less than two prices in the particular subject $z$ (the same related with person $x$). This can be expressed as:
\begin{multline*}
    \phi(x) = 
        \text{Person}(x) \, \land \,  \exists y, z, m   \Big( \, 
          \text{StudyDegree}(x, y, z, m) \, \land \,  
          \neg \exists^{\geq 2} p \, ( 
            \text{Awarded}(z, p, m)) \, \Big)
\end{multline*}
Note that this is not an HGML formula as $\beta(m,z)=\neg \exists^{\geq 2} p \, (\text{Awarded}(z, p, m))$ checks a condition that involves two neighboring variables ($m$ and $z$). This violates exactly the requirement discussed above.

\subsection{Proof of \Cref{thm:hrmpnn-logic}}

Before showing \Cref{thm:hrmpnn-logic}, we need to prove an auxiliary result.
We define a restriction of HGML, denoted by HGML$_r$, as follows. HGML$_r$ is defined as HGML, except for the inductive case
\begin{align*}
\exists^{\geq N} \tilde{\vy}\, \Big(r(y_1,\dots,y_{i-1},x,y_{i+1},\dots,y_{\ar(r)})  \land \Psi(\tilde{\vy})\Big)
\end{align*}
where now we impose $\Psi(\tilde{\vy})$ to be a \emph{conjunction} of HGML formulas with different free variables, that is, 
$$\Psi(\tilde{\vy}) = \varphi_1(y_1)\land \cdots \land \varphi_{i-1}(y_{i-1})\land \varphi_{i+1}(y_{i+1})\land \cdots \land \varphi_{\ar(r)}(y_{\ar(r)}).$$

We have that HGML is actually equivalent to HGML$_r$.

\begin{proposition}
\label{prop:translate-hgml}
Every HGML formula can be translated into an equivalent HGML$_r$ formula.
\end{proposition}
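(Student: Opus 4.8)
The plan is to prove the proposition by structural induction on the HGML formula $\varphi(x)$, translating it into an equivalent HGML$_r$ formula $\varphi^{r}(x)$. The atomic case $a(x)$ is already in HGML$_r$, and the cases $\neg\varphi$ and $\varphi\land\varphi'$ follow immediately, since HGML$_r$ is closed under negation and conjunction (and hence under arbitrary Boolean combinations, using De Morgan and the fact that $\top$ is definable as $\neg(a(x)\land\neg a(x))$). The only case requiring real work is the graded existential
$$\varphi(x) = \exists^{\geq N}\tilde{\vy}\,\big(r(y_1,\dots,y_{i-1},x,y_{i+1},\dots,y_{\ar(r)}) \land \Psi(\tilde{\vy})\big),$$
where $\Psi(\tilde{\vy})$ is an arbitrary Boolean combination of unary HGML formulas, each with a single free variable among $\tilde{\vy}$.

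First I would apply the induction hypothesis to every unary subformula occurring in $\Psi$, so that we may assume all of them are already HGML$_r$ formulas. For each position $j\neq i$, let $\theta_{j,1}(y_j),\dots,\theta_{j,s_j}(y_j)$ enumerate the (now HGML$_r$) subformulas of $\Psi$ whose free variable is $y_j$. For a subset $S\subseteq\{1,\dots,s_j\}$ define the complete type formula $\beta_{j,S}(y_j) = \bigwedge_{l\in S}\theta_{j,l}(y_j)\land\bigwedge_{l\notin S}\neg\theta_{j,l}(y_j)$, which is again HGML$_r$; these formulas are pairwise mutually exclusive and jointly exhaustive, so every node satisfies exactly one $\beta_{j,S}$ (positions with $s_j=0$ are handled by $\beta_{j,\emptyset}=\top$). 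Since the truth of $\Psi$ on a tuple depends only on the types of its entries, I can rewrite $\Psi$ as a disjunction $\bigvee_{p}D_p$ ranging over the finitely many type profiles $p$ that make $\Psi$ true, where each $D_p=\bigwedge_{j\neq i}\beta_{j,S_j^{p}}(y_j)$ is a separated conjunction of exactly the shape permitted in HGML$_r$; crucially, distinct $D_p$ are mutually exclusive.

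The heart of the argument is then the counting step. Because the $D_p$ are pairwise inconsistent, the number of witnessing tuples for $r(\dots,x,\dots)\land\Psi$ equals the sum of the numbers $n_p$ of witnessing tuples for the individual bodies $r(\dots,x,\dots)\land D_p$. I would then invoke the elementary fact that, for nonnegative integers $n_1,\dots,n_M$, one has $\sum_p n_p\geq N$ if and only if there is a composition $m_1+\cdots+m_M=N$ with $m_p\geq 0$ and $n_p\geq m_p$ for all $p$. This lets me express $\varphi(x)$ as the finite Boolean combination
$$\varphi^{r}(x) = \bigvee_{\substack{m_1+\cdots+m_M=N\\ m_p\geq 0}}\ \bigwedge_{p=1}^{M}\ \exists^{\geq m_p}\tilde{\vy}\,\big(r(y_1,\dots,y_{i-1},x,y_{i+1},\dots,y_{\ar(r)})\land D_p\big),$$
where conjuncts with $m_p=0$ are read as $\top$. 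Each inner formula is a graded existential whose body is a separated conjunction, hence an HGML$_r$ formula, and HGML$_r$ is closed under the outer Boolean operations, so $\varphi^{r}(x)$ is HGML$_r$ and equivalent to $\varphi(x)$.

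I expect the main obstacle to be precisely this counting over a disjunctive body: the naive route through inclusion–exclusion does not stay within the graded-existential format. The key idea that circumvents it is to force the disjuncts $D_p$ to be mutually exclusive by passing to complete types, which makes the counts additive, and then to linearize the threshold on a sum via the finite enumeration of compositions of $N$. The remaining verifications are routine bookkeeping: that the type formulas $\beta_{j,S}$ stay within HGML$_r$, that disjunction and $\top$ are available, and that the equivalence of the composition characterization holds.
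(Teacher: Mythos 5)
Your proof is correct, and its overall skeleton matches the paper's: structural induction, with all the work in the graded-existential case, where one rewrites $\Psi$ as a disjunction of \emph{mutually exclusive} separated conjunctions and then linearizes the threshold $\exists^{\geq N}$ over the now-additive counts by enumerating the compositions $m_1+\cdots+m_M=N$. Where you genuinely differ is in how mutual exclusivity is manufactured. The paper first puts $\Psi$ in disjunctive normal form and then, for each pattern $(S,{\cal T})$ recording which disjuncts hold and, for each failing disjunct, exactly which of its conjuncts fail, builds a formula $\Psi_{S,{\cal T}}$; the family of these is pairwise inconsistent and covers the truth of $\Psi$. You instead pass directly to complete types: for each position $j$ you form the Boolean atoms $\beta_{j,S}$ over all unary subformulas of $\Psi$ in $y_j$, observe that the truth of $\Psi$ on a tuple is a function of the type profile, and take the disjuncts $D_p$ to be the profiles on which $\Psi$ evaluates to true. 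Your route is cleaner and more obviously a partition (exhaustiveness and disjointness are immediate from the type construction), at the cost of an exponential blow-up in the number of subformulas per position rather than in the DNF size; the paper's version tracks falsification patterns explicitly, which is somewhat more delicate to verify but structurally equivalent. Both handle the degenerate $\exists^{\geq 0}$ conjuncts the same way (reading them as true), and both rely on HGML$_r$ being closed under Boolean combinations at the top level, so I see no gap in your argument.
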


\begin{proof}
We apply induction to the formulas in HGML. The only interesting case is when the formula has the form
\begin{align*}
\exists^{\geq N} \tilde{\vy}\, \Big(r(y_1,\dots,y_{i-1},x,y_{i+1},\dots,y_{\ar(r)})  \land \Psi(\tilde{\vy})\Big)
\end{align*}
for $r\in R$, $1\leq i\leq \ar(r)$, $N\geq 1$ and a boolean combination $\Psi(\tilde{\vy})$ of HGML formulas. We can write $\Psi(\tilde{\vy})$ in disjunctive normal form and since negation and conjunction are part of HGML, we can assume that $\Psi(\tilde{\vy})$ has the form:
$$\Psi(\tilde{\vy}) = \bigvee_{1\leq k\leq q} \varphi^k_1(y_1)\land \cdots \land \varphi^k_{i-1}(y_{i-1})\land \varphi^k_{i+1}(y_{i+1})\land \cdots \land \varphi^k_{\ar(r)}(y_{\ar(r)}).$$
For $1\leq k\leq d$ and a subset $T\subseteq \{1,\dots,i-1,i+1,\dots,\ar(r)\}$, we denote by $\phi_{T}^k$ the formula
$$\phi_{T}^k(y_1,\dots,y_{i-1},y_{i+1},\dots,y_{\ar(r)})= \bigwedge_{a\in T}\neg \varphi_{a}^k(y_a) \land \bigwedge_{a\notin T} \varphi_{a}^k(y_a).$$
Note that $\phi_{T}^k$ expresses that for the $k$-th disjunct of $\Psi$, the conjuncts $\varphi_{a}^k(y_a)$ that are false are precisely those for which $a\in T$. In particular the $k$-th disjunct of $\Psi$ corresponds to $\phi_{\emptyset}^k$.

For  $S\subseteq\{1,\dots,d\}$, and a vector ${\cal T} = (T_k\subseteq \{1,\dots,i-1,i+1,\dots,\ar(r)\}: T_k\neq \emptyset, k\notin S)$, we denote by $\Psi_{S, {\cal T}}$ the formula:
$$\Psi_{S, {\cal T}}(y_1,\dots,y_{i-1},y_{i+1},\dots,y_{\ar(r)}) = \bigwedge_{k\in S} \phi_{\emptyset}^k\land \bigwedge_{k\notin S} \phi_{T_k}^k.$$
$\Psi_{S, {\cal T}}$ expresses that exactly the $k$-th disjuncts for $k\in S$ are true, and each of the remaining false disjuncts for $k\notin S$ are being falsified by making false precisely the conjuncts $\varphi_{a}^k(y_a)$, with $a\in T_k$. Since HGML contains negation and conjunction, we can write $\Psi_{S, {\cal T}}$
as a conjunction of HGML formulas with different free variables, that is:
$$\Psi_{S, {\cal T}}(y_1,\dots,y_{i-1},y_{i+1},\dots,y_{\ar(r)}) = \alpha_1(y_1)\land \cdots \land \alpha_{i-1}(y_{i-1})\land \alpha_{i+1}(y_{i+1})\land\cdots \land \alpha_{\ar(r)}(y_{\ar(r)}).$$
Define 
$${\cal F} := \{\Psi_{S, {\cal T}}\mid S\subseteq\{1,\dots,d\}, S\neq \emptyset, {\cal T} = (T_k\subseteq \{1,\dots,i-1,i+1,\dots,\ar(r)\}\mid T_k\neq \emptyset, k\notin S)\}.$$

Then by construction, we have that $\Phi$ is true iff exactly one of the formulas in $\cal{F}$ is true. It follows that 
\begin{align*}
\exists^{\geq N} \tilde{\vy}\, \Big(r(y_1,\dots,y_{i-1},x,y_{i+1},\dots,y_{\ar(r)})  \land \Psi(\tilde{\vy})\Big)
\end{align*}
is equivalent to the HGML$_r$ formula
\begin{align*}
\bigvee_{\substack{(N_{S,{\cal T}}\in\mathbb{N}\mid \Psi_{S, {\cal T}}\in {\cal F})\\\sum_{S, {\cal T}} N_{S,{\cal T}}=N}} \bigwedge_{\Psi_{S, {\cal T}}\in {\cal F}} \exists^{\geq N_{S,{\cal T}}} \tilde{\vy}\, \Big(r(y_1,\dots,y_{i-1},x,y_{i+1},\dots,y_{\ar(r)})  \land \widetilde{\Psi}_{{S,{\cal T}}}(\tilde{\vy})\Big)
\end{align*}
where 
$$\widetilde{\Psi}_{{S,{\cal T}}}(y_1,\dots,y_{i-1},y_{i+1},\dots,y_{\ar(r)})=\tilde{\alpha}_1(y_1)\land \cdots \land \tilde{\alpha}_{i-1}(y_{i-1})\land \tilde{\alpha}_{i+1}(y_{i+1})\land\cdots \land \tilde{\alpha}_{\ar(r)}(y_{\ar(r)}).$$

where $\tilde{\alpha}_{a}(y_{a})$ is the translation to HGML$_r$ of the formula $\alpha_{a}(y_{a})$, which we already have by induction.
\end{proof}

Now we are ready to prove \Cref{thm:hrmpnn-logic}.

\begin{theoremcopy}{\ref{thm:hrmpnn-logic}}
Each hypergraph graded modal logic classifier is captured by a $\hrmpnn$. 
\end{theoremcopy}

\begin{proof}
We follow a similar strategy than the logic characterizations from \citet{BarceloKM0RS20, huang2023theory}.
Let $\varphi(x)$ be a formula in HGML, where the vocabulary contains relation types $R$ and node colors $\mathcal{C}$. By \Cref{prop:translate-hgml}, we can assume that $\varphi(x)$ belongs to HGML$_r$. Let $\varphi_1,\dots,\varphi_L$ be an enumeration of the subformulas of $\varphi$ such that if $\varphi_i$ is a subformula of $\varphi_j$, then $i\leq j$. In particular, $\varphi_L=\varphi$. We shall define an HR-MPNN $\mathcal{B}_{\varphi}$ with $L$ layers computing $L$-dimensional features in each layer. The idea is that at layer $\ell\in\{1,\dots,L\}$, the $\ell$-th component of the feature $\vh_v^{(\ell)}$ is computed correctly and corresponds to $1$ if $\varphi_\ell$ is satisfied in node $v$, and $0$ otherwise. We add an additional final layer that simply outputs the last component of the feature vector.

 We use models of $\hrmpnns$ of the following form:
 \begin{align*}
    \vh_{v}^{(\ell+1)}  &=f^{(\ell)}\Big( \Big[\vh_{v}^{(\ell)} \Big\| \sum_{(e, i)\in E(v) }g_{\rho(e)}^{(\ell)}\Big(\odot_{j \neq i}(\vp_{j}-\vh_{e(j)}^{(\ell)})  \Big) \Big] \Big). 
\end{align*}
Here, $f^{(\ell)}(\vz)=\sigma(\mW^{(\ell)}\vz + \vb)$ where $\mW^{(\ell)}$ is a parameter matrix, $\vb$ is the bias term and $\sigma$ is a non-linearity. For a relation type $r\in R$, the function $g_r^{(\ell)}$ has the form $g_r^{(\ell)}(\vz)=\va_r -\sigma(\mW_r^{(\ell)}\vz)$, where $\mW_r^{(\ell)}$ is a parameter matrix and $\va_r$ is a parameter vector. Recall that $\odot$ denotes element-wise multiplication and $\vp_j$ is the positional encoding at position $j$, which in this case is a parameter vector. The parameter matrix $\mW^{(\ell)}$ will be a $(L\times 2L)$-matrix of the form $\mW^{(\ell)}=[\mW_0^{(\ell)}\ \mI]$, where $\mW_0^{(\ell)}$ is a $(L\times L)$ parameter matrix and $\mI$ is the $(L\times L)$ identity matrix. The parameter matrices $\mW_0^{(\ell)}$ and $\mW_r^{(\ell)}$ are actually layer independent and hence we omit the superscripts. Therefore, our models are of the following form:
\begin{align*}
    \vh_{v}^{(\ell+1)}  &=\sigma\Big( \mW_0 \vh_{v}^{(\ell)} + \sum_{r\in R}\sum_{\substack{(e, i)\in E(v)\\ \rho(e)=r}} \Big(\va_r - \sigma(\mW_r \odot_{j \neq i}(\vp_{j} - \vh_{e(j)}^{(\ell)}))\Big) + \vb  \Big). 
\end{align*}

For the non-linearity $\sigma$ we use the truncated ReLU function $\sigma(x) = \min(\max(0,x),1)$. Let $m$ be the maximum arity of the relations in $R$. For $1\leq j\leq m$, the positional encoding $\vp_j$ is defined as follows. The dimension of $\vp_j$ must be $L$ (the same as for feature vectors). We define a set of positions $I_j\subseteq\{1,\dots,L\}$ as follows: $k\in I_j$ iff there exists a subformula of $\varphi$ of the form 
\begin{align*}
&\exists^{\geq N} \tilde{\vy}\, \Big(r(y_1,\dots,y_{i-1},x,y_{i+1},\dots,y_{\ar(r)})  \land \alpha_{1}(y_1) \\ & \land \dots \land \alpha_{i-1}(y_{i-1})\land \alpha_{i+1}(y_{i+1})\land\cdots\land \alpha_{\ar(r)}(y_{\ar(r)})\Big).
\end{align*} 
such that $j\in\{1,\dots,i-1,i+1,\dots,\ar(r)\}$ and $\alpha_j$ is the $k$-th subformula in the enumeration $\varphi_1,\dots,\varphi_L$. Then we define $\vp_j$ such that $(\vp_j)_k=1$ if $k\in I_j$ and $(\vp_j)_k=3$ otherwise. 

Now we define the parameter matrices $\mW_0\in\mathbb{R}^{L\times L}$ and 
$\mW_r\in \mathbb{R}^{L\times L}$, for $r\in R$, together with the bias vector $\vb$. For $0\leq \ell < L$, the $\ell$-row of $\mW_0$ and $\mW_r$, and the $\ell$-th entry of $\va_r$ and $\vb$ are defined as follows (omitted entries are $0$):
\begin{enumerate}[leftmargin=.7cm]
\item If $\varphi_\ell(x) = a(x)$ for a color $a\in \mathcal{C}$, then $(\mW_0)_{\ell\ell} = 1$. 
\item If $\varphi_\ell(x) = \neg\varphi_k(x)$ then $(\mW_0)_{\ell k} = -1$, and $\vb_\ell = 1$. 
\item If $\varphi_\ell(x) = \varphi_j(x)\land \varphi_k(x)$ then $(\mW_0)_{\ell j} = 1$, $(\mW_0)_{\ell k} = 1$ and $\vb_\ell = -1$.
\item If 
\begin{align*}
\varphi_\ell(x) &= \exists^{\geq N} \tilde{\vy}\, \Big(r(y_1,\dots,y_{i-1},x,y_{i+1},\dots,y_{\ar(r)})  \land \varphi_{k_1}(y_1) \\ &\land \dots \land \varphi_{k_{i-1}}(y_{i-1})\land \varphi_{k_{i+1}}(y_{i+1})\land\cdots\land \varphi_{k_{\ar(r)}}(y_{\ar(r)})\Big)
\end{align*}
then $(\mW_r)_{\ell k_j}=1$ for $j\in\{1,\dots,i-1,i+1,\dots,\ar(r)\}$ and $(\va_r)_\ell=1$ and $\vb_\ell=-N+1$.
\end{enumerate}

Let $G=(V,E,R,c)$ be a relational hypergraph with node colors from $\mathcal{C}$.
In order to apply $\mathcal{B}_\varphi$ to $G$, 
we choose initial $L$-dimensional features $\vh_v^{(0)}$ such that $(\vh_v^{(0)})_\ell=1$ if $\varphi_\ell = a(x)$ and $a$ is the color of $v$, and $(\vh_v^{(0)})_\ell=0$ otherwise. In other words, the $L$-dimensional initial feature $\vh_v^{(0)}$ is a one-hot encoding of the color of $v$. To conclude the theorem we show by induction the following statement:

($\dagger$) \emph{For all $1\leq \ell\leq L$, all $1\leq p \leq \ell$, all $v\in V$, we have $(\vh^{(\ell)}_v)_p = 1$ if and only if $G\models \varphi_{p}(v)$.}

We start by showing the following:

($\star$) \emph{For all $1\leq \ell\leq L$, all $v\in V$, and all $1\leq p\leq L$ such that $\varphi_p(x)=a(x)$ for some $a\in {\cal C}$, we have $(\vh^{(\ell)}_v)_p = 1$ if and only if $G\models \varphi_{p}(v)$.}

We apply induction on $\ell$. For the base case assume $\ell=1$. Take $v\in V$ and $1\leq p\leq L$ such that $\varphi_p(x)=a(x)$ for some $a\in {\cal C}$. By construction, we have that:
\begin{align*}
    (\vh_{v}^{(1)})_p  &=\sigma \Big((\vh_{v}^{(0)})_p\Big) = (\vh_{v}^{(0)})_p. 
\end{align*}
By definition of $\vh^{(0)}$, we obtain that $(\vh^{(1)}_v)_p = 1$ if and only if $G\models \varphi_{p}(v)$. For the inductive case, suppose $\ell > 1$ and take $v\in V$ and $1\leq p\leq L$ such that $\varphi_p(x)=a(x)$ for some $a\in {\cal C}$. We have that:
\begin{align*}
    (\vh_{v}^{(\ell)})_p  &=\sigma \Big((\vh_{v}^{(\ell-1)})_p\Big) = (\vh_{v}^{(\ell-1)})_p. 
\end{align*}
By inductive hypothesis we know that $(\vh^{(\ell-1)}_v)_p = 1$ if and only if $G\models \varphi_{p}(v)$. It follows that $(\vh^{(\ell)}_v)_p = 1$ if and only if $G\models \varphi_{p}(v)$.

\medskip

We now prove statement ($\dagger$).
We start with the base case $\ell=1$. Take $v\in V$. It must be the case that $p=1$ and hence $\varphi_p(x)=a(x)$ for some $a\in {\cal C}$. The result follows from ($\star$). 

For the inductive case, take $\ell > 1$. Take $v\in V$ and $1\leq p\leq \ell$. We consider several cases:

\begin{itemize}[leftmargin=.7cm]
\item Suppose $\varphi_p(x)=a(x)$ for some color $a\in {\cal C}$. Then the result follows from ($\star$).
\item Suppose that $\varphi_p(x) = \neg\varphi_k(x)$. We have that:
\begin{align*}
    (\vh_{v}^{(\ell)})_p  &=\sigma \Big(-(\vh_{v}^{(\ell-1)})_k+1\Big) =  -(\vh_{v}^{(\ell-1)})_k+1.
\end{align*}
We obtain that $(\vh_{v}^{(\ell)})_p=1$ iff $(\vh_{v}^{(\ell-1)})_k=0$. Since $k\leq \ell-1$, we have by inductive hypothesis that $(\vh_{v}^{(\ell-1)})_k=1$ iff $G\models \varphi_{k}(v)$. It follows that $(\vh_{v}^{(\ell)})_p=1$ iff $G\models \varphi_{p}(v)$.
\item Suppose that $\varphi_p(x) = \varphi_j(x)\land \varphi_k(x)$. Then:
\begin{align*}
    (\vh_{v}^{(\ell)})_p  &=\sigma \Big((\vh_{v}^{(\ell-1)})_j +(\vh_{v}^{(\ell-1)})_k - 1\Big).
\end{align*}
We obtain that $(\vh_{v}^{(\ell)})_p=1$ iff $(\vh_{v}^{(\ell-1)})_j=1$ and $(\vh_{v}^{(\ell-1)})_k=1$. Since $j,k\leq \ell-1$, we have by inductive hypothesis that $(\vh_{v}^{(\ell-1)})_j=1$ iff $G\models \varphi_{j}(v)$ and $(\vh_{v}^{(\ell-1)})_k=1$ iff $G\models \varphi_{k}(v)$. It follows that $(\vh_{v}^{(\ell)})_p=1$ iff $G\models \varphi_{p}(v)$.
\item Suppose that 
\begin{align*}
\varphi_p(x) = &\exists^{\geq N} \tilde{\vy}\, \Big(r(y_1,\dots,y_{i-1},x,y_{i+1},\dots,y_{\ar(r)})  \land \varphi_{k_1}(y_1) \\ 
&\land \dots \land \varphi_{k_{i-1}}(y_{i-1})\land \varphi_{k_{i+1}}(y_{i+1})\land\cdots\land \varphi_{k_{\ar(r)}}(y_{\ar(r)})\Big).
\end{align*}
Then:
\begin{align*}
    (\vh_{v}^{(\ell)})_p  &=\sigma\Big(\sum_{\substack{(e, q)\in E(v)\\ \rho(e)=r}} \Big(1 - \sigma\big(\sum_{j\neq i}\odot_{t \neq q}(\vp_{t}- \vh_{e(t)}^{(\ell-1)})_{k_j}\big)\Big) - N+1  \Big). 
\end{align*}
We say that a pair $(e,q)\in E(v)$, with $\rho(e)=r$, is \emph{good} if $q=i$ and $G\models \varphi_{k_j}(e(j))$ for all $j\in\{1,\dots,i-1,i+1,\dots,\ar(r)\}$. We claim that $\sum_{j\neq i} \odot_{t \neq q}(\vp_{t}-\vh_{e(t)}^{(\ell-1)})_{k_j}=0$ if $(e,q)$ is good and $\sum_{j\neq i} \odot_{t \neq q}(\vp_{t}-\vh_{e(t)}^{(\ell-1)})_{k_j}>1$ otherwise.
Suppose $(e,q)$ is good. Then $q=i$. Take $j\neq i$. We have that $\odot_{t \neq i}(\vp_{t}-\vh_{e(t)}^{(\ell-1)})_{k_j}=0$ since the factor $(\vp_{t}-\vh_{e(t)}^{(\ell-1)})_{k_j}=0$ when $t=j$. Indeed, by construction, $(\vp_j)_{k_j}=1$. Also, since $k_j\leq \ell-1$, we have by inductive hypothesis that $(\vh_{e(j)}^{(\ell-1)})_{k_j}=1$ iff $G\models \varphi_{k_j}(e(j))$. Since $(e,q)$ is good, it follows that $(\vh_{e(j)}^{(\ell-1)})_{k_j}=1$. Hence $(\vp_{j}-\vh_{e(j)}^{(\ell-1)})_{k_j}=0$. Suppose now that $(e,q)$ is not good. Assume first that $q=i$. Then there exists $j\neq i$ such that $G\not\models \varphi_{k_j}(e(j))$. We have that $\odot_{t \neq i}(\vp_{t}-\vh_{e(t)}^{(\ell-1)})_{k_j}>1$. If $t=j$, then we have $(\vp_t)_{k_j}=1$. Since $k_j\leq \ell-1$, by inductive hypothesis we have that $(\vh_{e(j)}^{(\ell-1)})_{k_j}=1$ iff $G\models \varphi_{k_j}(e(j))$. It follows that $(\vp_{t}-\vh_{e(t)}^{(\ell-1)})_{k_j}=1$ when $t=j$. If $t\notin\{i,j\}$, then $(\vp_{t})_{k_j}=3$ and then $(\vp_{t}-\vh_{e(t)}^{(\ell-1)})_{k_j}>1$. Hence $\odot_{t \neq i}(\vp_{t}-\vh_{e(t)}^{(\ell-1)})_{k_j}>1$. Suppose now that $q\neq i$. Then we can choose $j=q$ and obtain that 
$\odot_{t \neq q}(\vp_{t}-\vh_{e(t)}^{(\ell-1)})_{k_j}>1$. Indeed, we have $(\vp_{t})_{k_q}=3$ for all $t\neq q$. Hence all the factors of $\odot_{t \neq q}(\vp_{t}-\vh_{e(t)}^{(\ell-1)})_{k_q}$ are $>1$ and then the product is $>1$. 

As a consequence of the previous claim, we have that:
\begin{align*}
    (\vh_{v}^{(\ell)})_p  &=\sigma\Big(|\{(e,i)\in E(v)\mid \rho(e)=r, \text{$(e,i)$ is good}\}| - N+1  \Big). 
\end{align*}
By definition $G\models\varphi_p(v)$ iff $|\{(e,i)\in E(v)\mid \rho(e)=r, \text{$(e,i)$ is good}\}|\geq N$. Hence $G\models\varphi_p(v)$ iff $(\vh_{v}^{(\ell)})_p=1$.
\end{itemize}
\end{proof}

\section{Proof of \Cref{thm: HCMPNN}}
\label{app:power-hcmpnn}

\begin{theoremcopy}{\ref{thm: HCMPNN}}
    Let $G = (V, E, R,c)$ be a relational hypergraph and $\vq = (q, \tilde{\vu}, t)$ be a query such that $c$ satisfies target node distinguishability with respect to $\vq$. Then the following statements hold:
    \begin{enumerate}[leftmargin=.7cm]
        \item For all $\hcmpnns$ with $L$ layers and initialization 
        $\init$ with
        $\init \equiv c$, $0 \leq \ell \leq L$, we have 
    $\hrwl_1^{(\ell)} \preceq \vh_\vq^{(\ell)}$.
        \item For all $L \geq 0$, there is an $\hcmpnn$ with $L$ layers 
        s.t.
        $0 \leq \ell \leq L$, 
        $\hrwl_1^{(\ell)} \equiv \vh_\vq^{(\ell)}$ holds.
    \end{enumerate}
\end{theoremcopy}
\begin{proof}
Note that given $G$ and $\vq$, each  $\hcmpnn$ ${\cal A}$ with $L$ layers can be translated into a $\hrmpnn$ ${\cal B}$ with $L$ layers that produce the same node features in each layer: for ${\cal B}$ we choose as initial features, the features obtained from the initialization function of ${\cal A}$, and use the same architecture of ${\cal A}$ (functions $\update, \aggregate, \mes$). On the other hand, each $\hrmpnn$ ${\cal B}$ with $L$ layers whose initial features define a coloring that satisfies generalized target node distinguishability with respect to $\vq$ can be translated into a $\hcmpnn$ ${\cal A}$ with $L$ layers that compute the same node features in each layer: we can define the initialization function of $\cal{A}$ so that we obtain the initial features of ${\cal B}$ and then use the same architecture of ${\cal B}$.

Item (1) is obtained by translating the given $\hcmpnn$ into its correspondent $\hrmpnn$ and then invoking \Cref{thm: HRMPNN}. Similarly, item (2) is obtained by applying \Cref{thm: HRMPNN} to obtain an equivalent $\hrmpnn$ and then translate it to a $\hcmpnn$.  
\end{proof}

\section{Proof of \Cref{thm:HGMLc}}
\label{app:logic-hcmpnn}

We consider \emph{symbolic} queries $\vq=(q,\tilde{\vb},t)$, where each $b\in\tilde{\vb}$ is a constant symbol. We consider vocabularies containing relation types $r\in R$, node colors ${\cal C}$, and the constants $b\in\tilde{\vb}$. In this case, we work with relational hypergraphs $G=(V,E,R,c,(v_b)_{b\in \tilde{\vb}})$, where the range of the coloring $c$ is ${\cal C}$ and $v_b$ is the interpretation of constant $b$. We only focus on \emph{valid} relational hypergraphs, that is, $G=(V,E,R,c,(v_b)_{b\in \tilde{\vb}})$ such that for all $b,b'\in \tilde{\vb}$, $b\neq b'$ implies $v_{b}\neq v_{b'}$.

We define \emph{hypergraph graded modal logic with constants} (HGML$_c$) as HGML but, as atomic cases, we additionally have formulas of the form $\varphi(x) = (x=b)$ for some constant $b$. As expected, we have that HC-MPNNs can capture HGML$_c$ classifiers.

\begin{theoremcopy}{\ref{thm:HGMLc}}
Each HGML$_c$ classifier can be captured by a HC-MPNNs over valid relational hypergraphs.
\end{theoremcopy}
\begin{proof}
The theorem follows by applying the same construction as in the proof of Theorem~\ref{thm:hrmpnn-logic}. Now we have extra base cases of the form $\varphi(x)=(x=b)$ but the same arguments apply. Note that now we need to define the initial features $\vh^{(0)}$ via the initialization function of the HC-MPNN. Since we are focusing on valid relational hypergraphs, this can be easily done while satisfying generalized target node distinguishability.
\end{proof}

\section{Link prediction with knowledge graphs}
\label{sec:kg-hcmpnn}

An interesting observation is that when we restrict relational hypergraphs to have hyperedges of arity exactly $2$, we recover the class of knowledge graphs. 
$\cmpnn$s \citep{huang2023theory} are tailored for knowledge graphs and their expressive power has been recently studied extensively, with a focus on their capability for distinguishing \emph{pairs of nodes} (for a formal definition see \Cref{app:rmpnn-cmpnn-definition}). In this section, we compare $\hcmpnn$s and $\cmpnn$s, and hence we are interested in the expressive power of $\hcmpnns$ in terms of distinguishing pairs of nodes. Note however that, in principle, $\hcmpnns$ do not compute binary invariants. Indeed, for $q\in R$ and a pair of nodes $u,v$ we can obtain two final features depending on whether we pose the query $q(u,?)$ or $q(?,v)$. As a convention, we shall define the final feature of the pair $u,v$ as the result of the query $q(u,?)$. When a $\hcmpnn$ computes binary invariants under this convention, we say the $\hcmpnn$ is \emph{restricted to tail predictions}.

We proceed to show that $\hcmpnn$s restricted to tail predictions have the same expressive power in terms of distinguishing pairs of nodes as the  
$\rawl_2^+$ test proposed in \citet{huang2023theory}. This test is an extension of $\rawl_2$, which in turn, matches the expressive power of $\cmpnn$s. It follows then that $\hcmpnn$s are strictly more powerful than $\cmpnn$s over knowledge graphs.
We show this by first defining a variant of the relational WL test which upper bound the expressive power of $\hcmpnns$ restricting to tail predictions.

Given a knowledge graph $G = (V, E, R, c, \eta)$, where $\eta : V \times V \mapsto D$ is a pairwise coloring satisfying \emph{target node distinguishability}, i.e. $\forall u \neq v, \eta(u,u) \neq \eta(u,v)$, we define a \emph{relational hypergraph conditioned local 2-WL test}, denoted as $\hcwl_2$. $\hcwl_2$ iteratively updates binary coloring $\eta$ as follow for all $\ell \geq 0$:
\begin{align*}
   \hcwl_2^{(0)} &= \eta(u,v) \\
   \hcwl_2^{(\ell+1)}(u,v) &= \tau\Big( 
   \hcwl_2^{(\ell)}(u,v), \llbrace \big( 
           \{ 
               (\hcwl_2^{(\ell)}(u, w),j) \!\mid\! (w, j) \!\in\! \gN_{i}(e)
           \},
       \rho(e)
       \big) \mid (e,i) \in E(v) \rrbrace
   \Big)
\end{align*}
Note that indeed, $\hcwl_2^{(\ell)}$ computes a binary invariants for all $\ell \geq 0$. First, we show that $\hcmpnn$ restricted on only tails prediction is indeed characterized by $\hcwl_2$. The proof idea is very similar to Theorem 5.1 in \citet{huang2023theory}.  

\begin{theorem}
\label{thm: rcwl}
    Let $G = (V, E, R, \vx, \eta)$ be a knowledge graph where $\vx$ is a feature map and $\eta$ is 
a pairwise node coloring satisfying \emph{target node distinguishability}. Given a query with $\vq = (q, \tilde{\vu}, 2)$, then we have:
\begin{enumerate}[leftmargin=.7cm]
    \item For all $\hcmpnns$ restricted on tails prediction with $L$ layers and initializations $\init$ with $\init \equiv \eta$, and $0 \leq \ell \leq L$ , we have 
    $ \hcwl_2^{(\ell)} \preceq \vh_\vq^{(\ell)}$
    \item For all $L \geq 0$ , there is an $\hcmpnn$ restricted on tails prediction with $L$ layers such that for all $0 \leq \ell \leq L$ , we have $\hcwl_2^{(\ell)} \equiv \vh_\vq^{(\ell)}$.
\end{enumerate}
\end{theorem}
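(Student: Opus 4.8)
The plan is to reduce both items to the already-established characterization of $\hrmpnns$ in \Cref{thm: HRMPNN}, using a single \emph{disjoint-union} construction that converts source-conditioning into an ordinary node coloring. Concretely, for the knowledge graph $G = (V,E,R,\vx,\eta)$ I would form the relational hypergraph $G^\star = \bigsqcup_{u \in V}(V, E, R, c_u)$ consisting of one disjoint copy of $G$ for every potential source $u$, where the copy indexed by $u$ carries the initial coloring $c_u(v) := \eta(u,v)$. Identifying the node set of $G^\star$ with $V \times V$ (node $v$ in copy $u$ becomes the pair $(u,v)$), the key observation is that, since $G^\star$ has no edges between copies, running $\hrwl_1$ on $G^\star$ decouples across copies except through the single global injective relabeling $\tau$. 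A straightforward induction on $\ell$ — base case $\hrwl_1^{(0)}(G^\star)(u,v) = c_u(v) = \eta(u,v) = \hcwl_2^{(0)}(u,v)$, inductive step comparing the two update rules term by term — then shows $\hrwl_1^{(\ell)}(G^\star) \equiv \hcwl_2^{(\ell)}$ as invariants on $V \times V$, for all $\ell$.

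The second ingredient I would establish is that an $\hcmpnn$ (restricted to tail predictions) on $G$ and an $\hrmpnn$ on $G^\star$ are interchangeable. Here I rely on the fact that the message, update and aggregation functions of an $\hcmpnn$ depend on the query $\vq = (q,\tilde{\vu},2)$ only through the query relation $q$ and the fixed position $t=2$, and not on the identity of the source node (exactly as in the $\cmpnn$ formulation, where the query enters the messages solely via $\vz_q$). Consequently these functions coincide across all copies of $G^\star$, so an $\hcmpnn$ with initialization $\init$ induces a genuine $\hrmpnn$ $\mathcal{B}^\star$ on $G^\star$ whose initial features are $\vx^\star(u,v) := \init(v,(q,(u),2))$ and which reuses the same $\update,\aggregate,\mes_{\rho(\cdot)}$. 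Since the neighborhood of $(u,v)$ in $G^\star$ is precisely $E(v)$ inside copy $u$, a second induction gives $\vh_{\mathcal{B}^\star}^{(\ell)}(u,v) = \vh_{v|(q,(u),2)}^{(\ell)}$ for every $\ell$; that is, the tail-prediction binary invariant of the $\hcmpnn$ coincides with the node invariant computed by $\mathcal{B}^\star$ on $G^\star$.

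With these two correspondences in place, both items fall out of \Cref{thm: HRMPNN}. For item (1), given any $\hcmpnn$ with $\init \equiv \eta$, the condition $\init \equiv \eta$ translates precisely into $\vx^\star \equiv c^\star$ on $G^\star$ (where $c^\star$ is the coloring of $G^\star$), so \Cref{thm: HRMPNN}(1) yields $\hrwl_1^{(\ell)}(G^\star) \preceq \vh_{\mathcal{B}^\star}^{(\ell)}$; rewriting both sides through the two correspondences gives $\hcwl_2^{(\ell)} \preceq \vh_\vq^{(\ell)}$. For item (2), I would apply \Cref{thm: HRMPNN}(2) to $G^\star$ to obtain an $\hrmpnn$ together with an initial feature map that simulates $\hrwl_1$ exactly for all $\ell \le L$; reading its shared functions as a (source-independent) $\hcmpnn$ and setting $\init(v,(q,(u),2))$ to the prescribed initial feature of $(u,v)$ recovers an $\hcmpnn$ restricted to tail predictions with $\vh_\vq^{(\ell)} \equiv \hrwl_1^{(\ell)}(G^\star) \equiv \hcwl_2^{(\ell)}$ (and the target node distinguishability of $\eta$ ensures the recovered initialization respects generalized target node distinguishability).

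The main obstacle I anticipate is the first correspondence: verifying carefully that $\hrwl_1$ on the disjoint union reproduces $\hcwl_2$ \emph{across} copies and not merely within a fixed source. This is exactly the point where the global injectivity of $\tau$ matters, and it is what allows the conclusion to be phrased as a refinement of binary invariants over all of $V \times V$ rather than one source at a time. The only other delicate point is the source-independence of $\mes,\update,\aggregate$: were the messages allowed to depend on the source node itself, the induced map on $G^\star$ would cease to be a single $\hrmpnn$ and item (1) could fail, so I would state this assumption explicitly and note that it holds for the $\hcnet$ instantiation and, more generally, for the architectures in the $\cmpnn$ lineage.
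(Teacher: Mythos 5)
Your proposal is correct and follows essentially the same route as the paper: your disjoint-union graph $G^\star$ is literally the paper's auxiliary graph $G^2=(V\times V, E', R, c_\eta)$ with $E'=\{r((u,w),(u,v))\mid r(w,v)\in E\}$ (which is nothing but one copy of $G$ per source $u$), and the two inductions you outline correspond exactly to the paper's two reduction propositions before invoking \Cref{thm: HRMPNN}. Your explicit remark that the message/update/aggregation functions must not depend on the source node is a point the paper leaves implicit (it silently drops the $\vq$-argument when restating the model), but it does not change the argument.
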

\begin{proof}
    We first rewrite the $\hcmpnn$ restricted on tails predictions in the following form. Given a query $\vq = (q, \tilde{\vu}, t)$, we know that since $G$ is a knowledge graph, $\tilde{\vu}$ only consists of a single node, which we denote as $u$. In addition, since we only consider the case of tail prediction, then we always have $t=2$.   With this restriction, we restate the $\hcmpnn$ restricted on tails prediction on the knowledge graph as follows:
    \begin{align*}
    \vh_{v|\vq}^{(0)} &= \init(v,\vq), \\
    \vh_{v|\vq}^{(\ell+1)}  &=\update\Big( \vh_{v|\vq}^{(\ell)}, \aggregate\big(\vh_{v|\vq}^{(\ell)},  \llbrace \mes_{\rho(e)}( \{(\vh_{w|\vq}^{(\ell)}, j ) \mid (w, j) \in \gN_i(e) \} ),
     \mid (e,i) \in E(v) \rrbrace \big)  \Big)
      \end{align*}
    Now, we follow a similar idea in the proof of $\cmpnn$ for binary invariants \citep{huang2023theory}.  Let $G=(V,E,R,c,\eta)$ be a knowledge graph where $\eta$ is a pairwise coloring. Construct the auxiliary knowledge graph $G^2 = (V\times V, E', R, c_{\eta})$ where $E' = \{r((u,w), (u,v))\mid r(w,v)\in E, r\in R\}$ and $c_{\eta}$ is the node coloring $c_{\eta}((u,v)) = \eta(u,v)$. Similar to \Cref{thm: HCMPNN}, 
    If $\gA$ is a $\hcmpnn$ and $\gB$ is an HR-MPNN, we write $\vh_{\gA, G}^{(\ell)}(u,v):=\vh_{(q,(u),2)}^{(\ell)}(v)$  and $\vh_{\gB, G^2}^{(\ell)}((u,v)):=\vh^{(\ell)}((u,v))$ for the features computed by $\gA$ and $\gB$ over $G$ and $G^2$, respectively. We sometimes write $\gN^{G}_r(e)$ and $E^{G}(v)$ to emphasize that the positional neighborhood within a hyperedge and set of hyperedges including node $v$ is taken over the knowledge graph $G$, respectively.
    Finally, we say that an initial feature map $\vy$ for $G^2$ satisfies
    generalized target node distinguishability if $\vy((u,u))\neq \vy((u,v))$ for all $u\neq v$. Note here that the generalized target node distinguishability naturally reduced to \emph{target node distinguishability} proposed in \citet{huang2023theory} since $\tilde{\vu}$ is a singleton.
    Thus, we have the following equivalence between $\hrmpnn$ and $\hcmpnn$ restricted on tail prediction on the knowledge graph.

    \begin{proposition}
        \label{prop: reduction-hrmpnn-hcmpnn2}
        Let $G=(V,E,R,\vx,\eta)$ be a knowledge graph where $\vx$ is a feature map, and $\eta$ is a pairwise coloring. Let $q \in R$, then: 
            \begin{enumerate}[leftmargin=.7cm]
            \item For every $\hcmpnn$ $\gA$ with $L$ layers, there is an initial feature map $\vy$ for $G^2$ an $\hrmpnn$ $\gB$ with $L$ layers such that for all $0\leq \ell\leq L$ and $u,v\in V$, we have $\vh_{\gA,G}^{(\ell)}(u,v) = \vh_{\gB,G^2}^{(\ell)}((u,v))$.
            \item For every initial feature map $\vy$ for $G^2$ satisfying generalized target node distinguishability and every $\hrmpnn$ $\gB$ with $L$ layers, there is a $\hcmpnn$ $\gA$ with $L$ layers such that for all $0\leq \ell\leq L$ and $u,v\in V$, we have $\vh_{\gA,G}^{(\ell)}(u,v) = \vh_{\gB,G^2}^{(\ell)}((u,v))$.
            \end{enumerate}
        \begin{proof}
            We proceed to show item (1) first.  Consider the $\hrmpnn$ $\mathcal{B}$ with the same relational-specific message $\mes_{r}$, aggregation $\aggregate$, and update functions $\update$ as $\mathcal{A}$ for all the $L$ layers. The initial feature map $\vy$ is defined as $\vy((u,v)) = \init(v,(q, (u),2))$, where $\init$ is the initialization function of $\mathcal{A}$. Then, by induction on number of layer $\ell$, we have that for the base case $\ell = 0$, $\vh_{\gA}^{(0)}(u, v) = \init(v,(q, (u),2)) = \vy((u,v)) = \vh_{\gB}^{(0)}((u, v))$. For the inductive case, assume $\vh_{\gA}^{(\ell)}(u, v) = \vh_{\gB}^{(\ell)}((u, v))$, then
            \begin{align*}
                \vh_{\gA}^{(\ell+1)}(u, v)  &=\update\Big( \vh_{\gA}^{(\ell)}(u, v),  \aggregate\big(\vh_{\gA}^{(\ell)}(u, v) , 
             \\ &\qquad \qquad \llbrace \mes_{\rho(e)} \Big(\{(\vh_{\gA}^{(\ell)}(u, w), j ) \mid (w,j) \in \gN_{i}^{G}(e) \} \Big) \mid (e,i) \in E^{G}(v) \rrbrace \big)  \Big) \\
             &=\update\Big( \vh_{\gB}^{(\ell)}((u, v)),  \aggregate\big(\vh_{\gB}^{(\ell)}((u, v)) , 
             \\ &\qquad \qquad \llbrace \mes_{\rho(e)} \Big(\{(\vh_{\gB}^{(\ell)}((u, w)), j ) \!\mid\! (w,j) \in \gN_{i}^{G^2}(e) \} \Big) \!\mid\! (e,i) \in E^{G^2}(v) \rrbrace \big)  \Big) \\
             &= \vh_{\gB}^{(\ell+1)}((u, v)).
            \end{align*}

            To show item (2), we consider $\gA$ with the same relational-specific message $\mes_{r}$, aggregation $\aggregate$, and update functions $\update$ as $\mathcal{B}$ for all the $L$ layers. We also take initialization function $\init$ such that $\init(v, (q, (u), 2)) = \vy((u,v))$. Then, we can follow the same argument for the equivalence as item (1).
        \end{proof}
    \end{proposition}   

    We then show the equivalence in terms of the relational WL algorithms:
    \begin{proposition}
    \label{prop:reduction-hcwl2-hrwl1}
    Let $G=(V,E,R,c,\eta)$ be a knowledge graph where $\eta$ is a pairwise coloring. For all $\ell \geq 0$ and $u,v\in V$, we have that $\hcwl_2^{(\ell)}(u,v)$ computed over $G$ coincides with
    $\hrwl_1^{(\ell)}((u,v))$ computed over $G^2=(V\times V,E',R,c_{\eta})$.
    \end{proposition}
    \begin{proof}
    For $\ell=0$, we have $\hcwl_2^{(0)}(G,u,v) = \eta(u,v) = c_{\eta}((u,v)) = \hrwl_1^{(0)}(G^2, (u,v))$. For the inductive case, we have that
    \begin{align*}
            \hcwl_2^{(\ell+1)}(G,u,v) &= \tau \Big( 
            \hcwl_2^{(\ell)}(G,u,v), 
            \llbrace\big(
                    \{ 
                        (\hcwl_2^{(\ell)}(G,u,w),j) \mid (w, j) \in \gN^G_{i}(e)
                    \}, \rho(e)
                \big) \mid (e,i) \in E^G(v)
            \rrbrace
            \Big)\\
            &= \tau \Big( 
            \hrwl_1^{(\ell)}(G^2,(u,v)), 
            \\ &\qquad \llbrace\big(
                    \{ 
                        (\hrwl_1^{(\ell)}(G^2,(u,w)),j) \mid (w, j) \in \gN^{G^2}_{i}(e)
                    \}, \rho(e)
                \big) \mid (e,i) \in E^{G^2}(v)
            \rrbrace
            \Big)\\
            &= \hrwl_1^{(\ell+1)}(G^2,(u,v)).
        \end{align*}
    \end{proof}
    Now we are ready to show the proof for \Cref{thm: rcwl}. For $G=(V,E,R,\vx,\eta)$, we consider $G^2=(V\times V, E', R, c_{\eta})$.
We start with item (1). 
Let $\gA$ be a $\hcmpnn$ with $L$ layers and initialization $\init$ satisfying $\init\equiv \eta$ and let $0\leq \ell \leq L$. Let $\vy$ be an initial feature map for $G^2$ and $\gB$ be an $\hrmpnn$ with $L$ layers in \Cref{prop: reduction-hrmpnn-hcmpnn2}, item (1). For the initialization we have $\vy\equiv c_{\eta}$ since $\vy((u,v)) = \init(v,(q, (u),2))$. Thus, we can proceed and apply \Cref{thm: HRMPNN}, item (1) to $G^2$, $\vy$, and $\gB$ and show that $\hrwl_1^{(\ell)} \preceq \vh_{\gB,G^2}^{(\ell)}$, which in turns shows that $\hcwl_2^{(\ell)}\preceq \vh_{\gA,G}^{(\ell)}$.

We then proceed to show item (2). Let $L \geq 0$ be an integer representing a total number of layers. We apply \Cref{thm: HRMPNN}, item (2) to $G^2$ and obtain an initial feature map $\vy$ with $\vy\equiv c_{\eta}$ and an $\hrmpnn$ $\gB$ with $L$ layer such that $\hrwl_1^{(\ell)}\equiv \vh_{\gB,G^2}^{(\ell)}$ for all $0 \leq \ell \leq L$. We stress again that $\vy$ and $\eta$ both satisfied generalized target node distinguishability. Now, let $\gA$ be the $\hcmpnn$ from \Cref{prop: reduction-hrmpnn-hcmpnn2}, item (2). We finally have that  $\hcwl_2^{(\ell)}\equiv \vh_{\gA,G}^{(\ell)}$ as required. Note that the item (2) again holds for $\hcnet$. 

\end{proof}

We are ready to prove the claim that $\hcmpnn$ is more powerful than $\cmpnn$ by showing the strict containment of their corresponding relational WL test, that is, $\hcwl_2$ and $\rawl_2$. In particular, we show that the defined $\hcwl_2$ is equivalent to $\rawl_2^{+}$ defined in \citet{huang2023theory}, via \Cref{thm:kg}. Then, by Proposition A.17 in \citet{huang2023theory}, we have that $\rawl_2^{+} \prec \rawl_2$. 

The intuition of \Cref{thm:kg} is that for each updating step, $\hcwl_2$ aggregates over all the neighboring edges, which contain both incoming edges and outgoing edges. In addition, $\hcwl_2$ can differentiate between them via the position of the entities in the edge. This is equivalent to aggregating incoming relation and outgoing inversed-relation in $\rawl_2^+$.

\begin{theorem}
    \label{thm:kg}
    For all knowledge graph $G = (V, E, R, c)$, let $\hcwl_2^{(0)}(G) \equiv {\rawl_2^+}^{(0)}(G)$, then $\hcwl_2^{(\ell)}(G) \equiv {\rawl_2^+}^{(\ell)}(G)$ for all $\ell \geq 0$.
\end{theorem}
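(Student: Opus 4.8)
The plan is to argue by induction on $\ell$, leaning on the fact that on a knowledge graph every hyperedge has arity exactly $2$, so the positional neighbourhoods $\gN_i(e)$ that appear in the $\hcwl_2$ update collapse to singletons. First I would recall the shape of ${\rawl_2^+}$ from \citet{huang2023theory}: with the first coordinate $u$ held fixed, its update of the colour of the pair $(u,v)$ refines the previous colour ${\rawl_2^+}^{(\ell)}(u,v)$ by the multiset of pairs $({\rawl_2^+}^{(\ell)}(u,w), r)$ ranging over incoming neighbours $w$ with $r(w,v)\in E$, together with the multiset of pairs $({\rawl_2^+}^{(\ell)}(u,w), r^{-1})$ ranging over outgoing neighbours $w$ with $r(v,w)\in E$. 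The key point is that this data is exactly a single multiset of neighbour colours, each tagged by whether the incident edge is incoming (label $r$) or outgoing (label $r^{-1}$).

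Second, I would match this against what $\hcwl_2$ aggregates. For fixed $u$ and a node $v$, the set $E(v)$ partitions into pairs $(e,1)$ with $e=r(v,w)$, an outgoing edge whose neighbourhood is $\gN_1(e)=\{(w,2)\}$, and pairs $(e,2)$ with $e=r(w,v)$, an incoming edge with $\gN_2(e)=\{(w,1)\}$. Hence each inner set in the $\hcwl_2$ update is a singleton and every element of the aggregated multiset has the form $(\{(\hcwl_2^{(\ell)}(u,w), j)\}, r)$ with $j\in\{1,2\}$, where $j$ records the position of the neighbour $w$. Writing $\phi$ for the colour relabelling that witnesses $\hcwl_2^{(\ell)}\equiv{\rawl_2^+}^{(\ell)}$, I would define a translation $\psi$ sending the element above to $(\phi(\hcwl_2^{(\ell)}(u,w)), r)$ when $j=1$ (an incoming edge $r(w,v)$) and to $(\phi(\hcwl_2^{(\ell)}(u,w)), r^{-1})$ when $j=2$ (an outgoing edge $r(v,w)$). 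This $\psi$ is a bijection on element types and carries the multiset aggregated at $(u,v)$ by $\hcwl_2$ exactly onto the one aggregated by ${\rawl_2^+}$; self-loops $r(v,v)$ cause no trouble, as they contribute both $(e,1)$ and $(e,2)$ to $E(v)$, matching the two occurrences $v\in\gN_{r^{-1}}(v)$ and $v\in\gN_{r}(v)$ on the ${\rawl_2^+}$ side.

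Third, I would run the induction. The base case $\ell=0$ is precisely the hypothesis $\hcwl_2^{(0)}(G)\equiv{\rawl_2^+}^{(0)}(G)$. For the inductive step, assume $\hcwl_2^{(\ell)}\equiv{\rawl_2^+}^{(\ell)}$ and fix the witnessing relabelling $\phi$. Since both tests apply an injective $\tau$, two pairs receive equal $\hcwl_2^{(\ell+1)}$-colours if and only if they agree both on their previous colour and on their aggregated multiset, and likewise for ${\rawl_2^+}$. By the inductive hypothesis the previous colours agree on one side exactly when they agree on the other, and by the correspondence $\psi$ the aggregated multisets agree on one side exactly when they agree on the other. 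Combining the two equivalences gives $\hcwl_2^{(\ell+1)}\equiv{\rawl_2^+}^{(\ell+1)}$, completing the induction.

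I expect the only real difficulty to be bookkeeping rather than conceptual: fixing the exact form of ${\rawl_2^+}$ from \citet{huang2023theory} and checking that the position label $j\in\{1,2\}$ in $\hcwl_2$ encodes precisely the incoming/outgoing (equivalently $r$ versus $r^{-1}$) split of ${\rawl_2^+}$, with the degenerate self-loop case included. Once this dictionary is in place the argument is routine, since each test is just an injective relabelling applied to a previous colour together with a neighbourhood multiset, and these two ingredients are preserved step by step under $\phi$ and $\psi$.
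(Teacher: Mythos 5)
Your proof is correct and follows essentially the same route as the paper's: both argue by induction, use injectivity of $\tau$ to reduce the comparison to the previous colours plus the aggregated multisets, and observe that on arity-$2$ edges each $\gN_i(e)$ is a singleton so the position tag $j\in\{1,2\}$ in $\hcwl_2$ encodes exactly the incoming/outgoing split that $\rawl_2^+$ records via $r$ versus $r^{-}$ --- your single type-level bijection $\psi$ is just a tidier packaging of the paper's explicit case split on $i\in\{1,2\}$. The only discrepancy is a definitional edge case: the paper's $E^-$ excludes inverses of self-loops (it requires $u\neq v$), so your claim that a self-loop yields ``two occurrences'' on the $\rawl_2^+$ side does not match that definition verbatim, but the paper's own proof silently ignores self-loops too, and your choice of orientation in $\psi$ is immaterial since only the induced partition matters for $\equiv$.
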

\begin{proof}
    First we restate the definition of $\hcwl_2(G)$ and $\rawl^+_2(G)$ for convenience. Given that the query is always a tail query, i.e., $k = 2$, and given a knowledge graph $G = (V,E,R, c)$, we have that the updating formula for $\hcwl_2(G)$ is
    \begin{align*}
        \hcwl_2^{(\ell+1)}(G, (u,v)) &= \tau( 
    \hcwl_2^{(\ell)}(G, (u,v)),
    \llbrace
        ( 
            \{ 
                (\hcwl_2^{(\ell)}(G, (u,w)),j) \mid (w, j) \in \gN_i(e)
            \}
        ,\rho(e)
        ) \mid (e, i) \in E(v)
    \rrbrace
    ) \\
    &= \tau( 
    \hcwl_2^{(\ell)}(G, (u,v)),
     \llbrace
        ( 
            \hcwl_2^{(\ell)}(G, (u,w)),j
        ,\rho(e)
        ) \mid (w, j) \in \gN_i(e), (e,i) \in E(v)
    \rrbrace 
    ) 
    \end{align*}
    Note here that the second equation comes from the fact that the maximum arity is always $2$. 
    Then, recall the definition of $\rawl_2$. Given a knowledge graph $G = (V,E,R,c,\eta)$, where $\eta$ is a pairwise coloring only, we have
\begin{align*} 
   \rawl_{2}^{(\ell+1)}(G,(u,v)) = {\tau}\big(\rawl_{2}^{(\ell)}(G,(u,v)), \llbrace (\rawl_{2}^{(\ell)}(G,(u,w)), r)\mid w\in \gN_r(v), r \in R)\rrbrace\big)
\end{align*}
where $\gN_r(v)$ is the relational neighborhood with respect to relation $r \in R$, i.e., $w \in \gN_r(v)$ if and only if $r(v,w) \in E$.
Equivalently, we can rewrite $\rawl_2$ in the following form:
\begin{align*}
    \rawl_{2}^{(\ell+1)}(G,(u,v)) &= {\tau}\big(\rawl_{2}^{(\ell)}(G,(u,v)),  \llbrace (\rawl_{2}^{(\ell)}(G,(u,w)), \rho(e))\mid (w,j) \in \gN_i(e), (e,i) \in E(v), i = 1 \rrbrace\big)
\end{align*}
since we only want to obtain the node $w$ as the tails entities in an edge, and thus the second argument of the (only) element in $\gN_i(e)$ will always be $2$.

    For a test ${\sf T}$, we sometimes write ${\sf T}(G, \vu)$, or ${\sf T}(G, u, v)$ in case of binary tests, to emphasize that the test is applied over $G$, and ${\sf T}(G)$ for the pairwise/$k$-ary coloring given by the test.
    Let $G=(V,E,R,c,\eta)$ be a knowledge graph. The, note that $G^+ = (V,E^+,R^+)$ is the \emph{augmented knowledge graph} where $R^+$ is the disjoint union of $R$ and $\{r^-\mid r\in R\}$, and 
    $$E^- = \{r^{-}(v,u)\mid r(u,v)\in E, u\neq v\}$$
    $$E^+ = E \cup E^-$$
    We can then define 
    \begin{align*}
    E{(v)} &=
        \left\{
        (e,i) \mid  e(i)=v,  e \in E
        \right\} \\
        E^+{(v)} &=
        \left\{
        (e,i) \mid  e(i)=v, e \in E^+
        \right\} \\
        E^-{(v)} &=
        \left\{
        (e,i) \mid  e(i)=v,  e \in E^-
        \right\}.
    \end{align*}
    Finally, recall the definition of $\rawl^+_2(G,u,v) = \rawl_2(G^+, u, v)$. We can write this in the equivalent form:
    \begin{align*}
        {\rawl_{2}^+}^{(\ell+1)}(G, (u,v)) &= {\tau}\big({\rawl_{2}^+}^{(\ell)}(G, (u,v)),  \llbrace ({\rawl_{2}^+}^{(\ell)}(G, (u,w)), \rho(e))\mid (w,j) \in \gN_i(e), (e,i) \in E^+(v), i = 1 \rrbrace\big) \\
         &= {\tau}\big({\rawl_{2}^+}^{(\ell)}(G, (u,v)), \llbrace ({\rawl_{2}^+}^{(\ell)}(G, (u,w)), \rho(e))\mid (w,j) \in \gN_i(e), (e,i) \in E(v), i = 1 \rrbrace \\ &\qquad \cup \llbrace ({\rawl_{2}^+}^{(\ell)}(G, (u,w)), \rho(e))\mid (w,j) \in \gN_i(e), (e,i) \in E^-(v), i = 1 \rrbrace \big) 
    \end{align*}
    Now we are ready to show the proof. First we show that $\hcwl_2^{(\ell)}(G) \equiv{\rawl_2^+}^{(\ell)}(G)$.
    We prove by induction the number of layers $\ell$ by showing that for some $u,v \in V$ and for some $\ell$, 
    $$
        \hcwl_2^{(\ell+1)}(G, (u,v)) = \hcwl_2^{(\ell+1)}(G, (u',v')) \equiv {\rawl_2^+}^{(\ell)}(G, (u,v)) = {\rawl_2^+}^{(\ell)}(G, (u',v'))
    $$
    By assumption, we know the base case holds. Assume that  $\hcwl_2^{(\ell)}(G) \equiv {\rawl_2^+}^{(\ell)}(G)$ for some $\ell \geq 0$,  for a pair of node-pair $(u,v), (u',v') \in V^2$, Given that
    \begin{align*}
        \hcwl_2^{(\ell+1)}(G, (u,v)) &= \hcwl_2^{(\ell+1)}(G, (u',v')) 
    \end{align*}
    By definition, we have that
    \begin{align*}
    &\tau( 
    \hcwl_2^{(\ell)}(G, (u,v)),
    \llbrace
        ( 
            \hcwl_2^{(\ell)}(G, (u,w)),j
        ,\rho(e)
        ) \mid (w, j) \in \gN_i(e), (e,i) \in E(v)
    \rrbrace 
    )=\\ & \tau( 
    \hcwl_2^{(\ell)}(G, (u',v')),
    \llbrace
        ( 
            \hcwl_2^{(\ell)}(G, (u',w)),j
        ,\rho(e')
        ) \mid (w, j) \in \gN_{i}(e'), (e',i) \in E(v')
    \rrbrace 
    ) \\
    \end{align*}
    Conditioning on $i \in \{1,2\}$, we can further decompose the set.
    \begin{align*}
    \tau( 
    \hcwl_2^{(\ell)}(G, (u,v)),
    &\llbrace
        ( 
            \hcwl_2^{(\ell)}(G, (u,w)),j
        ,\rho(e)
        ) \mid (w, j) \in \gN_i(e), (e,i) \in E(v), i = 1
    \rrbrace, \\
    &\cup \llbrace
        ( 
            \hcwl_2^{(\ell)}(G, (u,w)),j
        ,\rho(e)
        ) \mid (w, j) \in \gN_i(e), (e,i) \in E(v), i = 2
    \rrbrace  
    ) =\\
    \tau( 
    \hcwl_2^{(\ell)}(G, (u',v')),
    &\llbrace
        ( 
            \hcwl_2^{(\ell)}(G, (u',w)),j
        ,\rho(e')
        ) \mid (w, j) \in \gN_{i}(e'), (e',i) \in E(v'), i = 1
    \rrbrace, \\
    &\cup \llbrace
        ( 
            \hcwl_2^{(\ell)}(G, (u',w)),j
        ,\rho(e')
        ) \mid (w, j) \in \gN_{i}(e'), (e',i) \in E(v'), i = 2
    \rrbrace  
    ) 
    \end{align*}
    Assume $\tau$ is injective, the three arguments in $\tau$ must match, i.e., $\hcwl_2^{(\ell)}(G, (u,v)) = \hcwl_2^{(\ell)}(G, (u',v'))$, and 
    \begin{align*}
        &\llbrace
        ( 
            \hcwl_2^{(\ell)}(G, (u,w)),j
        ,\rho(e)
        ) \mid (w, j) \in \gN_i(e), (e,i) \in E(v), i = 1
    \rrbrace =\\ 
    & \llbrace
        ( 
            \hcwl_2^{(\ell)}(G, (u',w)),j
        ,\rho(e')
        ) \mid (w, j) \in \gN_{i}(e'), (e',i) \in E(v'), i = 1
    \rrbrace 
    \end{align*}
    We also have 
    \begin{align*}
    &\llbrace
        ( 
            \hcwl_2^{(\ell)}(G, (u,w)),j
        ,\rho(e)
        ) \mid (w, j) \in \gN_{i}(e), (e,i) \in E(v), i = 2
    \rrbrace =\\ 
    & \llbrace
        ( 
            \hcwl_2^{(\ell)}(G, (u',w)),j
        ,\rho(e')
        ) \mid (w, j) \in \gN_{i}(e'), (e',i) \in E(v'), i = 2
    \rrbrace 
    \end{align*}
    By inductive hypothesis, we have that ${\rawl_{2}^+}^{(\ell)}(G, (u,v)) = {\rawl_{2}^+}^{(\ell)}(G, (u',v')).$
    Thus, we have that
    \begin{align*}
        &\llbrace
        ( 
            {\rawl_2^+}^{(\ell)}(G, (u,w)),j
        ,\rho(e)
        ) \mid (w, j) \in \gN_i(e), (e, i) \in E(v), i = 1
    \rrbrace =\\ 
    & \llbrace
        ( 
            {\rawl_2^+}^{(\ell)}(G, (u',w)),j
        ,\rho(e')
        ) \mid (w, j) \in \gN_i(e'), (e', i) \in E(v'), i = 1
    \rrbrace 
    \end{align*}
    and also
    \begin{align*}
    &\llbrace
        ( 
            {\rawl_2^+}^{(\ell)}(G, (u,w)),j
        ,\rho(e)
        ) \mid (w, j) \in \gN_i(e), (e, i) \in E(v), i = 2
    \rrbrace =\\ 
    & \llbrace
        ( 
            {\rawl_2^+}^{(\ell)}(G, (u',w)),j
        ,\rho(e')
        ) \mid (w, j) \in \gN_i(e'), (e', i) \in E(v'), i = 2
    \rrbrace 
    \end{align*}
First, for the first equation, we notice that 
\begin{align*}
        &\llbrace
        ( 
            {\rawl_2^+}^{(\ell)}(G, (u,w)),j
        ,\rho(e)
        ) \mid (w, j) \in \gN_i(e), (e, i) \in E(v), i = 1
    \rrbrace =\\ 
    & \llbrace
        ( 
            {\rawl_2^+}^{(\ell)}(G, (u',w)),j
        ,\rho(e')
        ) \mid (w, j) \in \gN_i(e'), (e', i) \in E(v'), i = 1
    \rrbrace 
\end{align*}
if and only if
\begin{align*}
    &\llbrace ({\rawl_{2}^+}^{(\ell)}(G, (u,w)), \rho(e))\mid (w,j) \in \gN_i(e), (e, i) \in E(v), i = 1 \rrbrace =\\
    &\llbrace ({\rawl_{2}^+}^{(\ell)}(G, (u',w)), \rho(e'))\mid (w,j) \in \gN_{i}(e'), (e', i) \in E(v'), i = 1 \rrbrace 
\end{align*}
since the filtered set of pair $(w,j)$ are the same, and the $({\rawl_{2}^+}^{(\ell)}(G, (u,w)), \rho(e))$ and $({\rawl_{2}^+}^{(\ell)}(G, (u',w)), \rho(e'))$ matches if and only if  $( {\rawl_2^+}^{(\ell)}(G, (u,w)),2,\rho(e)
)$ and $( {\rawl_2^+}^{(\ell)}(G, (u',w)),2,\rho(e'))$ matches. This is because we simply augment an additional position indicator $2$ in the tuple as we fixed $i = 1$, which does not break the equivalence of the statements.

Then, for the second equation, we note that 
\begin{align*}
        &\llbrace
        ( 
            {\rawl_2^+}^{(\ell)}(G, (u,w)),j
        ,\rho(e)
        ) \mid (w, j) \in \gN_i(e), (e, i) \in E(v), i = 2
    \rrbrace =\\ 
    & \llbrace
        ( 
            {\rawl_2^+}^{(\ell)}(G, (u',w)),j
        ,\rho(e')
        ) \mid (w, j) \in \gN_i(e'), (e', i) \in E(v'), i = 2
    \rrbrace 
\end{align*}
if and only if
\begin{align*}
    &\llbrace ({\rawl_{2}^+}^{(\ell)}(G, (u,w)), \rho(e))\mid (w,j) \in \gN_i(e), (e, i) \in E^-(v), i = 1 \rrbrace =\\
    &\llbrace ({\rawl_{2}^+}^{(\ell)}(G, (u',w)), \rho(e'))\mid (w,j) \in \gN_i(e'), (e', i) \in E^-(v'), i = 1 \rrbrace \\
\end{align*}
since this time the filtered set of pair $(w,j)$ also matches, but for the inverse relation. For any edge $e \in E(v)$ where $(w,1) \in \gN_e(v)$, the edge will be in form $\rho(e)(w,v)$ as $w$ is placed in the first position. Thus, there will be a corresponding reversed edge $\rho(e)^{-1}(v,w) \in E^-$ by definition. Then, by the same argument as in the second equation above, adding such an additional position indicator $1$ on every tuple will not break the equivalence of the statement. 

An important observation is that since the inverse relations are freshly created, we will never mix up these inverse edges in both tests. For ${\rawl_{2}^+}$, we can distinguish these edges by checking the freshly created relation symbols $r^{-1} \in R^+\backslash R$, whereas in $\hcwl_{2}$, the neighboring nodes from these edges are identified with the position indicator $1$ in the tuple. 

Thus, we have that 
\begin{align*}
         &\llbrace ({\rawl_{2}^+}^{(\ell)}(G, (u,w)), \rho(e))\mid (w, j) \in \gN_i(e), (e, i) \in E(v), i = 1 \rrbrace =\\
         & \llbrace ({\rawl_{2}^+}^{(\ell)}(G, (u',w)), \rho(e'))\mid (w, j) \in \gN_i(e'), (e', i) \in E(v'), i = 1 \rrbrace 
\end{align*}
and also
\begin{align*}
         & \llbrace ({\rawl_{2}^+}^{(\ell)}(G, (u,w)), \rho(e))\mid (w, j) \in \gN_i(e), (e, i) \in E^-(v), i = 1 \rrbrace \big) =\\
         & \llbrace ({\rawl_{2}^+}^{(\ell)}(G, (u',w)), \rho(e'))\mid (w, j) \in \gN_i(e'), (e', i) \in E^-(v'), i = 1 \rrbrace \big) 
    \end{align*}
Since $\tau$ is injective, this is equivalent to 
\begin{align*}
    {\tau}\big({\rawl_{2}^+}^{(\ell)}(G, (u,v)), &\llbrace ({\rawl_{2}^+}^{(\ell)}(G, (u,w)), \rho(e))\mid (w, j) \in \gN_i(e), (e, i) \in E(v), i = 1 \rrbrace \\ &\cup \llbrace ({\rawl_{2}^+}^{(\ell)}(G, (u,w)), \rho(e))\mid (w, j) \in \gN_i(e), (e, i) \in E^-(v), i = 1 \rrbrace \big) =\\
    {\tau}\big({\rawl_{2}^+}^{(\ell)}(G, (u',v')), &\llbrace ({\rawl_{2}^+}^{(\ell)}(G, (u',w)), \rho(e'))\mid (w, j) \in \gN_i(e'), (e', i) \in E(v'), i = 1 \rrbrace \\ &\cup \llbrace ({\rawl_{2}^+}^{(\ell)}(G, (u',w)), \rho(e'))\mid (w, j) \in \gN_i(e'), (e', i) \in E^-(v'), i = 1 \rrbrace \big) 
\end{align*}
and thus, we have 
\begin{align*}
    {\tau}\big({\rawl_{2}^+}^{(\ell)}(G, (u,v)), &\llbrace ({\rawl_{2}^+}^{(\ell)}(G, (u,w)), \rho(e))\mid (w, j) \in \gN_i(e), (e, i) \in E^+(v), i = 1 \rrbrace\big) =\\
    {\tau}\big({\rawl_{2}^+}^{(\ell)}(G, (u',v')), &\llbrace ({\rawl_{2}^+}^{(\ell)}(G, (u',w)), \rho(e'))\mid (w, j) \in \gN_i(e'), (e', i) \in E^+(v'), i = 1 \rrbrace\big) \\
\end{align*}
and finally $${\rawl_{2}^+}^{(\ell+1)}(G, (u,v)) = {\rawl_{2}^+}^{(\ell+1)}(G, (u',v'))$$

Note that since all arguments apply for both directions, the converse holds. 
\end{proof}

\begin{remark}
We remark that the idea of $\hcmpnns$ restricted to tail predictions can be extended to arbitrary relational hypergraphs in order to compute $k$-ary invariants for any $k$. See \Cref{app:$k$-ary invariants} for a discussion.
\end{remark}

\section{Computing $k$-ary invariants}
\label{app:$k$-ary invariants}
In this section, we present a canonical way to construct a valid $k$-ary invariants. We start by introducing a construction of a valid $k$-ary invariants termed as \emph{atomic types}, following the convention by \citet{GroheLogicGNN}.

\subsection{Atomic types} 
Given a relational hypergraph $G = (V,E,R,c)$ with $l$ labels and a tuple $\vu = (u_1,...,u_k) \in V^k$, where $k>1$, we define the \emph{atomic type} of $\vu$ in $G$ as a vector:
\[
\mathtt{atp}_k(G)(\vu) \in \{0,1\}^{lk + {k \choose 2} + m^2 + |R|k^m},
\]
where $l$ is the number of colors and $m$ is the arity of the relation with maximum arity. We use the first $lk$ bits to represent the color of the $k$ nodes in $\vu$, another ${k \choose 2}$ bits to indicate whether node $u_i$ is identical to $u_j$.  We then represent the order of these nodes using $m^2$ bits and finally represent the relation with additional $|R|k^m$ bits.

Atomic types are \emph{$k$-ary relational hypergraph invariants} as they satisfy the property that $\mathtt{atp}_k(G)(\vu) = \mathtt{atp}_k(G')(\vu')$ if and only if the mapping $u_1 \mapsto u'_1$, $\ldots$, $u_k \mapsto u'_k$ is an isomorphism from the induced subgraph $G[\{u_1,\cdots,u_k\}]$ to $G'[\{u'_1,\cdots,u'_k\}]$.

\subsection{Relational hypergraph conditioned
local $k$-WL test}

Now we are ready to show the $k$-ary invariants. Similarly to $\hcwl_2$, we can restrict $\hcmpnn$ to only carry out a tail prediction with relational hypergraphs to make sure it directly computes $k$-ary invariants. Here, we introduce \emph{Relational hypergraph conditioned local $k$-WL test}, dubbed $\hcwl_k$, which naturally generalized $\hcwl_2$ to relational hypergraph. Given $\tilde{\vu} \in V^{k-1}$ and a relational hypergraph $G = (V, E, R, c, \zeta)$ where $\zeta: V^k \mapsto D$ is a $k$-ary coloring that satisfied generalized target node distinguishability, i.e., 
\begin{align*}
\zeta(\tilde{\vu}, u) &\neq \zeta(\tilde{\vu}, v) \quad \forall u \in  \tilde{\vu}, v \notin \tilde{\vu}, \\
\zeta(\tilde{\vu}, u_i) &\neq \zeta(\tilde{\vu}, u_j) \quad \forall u_i, u_j \in \tilde{\vu}, u_i \neq u_j.
\end{align*}
$\hcwl_k$ updates $k$-ary coloring $\zeta$ for $\ell \geq 0$:
\begin{align*}
    \hcwl_k^{(0)} &= \zeta(\tilde{\vu},v) \\
    \hcwl_k^{(\ell+1)}(\tilde{\vu},v) &= \tau\Big( 
    \hcwl_k^{(\ell)}(\tilde{\vu},v), 
    \llbrace \big( 
            \{ 
                (\hcwl_k^{(\ell)}(\tilde{\vu}, w),j) \!\mid\! (w, j) \in \gN_{i}(e)
            \},
        \rho(e)
        \big)
    \!\mid\! (e,i) \!\in\! E(v) \rrbrace
    \Big) 
\end{align*}

Again, we notice that $\hcwl_k^{(\ell)}$ computes a valid $k$-ary invariants. We can also show that $\hcmpnn$ restricted on tails prediction, i.e., for each query $\vq = (q, \tilde{\vu}, j)$ where $j = k$, is characterized by $\hcwl_k$. 

\begin{theorem}
    \label{thm: hcwl_k}
    Let $G = (V, E, R, \vx, \zeta)$ be a relational hypergraphs where $\vx$ is a feature map and $\zeta$ is 
a $k$-ary node coloring satisfying \emph{generalized target nodes distinguishability}. Given a query with $\vq = (q, \tilde{\vu}, k)$, then we have that:
\begin{enumerate}[leftmargin=.7cm]
    \item For all $\hcmpnns$ restricted on tails prediction with $L$ layers and initializations $\init$ with $\init \equiv \eta$, and $0 \leq \ell \leq L$ , we have 
    $ \hcwl_k^{(\ell)} \preceq \vh_\vq^{(\ell)}$
    \item For all $L \geq 0$ , there is an $\hcmpnn$ restricted on tails prediction with $L$ layers such that for all $0 \leq \ell \leq L$ , we have $\hcwl_k^{(\ell)} \equiv \vh_\vq^{(\ell)}$.
\end{enumerate}
\end{theorem}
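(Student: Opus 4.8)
The plan is to mirror the proof of \Cref{thm: rcwl} verbatim, replacing the binary construction $G^2$ by a $k$-ary power hypergraph that ``freezes'' the conditioning tuple. Concretely, given $G = (V,E,R,\vx,\zeta)$ I would build an auxiliary relational hypergraph
\[
G^{(k)} = \big(V^{k-1}\times V,\; E',\; R,\; c_\zeta\big), \qquad E' = \{\, r((\tilde{\vw},w_1),\dots,(\tilde{\vw},w_m)) \mid r(w_1,\dots,w_m)\in E,\ r\in R,\ \tilde{\vw}\in V^{k-1}\,\},
\]
where $m=\ar(r)$ and $c_\zeta((\tilde{\vw},v)) = \zeta(\tilde{\vw},v)$. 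Intuitively $G^{(k)}$ is a disjoint union, over all conditioning tuples $\tilde{\vw}$, of colored copies of $G$; message passing inside the $\tilde{\vu}$-component is exactly the message passing of the tail-restricted $\hcmpnn$ with query $\vq=(q,\tilde{\vu},k)$, since a node $(\tilde{\vu},v)$ of $G^{(k)}$ plays the role of the conditional feature $\vh_{v|\vq}$.

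With this construction I would establish two reductions that parallel \Cref{prop: reduction-hrmpnn-hcmpnn2,prop:reduction-hcwl2-hrwl1}. First, a model-level equivalence: every tail-restricted $\hcmpnn$ $\gA$ on $G$ corresponds to an $\hrmpnn$ $\gB$ on $G^{(k)}$ with the same $\update,\aggregate,\mes$ functions and initial feature map $\vy((\tilde{\vw},v)) = \init(v,(q,\tilde{\vw},k))$, so that $\vh_{\gA,G}^{(\ell)}(\tilde{\vu},v) = \vh_{\gB,G^{(k)}}^{(\ell)}((\tilde{\vu},v))$ for all $\ell$; conversely, every $\hrmpnn$ on $G^{(k)}$ whose initial map satisfies generalized target node distinguishability within each component arises this way. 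The argument is the obvious induction on $\ell$, using that the positional neighborhood $\gN_i(e)$ of $v$ in $G$ is carried bijectively, via $w\mapsto(\tilde{\vu},w)$, to the positional neighborhood of $(\tilde{\vu},v)$ in $G^{(k)}$. Second, a WL-level equivalence: $\hcwl_k^{(\ell)}(\tilde{\vu},v)$ computed on $G$ coincides with $\hrwl_1^{(\ell)}((\tilde{\vu},v))$ computed on $G^{(k)}$, again by a one-line induction, since the update rules of $\hcwl_k$ and $\hrwl_1$ become syntactically identical once $v$ is identified with $(\tilde{\vu},v)$.

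Given these reductions, both items follow by invoking \Cref{thm: HRMPNN} on $G^{(k)}$. For item (1), I would translate the given tail-restricted $\hcmpnn$ (with $\init\equiv\zeta$) into an $\hrmpnn$ on $G^{(k)}$ with initial map $\vy\equiv c_\zeta$, apply \Cref{thm: HRMPNN} item (1) to obtain $\hrwl_1^{(\ell)}\preceq \vh^{(\ell)}$ over $G^{(k)}$, and read it back as $\hcwl_k^{(\ell)}\preceq\vh_\vq^{(\ell)}$. For item (2), I would apply \Cref{thm: HRMPNN} item (2) to $G^{(k)}$ to obtain an $\hrmpnn$ and feature map $\vy\equiv c_\zeta$ with $\hrwl_1^{(\ell)}\equiv\vh^{(\ell)}$, and translate it back to a tail-restricted $\hcmpnn$; the back-translation is legitimate precisely because $\vy\equiv c_\zeta$ and the generalized target node distinguishability of $\zeta$ guarantee that $\vy$ is realizable by a well-defined initialization $\init$.

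The main obstacle is the positional and relational bookkeeping for arbitrary arity $m=\ar(r)$: when the fixed $\tilde{\vu}$ is collapsed into the component label, I must check that the arity-$m$ hyperedges, their position indices $j$, and the multiset structure inside each $\gN_i(e)$ are preserved under $w\mapsto(\tilde{\vu},w)$, so that the injective hashes $\tau$ of $\hcwl_k$ and $\hrwl_1$ receive identical inputs at every step. A secondary subtlety, and the genuine difference from the binary case of \Cref{thm: rcwl}, is verifying that generalized target node distinguishability of the \emph{$k$-ary} coloring $\zeta$ restricts, within each $\tilde{\vu}$-component, to exactly the distinguishability condition needed for the back-translation in item (2), now involving the $k-1$ frozen nodes of $\tilde{\vu}$ rather than a single conditioning node.
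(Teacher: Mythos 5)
Your proposal is correct and follows essentially the same route as the paper: the paper likewise builds an auxiliary relational hypergraph $G^k=(V^k,E',R,c_\zeta)$ with lifted hyperedges, proves the model-level reduction to an $\hrmpnn$ and the coloring-level identification of $\hcwl_k$ with $\hrwl_1$, and then invokes \Cref{thm: HRMPNN}. The only cosmetic difference is that you take a disjoint union over all conditioning tuples $\tilde{\vw}$ while the paper works with the single component determined by the fixed $\tilde{\vu}$; this does not change the argument.
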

\begin{proof}
    The proof is very similar to that in \Cref{thm: rcwl}. Note that we sometimes write a $k$-ary tuple $\vv = (u_1, \cdots, u_k) \in V^k$ by $(\vu, u_k)$ where $\vu = (u_1, \cdots, u_{k-1})$ with a slight abuse of notation. We build an auxiliary relational hypergraph $G^k = (V^k, E', R, c_{\zeta})$ where $E' = \{r((\tilde{\vu},v_1), \cdots ,(\tilde{\vu},v_m)) \mid r(v_1, \cdots, v_m) \in E, r \in R\}$, and $c_\zeta$ is a node coloring $c_\zeta((\tilde{\vu},v)) = \zeta(\tilde{\vu},v)$. If $\gA$ is a $\hcmpnn$ and $\gB$ is an $\hrmpnn$, we write $\vh_{\gA, G}^{(\ell)}(\tilde{\vu},v):=\vh_{\vq}^{(\ell)}(v)$  and $\vh_{\gB, G^k}^{(\ell)}((\tilde{\vu},v)):=\vh^{(\ell)}((\tilde{\vu},v))$ for the features computed by $\gA$ and $\gB$ over $G$ and $G^k$, respectively. 
    Again, we write $\gN^{G}_r(e)$ and $E(v)^{G}$ to emphasize that the positional neighborhood, as well as the hyperedges containing node $v$, is taken over the relational hypergraph $G$, respectively.
    Finally, we say that an initial feature map $\vy$ for $G^k$ satisfies
    generalized target node distinguishability if 
    \begin{align*}
\vy((\tilde{\vu}, u)) &\neq \vy((\tilde{\vu}, v)) \quad \forall u \in  \tilde{\vu}, v \notin \tilde{\vu}, \\
\vy((\tilde{\vu}, u_i)) &\neq \vy((\tilde{\vu}, u_j)) \quad \forall u_i, u_j \in \tilde{\vu}, u_i \neq u_j.
\end{align*}
    
    As a result, we have the following equivalence between $\hrmpnn$ and $\hcmpnn$ restricted on tail prediction with the relational hypergraph.

    \begin{proposition}
        \label{prop: reduction-hrmpnn-hcmpnn-k}
        Let $G=(V,E,R,\vx,\zeta)$ be a knowledge graph where $\vx$ is a feature map, and $\zeta$ is a $k$-ary nodes coloring. Let $q \in R$, then: 
            \begin{enumerate}[leftmargin=.7cm]
            \item For every $\hcmpnn$ $\gA$ with $L$ layers, there is an initial feature map $\vy$ for $G^k$ an $\hrmpnn$ $\gB$ with $L$ layers such that for all $0\leq \ell\leq L$ and $u,v\in V$, we have $\vh_{\gA,G}^{(\ell)}(\tilde{\vu},v) = \vh_{\gB,G^2}^{(\ell)}((\tilde{\vu},v))$.
            \item For every initial feature map $\vy$ for $G^k$ satisfying generalized target node distinguishability and every $\hrmpnn$ $\gB$ with $L$ layers, there is a $\hcmpnn$ $\gA$ with $L$ layers such that for all $0\leq \ell\leq L$ and $(\tilde{vu},v) \in V^k$, we have $\vh_{\gA,G}^{(\ell)}(\tilde{\vu},v) = \vh_{\gB,G^k}^{(\ell)}((\tilde{\vu},v))$.
            \end{enumerate}
        \begin{proof}
            We first show item (1). Consider the $\hrmpnn$ $\mathcal{B}$ with the same relational-specific message $\mes_{r}$, aggregation $\aggregate$, and update functions $\update$ as $\mathcal{A}$ for all the $L$ layers. The initial feature map $\vy$ is defined as $\vy((\tilde{\vu},v)) = \init(v,\vq)$, where $\init$ is the initialization function of $\mathcal{A}$. Then, by induction on number of layer $\ell$, we have that for the base case $\ell = 0$, $\vh_{\gA}^{(0)}(\tilde{\vu}, v) = \init(v,\vq) = \vy((\tilde{\vu},v)) = \vh_{\gB}^{(0)}((\tilde{\vu}, v))$. 
            
            For the inductive case, assume $\vh_{\gA}^{(\ell)}(\tilde{\vu}, v) = \vh_{\gB}^{(\ell)}((\tilde{\vu}, v))$, then
            \begin{align*}
                \vh_{\gA}^{(\ell+1)}(\tilde{\vu}, v)  &=\update\Big( \vh_{\gA}^{(\ell)}(\tilde{\vu}, v),  \aggregate\big(\vh_{\gA}^{(\ell)}(\tilde{\vu}, v) , 
             \\&\qquad \llbrace \mes_{\rho(e)} \Big(\{(\vh_{\gA}^{(\ell)}(\tilde{\vu}, w), j ) \mid (w,j) \in \gN_{i}^{G}(e) \} \Big) \mid (e,i) \in E^{G}(v) \rrbrace \big)  \Big) \\
             &=\update\Big( \vh_{\gB}^{(\ell)}((\tilde{\vu}, v)),  \aggregate\big(\vh_{\gB}^{(\ell)}((\tilde{\vu}, v)) , 
             \\&\qquad \llbrace \mes_{\rho(e)} \Big(\{(\vh_{\gB}^{(\ell)}((\tilde{\vu}, w)), j ) \!\mid\! (w,j) \in \gN_{i}^{G^k}(e) \} \Big) \!\mid\! (e,i) \in E^{G^k}(v) \rrbrace \big)  \Big) \\
             &= \vh_{\gB}^{(\ell+1)}((\tilde{\vu}, v)).
            \end{align*}

            To show item (2), we consider $\gA$ with the same relational-specific message $\mes_{r}$, aggregation $\aggregate$, and update functions $\update$ as $\mathcal{B}$ for all the $L$ layers. We also take initialization function $\init$ such that $\init(v, \vq) = \vy((\tilde{\vu},v))$. Then, we can follow the same argument for the equivalence as item (1).
        \end{proof}
    \end{proposition}   
Similarly, we can show the equivalence in terms of the relational WL algorithms with $\hcwl_k$:
    \begin{proposition}
    \label{prop:reduction-hcwlk-hrwl1}
    Let $G=(V,E,R,c,\zeta)$ be a relational hypergraph where $\zeta$ is a $k$-ary node coloring. For all $\ell \geq 0$ and $(\tilde{\vu},v)\in V^k$, we have that $\hcwl_k^{(\ell)}(\tilde{\vu},v)$ computed over $G$ coincides with
    $\hrwl_1^{(\ell)}((\tilde{\vu},v))$ computed over $G^k=(V^k,E',R,c_{\zeta})$.
    \end{proposition}
    \begin{proof}
    For $\ell=0$, we have $\hcwl_k^{(0)}(G,\tilde{\vu},v) = \zeta(\tilde{\vu},v) = c_{\zeta}((\tilde{\vu},v)) = \hrwl_1^{(0)}(G^k, (\tilde{\vu},v))$. 
    
    For the inductive case, we have that
    \begin{align*}
            \hcwl_k^{(\ell+1)}(G,\tilde{\vu},v) &= \tau \Big( 
            \hcwl_k^{(\ell)}(G,\tilde{\vu},v), 
             \llbrace\big(
                    \{ 
                        (\hcwl_2^{(\ell)}(G,\tilde{\vu},w),j) \mid (w, j) \in \gN^G_{i}(e)
                    \}, \rho(e)
                \big) \mid (e,i) \in E^G(v)
            \rrbrace
            \Big)\\
            &= \tau \Big( 
            \hrwl_1^{(\ell)}(G^k,(\tilde{\vu},v)), 
            \\&\qquad \llbrace\big(
                    \{ 
                        (\hrwl_1^{(\ell)}(G^k,(\tilde{\vu},w)),j) \!\mid\! (w, j) \in \gN^{G^k}_{i}(e)
                    \}, \rho(e)
                \big) \!\mid\! (e,i) \in E^{G^k}(v)
            \rrbrace
            \Big)\\
            &= \hrwl_1^{(\ell+1)}(G^k,(\tilde{\vu},v)).
        \end{align*}
    \end{proof}
    Now we are ready to show the proof for \Cref{thm: hcwl_k}. For a relational hypergraph $G=(V,E,R,\vx,\zeta)$, we consider $G^k=(V^k, E', R, c_{\zeta})$ as defined earlier.
We start with item (1). 
Let $\gA$ be a $\hcmpnn$ with $L$ layers and initialization $\init$ satisfying $\init\equiv \zeta$ and let $0\leq \ell \leq L$. Let $\vy$ be an initial feature map for $G^k$ and $\gB$ be an $\hrmpnn$ with $L$ layers in \Cref{prop: reduction-hrmpnn-hcmpnn-k}, item (1). For the initialization we have $\vy\equiv c_{\zeta}$ since $\vy((\tilde{\vu},v)) = \init(v,\vq)$. Thus, we can proceed and apply \Cref{thm: HRMPNN}, item (1) to $G^k$, $\vy$, and $\gB$ and show that $\hrwl_1^{(\ell)} \preceq \vh_{\gB,G^k}^{(\ell)}$, which in turns shows that $\hcwl_k^{(\ell)}\preceq \vh_{\gA,G}^{(\ell)}$.

We then proceed to show item (2). Let $L \geq 0$ be an integer representing a total number of layers. We apply \Cref{thm: HRMPNN}, item (2) to $G^k$ and obtain an initial feature map $\vy$ with $\vy\equiv c_{\zeta}$ and an $\hrmpnn$ $\gB$ with $L$ layer such that $\hrwl_1^{(\ell)}\equiv \vh_{\gB,G^k}^{(\ell)}$ for all $0 \leq \ell \leq L$. We stress again that $\vy$ and $\zeta$ both satisfy generalized target node distinguishability. Now, let $\gA$ be the $\hcmpnn$ from $\Cref{prop: reduction-hrmpnn-hcmpnn-k}$, item (2). Thus, $\hcwl_k^{(\ell)}\equiv \vh_{\gA,G}^{(\ell)}$ as required. Again, we note that the item (2) holds for $\hcnet$.

\end{proof}

\section{Complexity analysis}
\begin{table*}[!t]
\centering
\caption{Model asymptotic runtime complexities. 
}
\label{table:model-complexities}
\begin{tabular}{ccc}
\toprule
 \textbf{Model} & \textbf{Complexity of a forward pass} & \textbf{Amortized complexity of a query} \\
\midrule
$\hrmpnns$ & $\mathcal{O}(L(m|E|d + |V|d^2))$ & $\mathcal{O}(L(\frac{m|E|d}{|R||V|^2} + \frac{d^2}{|R||V|} + d))$ \\
$\hcmpnns$ & $\mathcal{O}(L(m|E|d + |V|d^2))$ & $\mathcal{O}(L(\frac{m|E|d}{|V|} + d^2))$ \\
\bottomrule
\end{tabular}
\end{table*}
\label{app:complexity}
In this section, we discuss the asymptotic time complexity of $\hrmpnn$ and $\hcmpnn$. For $\hcmpnn$, we consider the model instance of $\hcnet$ with $g_r^{(\ell)}$ being a \emph{query-independent} diagonal linear map.  
For $\hrmpnn$, we consider the model instance with the same updating function $\update$ and relation-specific message function $\mes_r$ as the considered $\hcnet$ model instance, referred to as $\hrnet$: 
\begin{align*}
\vh_{v}^{(0)} &= \mathbf{1}^d \\
\vh_{v}^{(\ell+1)}  &=\sigma\Big( \mW^{(\ell)} \Big[\vh_{v}^{(\ell)} \Big\|  \sum_{(e,i) \in E(v) } \Big(\odot_{j \neq i}(\alpha^{(\ell)}\vh_{e(j)}^{(\ell)} \!\!+ (1-\alpha^{(\ell)})\vp_{j})\odot\vw_r^{(\ell)}   \Big) \Big]
+ \vb^{(\ell)}
\Big). 
\end{align*}

\textbf{Notation.}
Given a relational hypergraph $G = (V, E, R, c)$, we denote $|V|, |E|, |R|$ to be the size of vertices, edges, and relation types. $d$ is the hidden dimension and $m$ is the maximum arity of the edges. Additionally, we denote $L$ to be the total number of layers, and $k$ to be the arity of the query relation $q \in R$ in the query $\vq = (q,\tilde{\vu},t)$. 

\textbf{Analysis.}
Given a query $\vq= (q,\tilde{\vu},t)$, the runtime complexity of a single forward pass of $\hcnet$ is $O(L (m|E|d + |V|d^2))$ since for each message, we need $O(d)$ for the relation-specific transformation, and we have $m|E|$ total amount of message in each layer. During the updating function,  we additionally need a linear transformation for each aggregated message as well as a self-transformation, which costs $O(d^2)$ for each node. Adding them up, we have $O(m|E|d + |V|d^2)$ cost for each layer, and thus $O(L(m|E|d + |V|d^2))$ in total. 

Note that this is the same as the complexity of $\hrnet$ since the only differences lie in initialization methods, which is $O(|V|d)$ cost for $\hcnet$.  In terms of computing a single query, the amortized complexity of $\hcnet$ is $O(L (\frac{m|E|d}{|V|} + d^2))$ since in each forward pass, $|V|$ number of queries are computed at the same time. In contrast, $\hrnet$ computes $|V|^k$ query as once it has representations for all nodes in the relational hypergraph, it can compute all possible hyperedges by permuting the nodes and feeding them into the $k$-ary decoder. We summarize the complexity analysis in \Cref{table:model-complexities}.   

\paragraph{Discussion with space complexity of positional encoding.} 

Note that models designed for inductive inference over knowledge graphs, such as NBFNet~\citep{zhu2022neural}, typically augments the original knowledge graph edges with their inverses by introducing inverse relations (i.e., for each $r(a,b)$, an edge of the form $r^{-1}(b,a)$ is added). This is to ensure that messages flow in both directions between two nodes. 

To simply extend this idea to relational hyperedges, we need the full set of directional interactions in a hyperedge of arity $k$, which requires enumerating all $k!$ permutations of the node ordering, resulting in $k! - 1$ additional hyperedges per original one. If each permutation is treated as a distinct ``augmented'' relation, the model must store a separate relation embedding for each of these permutations. Assuming uniform arity $k$ and an embedding dimension $d$, this leads to a space complexity of
$|R|k!d$.

Note that such an approach is only practical when $k = 2$, as in traditional knowledge graphs, where it doubles the number of embeddings. For relational hypergraphs with larger $k$, however, this introduces an exponential increase in storage and quickly becomes infeasible.

In contrast, our method avoids this explosion by employing positional encodings to distinguish among node permutations within a hyperedge. This allows us to retain a single relation embedding per relation, with an additional $k$-length positional embedding shared across all relations, resulting in a total space complexity of $|R|d + kd.$
This scales linearly with $k$, enabling efficient modeling of high-arity hyperedges without sacrificing expressive power.

\section{Details in synthetic experiments}
\label{app:synthetic experiments details}

\textbf{Dataset construction.} We construct $\hypercycle$, a synthetic dataset that consists of multiple relational hypergraphs with relation $R = \{r_0, r_1,r_2\}$. Each relational hypergraph $G$ is parameterized by 2 hyperparameters: the number of nodes $n$ which is always a multiple of $4$, and the arity of each edge $k$. Given such $(n,k)$ pair, we generate the relational hypergraph $G(n,k) = (V(n,k),E(n,k),R(n,k))$ where
\begin{align*}
    V(n,k) &= \{x_1, \cdots, x_n\} \\
    E(n,k) &= \{r_{(i \bmod 2)+1}(x_{(i+j)\bmod n} \mid 0 \leq j < k) \mid 1 \leq i \leq n \} \\ 
    R(n,k) &= \{r_0,r_1,r_2\}
\end{align*}
We generate the dataset by choosing $n = \{8, 12, 16, 20\}$ and $k = \{3, 4, 5, 6, 7\}$. We then randomly pick 70\% of the generated graphs as the training set and the remaining 30\% as the testing set.

\textbf{Model architectures.} We considered two model architectures, namely an $\hcmpnn$ instance $\hcnet$:
\begin{align*}
\vh_{v|\vq}^{(0)} &= \sum_{i \neq t} \mathbbm{1}_{v = u_i} *(\vp_{i} + \vz_q ) \\
\vh_{v|\vq}^{(\ell+1)}  &=\sigma\Big( \mW^{(\ell)} \Big[\vh_{v|\vq}^{(\ell)} \Big\|  \sum_{(e,i) \in E(v) } \Big(\odot_{j \neq i}(\alpha^{(\ell)}\vh_{e(j)|\vq}^{(\ell)} \!\!+ (1-\alpha^{(\ell)})\vp_{j})\odot\vw_r^{(\ell)}   \Big) \Big]
+ \vb^{(\ell)}
\Big). 
\end{align*}
and a corresponding $\hrmpnns$ instance called $\hrnet$ that shares the same \emph{update}, \emph{aggregate}, and relation-specific \emph{message} functions as in $\hcnet$, defined as follow:
\begin{align*}
\vh_{v}^{(0)} &= \mathbf{1}^d \\
\vh_{v}^{(\ell+1)}  &=\sigma\Big( \mW^{(\ell)} \Big[\vh_{v}^{(\ell)} \Big\|  \sum_{(e,i) \in E(v) } \Big(\odot_{j \neq i}(\alpha^{(\ell)}\vh_{e(j)}^{(\ell)} \!\!+ (1-\alpha^{(\ell)})\vp_{j})\odot\vw_r^{(\ell)}   \Big) \Big]
+ \vb^{(\ell)}
\Big). 
\end{align*}
Note that $\sigma$ stands for the ReLU activation function in both models. We additionally use a binary MLP decoder for $\hrnet$, which takes the concatenation of the final representation for each entity in the query, together with the learnable query vector $\vz_q$ to obtain the final probability.

\textbf{Experimental details.} For both models, we use $7$ layers, each with $32$ hidden dimensions. We configure the learning rate to be 1e-3 for both models and train them for 100 epochs.

\section{Scalability and custom Triton kernel}
\label{app: scalability}

Scalability is generally a concern for inductive link prediction since link prediction between a given pair of nodes relies heavily on the structural properties of these nodes (due to the lack of node features) which necessitates strong encoders that go beyond the power of 1-WL. This is more dramatic for relational hypergraphs since the prediction now relies on the structural properties of $k$ nodes and any model will suffer from scalability issues if $k$ becomes large. With that being said, our approach remains feasible for the benchmark datasets, but we think it is important for future work to scale up these models for larger datasets, much like it has been done for classical GNNs~\citep{graphsage,zhu2023anet}.

To resolve this empirically, we have included custom implementation via Triton kernel~\footnote{https://github.com/triton-lang/triton} in \href{https://anonymous.4open.science/r/HCNet}{our codebase} to account for the message passing process on relational hypergraphs, which on average halved the training times and dramatically reduced the space usage of the algorithm (5 times reduction on average). The idea is to not materialize all the messages explicitly as in PyTorch geometric~\citep{pytorch-geometric}, but directly write the neighboring features into the corresponding memory addresses. Compared with materializing all hyperedge messages which takes $O(k|E|)$ where $k$ is the maximum arity, computing with Triton kernel only is $O(|V|)$ in memory. This will enable fast and scalable message passing on relational hypergraphs, both on $\hrmpnns$ and $\hcmpnns$.

\section{On adding node features}
\label{app: node-features}

On the surface, it seems that $\hcmpnns$ does not directly take node features into account. This is because in the task of link prediction on relational hypergraphs, no node features are explicitly provided to begin with, and thus we did not assume the presence of node features in this particular task setting. However, it is relatively straightforward to account for node features by simply concatenating the node feature $\vx_v$ on top of the current representation $\vh_v$ to obtain 
$\vh_v^* = [\vh_v \| \vx_v]$. Indeed, the only requirement for $\hcmpnns$ in the initialization is to satisfy generalized target node distinguishability, and thus concatenating node features will preserve this property. As a result, all theoretical results can be directly applied to $\hcmpnns$ with node features. It is worth noting that this concatenating technique has already been applied in \citet{LabelingTrick2021} on knowledge graphs with node features and has proven to be successful. Additionally, this technique is also mentioned in \citet{galkin2023ultra} for link prediction with knowledge graphs using conditional message passing. 

\section{Impact on the density of the relational hypergraphs}
\label{app: density}
\begin{table}[t]
    \centering
    \caption{Average degree of relational hypergraphs in the experiments.}
    \label{tab:average_degree}
    \begin{tabular}{lccccc}
    \toprule
         & \textbf{WP-IND} & \textbf{JF-IND} & \textbf{MFB-IND} & \textbf{FB-AUTO} & \textbf{WikiPeople} \\
         \midrule
       Average degree   & 1.03 & 1.36 & 104.5 & 2.16 & 6.06 \\
       \bottomrule
    \end{tabular}
    
\end{table}

To further analyze the impact on the structure and density, we present the average degree of (training) datasets in \Cref{tab:average_degree}. Observe that even though MFB-IND is a very dense hypergraph, HCNets can still manage to double the metrics compared to existing models. Furthermore, we highlight the performance of HCNets in sparse hypergraph settings, which are more representative of many real-world scenarios.  Remarkably, HCNets maintain competitive performance even under these challenging conditions, underscoring their adaptability and effectiveness across a wide range of graph density regimes. These findings highlight the versatility of HCNets in handling diverse hypergraph structures.

\section{Further experiment details}
\label{app: experiment-details}
We report the details of the experiment carried out in the body of the paper in this section. In particular, we report the dataset statistics of the inductive link prediction task in \Cref{tab:inductive-statistics} and of the transductive link prediction task in \Cref{tab:transductive-statistics}. We also report the hyperparameter used for $\hcnet$ in the inductive link prediction task at \Cref{tab: inductive-hyper-parameter-statistics} and transductive link prediction task at \Cref{tab: transductive-hyper-parameter-statistics}, respectively.

We present the dataset statistics and hyperparameter choices in \Cref{tab: binary-inductive-dataset-statistics} and \Cref{inductive-hyper-parameter-statistics}, respectively.
We also show the complete tables for the ablation study mentioned in \Cref{app:ablation-initialization} and \Cref{app:ablation-positional}, the detailed definitions of initialization and positional encoding considered in \Cref{app:ablation-initialization-definition} and \Cref{app:ablation-positional-definition}, respectively. 

Finally, we report the execution time and GPU usages for $1$ epochs of $\hcnets$ on all datasets considered in the paper with corresponding hyperparameters in \Cref{app: execution-time}. See further discussion of scalability in \Cref{app: scalability}. For the RD-MPNNs training, we consider a learning rate of 0.1, a dimension of 200, and 10 negative samples for training on all inductive datasets. In the experiments, all relational hypergraphs do not contain node features. We present a detailed discussion and strategy in \Cref{app: node-features} for $\hcmpnns$ to be applied on relational hypergraphs with node features.

We adopt the \emph{partial completeness assumption} \citep{partial_completeness_assumption} on relational hypergraphs, where we randomly corrupt the $t$-th position of a $k$-ary fact $q(u_1, \cdots, u_k)$ each time for $1 \leq t \leq k$. $\hcnets$ minimize the negative log-likelihood of the positive fact presented in the training graph, and the negative facts due to corruption. We represent query $\vq = (q,\tilde{\vu},t)$ as the fact $q(u_1, \cdots, u_k)$ given corrupting $t$-th position, and represent its conditional probability as $p(v| \vq) = \sigma(f(\vh_{v|\vq}^{(L)}))$, where $v \in V$ is the considered entity in the $t$-th position, $L$ is the total number of layer, $\sigma$ is the sigmoid function, and $f$ is a 2-layer MLP.  We then adopt \emph{self-adversarial negative sampling} \citep{sun2019rotate} by sampling negative triples from the following distribution:
\begin{equation*}
    \gL(v \mid \vq) =-\log p(v \mid \vq)-\sum_{i=1}^n w_{i,\alpha} \log (1-p(v_i^{\prime} \mid \vq)) \\
\end{equation*}
where $\alpha$ is the adversarial temperature as part of the hyperparameter, $n$ is the number of negative samples for the positive sample and $v^{\prime}_i$ is the $i$-th corrupted vertex of the negative sample. Finally, $w_i$ is the weight for the $i$-th negative sample, given by 
\[
w_{i,\alpha} := \operatorname{Softmax}\left(\frac{\log (1-p(v_i^{\prime} \mid \vq))}{\alpha}\right).
\]

\begin{table}[t]
    \centering
    \caption{Dataset statistics of inductive link prediction task with relational hypergraph.}
    \label{tab:inductive-statistics}
    \begin{tabular}{l>{\centering\arraybackslash}p{4em}>{\centering\arraybackslash}p{4.5em}>{\centering\arraybackslash}p{4.5em}>{\centering\arraybackslash}p{4.5em}>{\centering\arraybackslash}p{4em}>{\centering\arraybackslash}p{3.5em}}
    \toprule
    Dataset & \# seen vertices &  \# train hyperedges  &  \# unseen vertices  &  \# relations & \# features & \# max arity  \\
    \midrule
    WP-IND  & 4,463 & 4,139 & 100 & 32 & 37 & 4\\ 
    JF-IND  & 4,685 & 6,167 & 100 & 31 & 46 & 4\\
    MFB-IND  & 3,283 & 336,733 & 500 & 12 & 25 & 3\\
    \bottomrule
    \end{tabular}
\end{table}

\begin{table}[ht]
    \centering
    \caption{Dataset statistics of transductive link prediction task with relational hypergraph on FB-AUTO, WikiPeople, JF17K, and MFB15K with respective arity.}
    \label{tab:transductive-statistics}
    \begin{tabular}{lcccc}
        \toprule
        Dataset       & FB-AUTO & WikiPeople & JF17K & MFB15K \\
        \midrule
        $|V|$         & 3,410   & 47,765     & 29,177 & 10,314 \\
        $|R|$         & 8       & 707        & 327    & 71 \\
        \#train       & 6,778   & 305,725    & 61,104 & 415,375 \\
        \#valid       & 2,255   & 38,223     & 15,275 & 39,348 \\
        \#test        & 2,180   & 38,281     & 24,915 & 38,797 \\
        \midrule
        \# arity$=2$  & 3,786   & 337,914    & 56,322 & 82,247 \\
        \# arity$=3$  & 0       & 25,820     & 34,550 & 400,027 \\
        \# arity$=4$  & 215     & 15,188     & 9,509  & 26 \\
        \# arity$\geq5$ & 7,212   & 3,307      & 2,267  & 11,220 \\
        \bottomrule
    \end{tabular}
\end{table}

\begin{table*}[t!] 
\centering
\caption{Hyperparameters for inductive experiments of $\hcnet$.}
\label{tab: inductive-hyper-parameter-statistics}
\begin{tabular}{llccc}
\toprule  \multicolumn{2}{l}{ \textbf{Hyperparameter} } &\multicolumn{1}{c}{ \textbf{WP-IND} } & \multicolumn{1}{c}{\textbf{JF-IND}} & \multicolumn{1}{c}{\textbf{MFB-IND}}\\
\midrule
\multirow{2}{*}{\textbf{GNN Layer}} & Depth$(L)$ & $5$ & $5$ & $4$\\
& Hidden Dimension & $128$ & $256$ & $32$ \\
\midrule
\multirow{2}{*}{\textbf{Decoder Layer}} & Depth & $2$ & $2$ & $2$ \\
& Hidden Dimension & $128$ & $256$ & $32$  \\
\midrule
\multirow{2}{*}{\textbf{Optimization}} & Optimizer & Adam & Adam & Adam \\
& Learning Rate & 5e-3 & 1e-2 & 5e-3 \\
\midrule
\multirow{7}{*}{\textbf{Learning}} & Batch size & $32$ & $32$ & $1$\\
& \#Negative Sample & $10$ & $10$ & $10$  \\
& Epoch & $20$ & $20$ & $10$  \\
& \#Batch Per Epoch & - & - & $10000$ \\
& Adversarial Temperature & $0.5$ & $0.5$ & $0.5$ \\
& Dropout & $0.2$ & $0.2$ & $0$ \\
& Accumulation Iteration & $1$ & $1$& $32$\\
\bottomrule
\end{tabular}

\end{table*}

\begin{table*}[t!] 
\centering
\caption{Hyperparameters for transductive experiments of $\hcnet$.}
\label{tab: transductive-hyper-parameter-statistics}
\begin{tabular}{llccccc}
\toprule  
\multicolumn{2}{l}{ \textbf{Hyperparameter} } & \multicolumn{1}{c}{\textbf{FB-AUTO}} & \multicolumn{1}{c}{\textbf{WikiPeople}} & \multicolumn{1}{c}{\textbf{JF17K}} & \multicolumn{1}{c}{\textbf{MFB15K}}\\
\midrule
\multirow{2}{*}{\textbf{GNN Layer}} & Depth$(L)$  & $4$ & $5$ & $6$ & $6$\\
& Hidden Dimension  & $128$ & $64$ & $64$ & $64$ \\
\midrule
\multirow{2}{*}{\textbf{Decoder Layer}} & Depth  & $2$ & $2$ & $2$ & $2$\\
& Hidden Dimension  & $128$ & $64$ & $64$ & $64$ \\
\midrule
\multirow{2}{*}{\textbf{Optimization}} & Optimizer  & Adam & Adam  & Adam & Adam \\
& Learning Rate  & 1e-3  & 1e-3 & 5e-3 & 1e-3\\
\midrule
\multirow{7}{*}{\textbf{Learning}} & Batch size  & $32$ & $16$ & $1$ & $32$\\
& \#Negative Sample  & $32$ & $32$ & $50$ & $32$ \\
& Epoch  & $20$ & $6$ & $6$ & $4$\\
& \#Batch Per Epoch& $-$ & $5000$ & $-$ & $-$\\
& Adversarial Temperature  & $0.5$  & $0.5$  & $0.5$ & $0.5$ \\
& Dropout  & $0.2$ & $0.2$ & $0.2$ & $0.2$ \\
& Accumulation Iteration  & $1$ & $1$ & $32$ & $1$\\
\bottomrule
\end{tabular}
\end{table*}
\begin{table*}[t]
    \centering
    \caption{Results of inductive link prediction experiments. We report averaged MRR, Hits@$1$, and Hits@$3$ (higher is better) on test sets together with its standard deviation.}
    \label{app: std-inductive}
    \begin{tabular}{l>{\centering\arraybackslash}p{2.3em}>{\centering\arraybackslash}p{2.3em}>{\centering\arraybackslash}p{2.3em}>{\centering\arraybackslash}p{2.3em}>{\centering\arraybackslash}p{2.3em}>{\centering\arraybackslash}p{2.3em}>{\centering\arraybackslash}p{2.3em}>{\centering\arraybackslash}p{2.3em}>{\centering\arraybackslash}p{2.3em}}
    \toprule 
     & \multicolumn{3}{c}{ WP-IND } & \multicolumn{3}{c}{ JF-IND } & \multicolumn{3}{c}{ MFB-IND } \\
    & MRR & Hits@1 & Hits@3 & MRR & Hits@1 & Hits@3 & MRR & Hits@1 & Hits@3 \\
   \midrule
    HGNN & 0.072 & 0.045 & 0.112 & 0.102 & 0.086 & 0.128 & 0.121 & 0.076 & 0.114 \\
    HyperGCN & 0.075 & 0.049 & 0.111 & 0.099 & 0.088 & 0.133 & 0.118 & 0.074 & 0.117 \\
    G-MPNN-sum & 0.177 & 0.108 & 0.191 & 0.219 & 0.155 & 0.236 & 0.124 & 0.071 & 0.123 \\
    G-MPNN-mean& 0.153 & 0.096 & 0.145 & 0.112 & 0.039 & 0.116 & 0.241 & 0.162 & 0.257 \\
    G-MPNN-max & 0.200 & 0.125 & 0.214 & 0.216 & 0.147 & 0.240 & 0.268 & 0.191 & 0.283 \\
    RD-MPNN & 0.304 & 0.238 & 0.328 & 0.402 &	0.308 &	0.453 & 0.122 & 0.082 & 0.125 \\
     \midrule
    \hcnet &   \textbf{0.414} $\pm$ 0.005 & \textbf{0.352} $\pm$ 0.004 & \textbf{0.451} $\pm$ 0.005& \textbf{0.435}  $\pm$ 0.017	& \textbf{0.357} $\pm$ 0.023 & \textbf{0.495} $\pm$ 0.014	& \textbf{0.368} $\pm$ 0.015 & \textbf{0.223} $\pm$ 0.014 &  \textbf{0.417} $\pm$ 0.022\\ 
    \bottomrule
    \end{tabular}
\end{table*}

\begin{table}[t]
\centering
\caption{Results of ablation study experiments on initialization. We report MRR, Hits@$1$, and Hits@$3$ (higher is better) on test sets.}
\label{app:ablation-initialization}
\begin{tabular}{cc|cccccc}
\toprule
\multicolumn{2}{c}{ $\init$ } & \multicolumn{3}{c}{ WP-IND } & \multicolumn{3}{c}{ JF-IND } \\
$\vz_q$ & $\vp_i$ & MRR & Hits@1 & Hits@3 & MRR & Hits@1 & Hits@3 \\
\midrule
- & - & 0.388  & 0.324 & 0.421 & 0.390 & 0.295 & 0.451 \\
\checkmark & - & 0.387 & 0.321 & 0.421 & 0.392 & 0.302 & 0.447 \\
- & \checkmark & 0.394 & 0.329 & 0.430 & 0.393 & 0.300 & 0.456 \\
\checkmark & \checkmark & $\textbf{0.414}$ & $\textbf{0.352}$ & $\textbf{0.451}$ & $\textbf{0.435}$ & $\textbf{0.357}$ & $\textbf{0.495}$ \\
\bottomrule
\end{tabular}
\end{table}

\begin{table}[t]
\centering
\caption{Results of ablation study experiments on positional encoding. We report MRR, Hits@$1$, and Hits@$3$ (higher is better) on test sets.}
\label{app:ablation-positional}
\begin{tabular}{c|cccccc}
\toprule
\multirow{2}{*}{ PE } & \multicolumn{3}{c}{ WP-IND } & \multicolumn{3}{c}{ JF-IND } \\
 & MRR & Hits@1 & Hits@3 & MRR & Hits@1 & Hits@3 \\
\midrule
Constant & 0.393 & 0.328 & 0.426 & 0.356 & 0.247 & 0.428 \\
One-hot & 0.395 & 0.334 & 0.428 & 0.368 & 0.275 & 0.432 \\
Learnable & 0.396 & 0.335 & 0.425 & 0.416 & 0.335 & 0.480 \\
Sinusoidal & $\textbf{0.414}$ & $\textbf{0.352}$ & $\textbf{0.451}$ & $\textbf{0.435}$ & $\textbf{0.357}$ & $\textbf{0.495}$ \\
\bottomrule
\end{tabular}
\end{table}
\begin{table}[t]
    \centering
    \caption{Transductive link prediction experiments on FB-AUTO and WikiPeople}
    \label{tab:transductive-fb-wiki}
    \begin{tabular}{lcccccccc}
    \toprule
     & \multicolumn{4}{c}{FB-AUTO} & \multicolumn{4}{c}{WikiPeople} \\
     & MRR & Hits@1 & Hits@3 & Hits@10 & MRR & Hits@1 & Hits@3 & Hits@10 \\
    \midrule
    m-DistMult & 0.784 & 0.745 & 0.815 & 0.845 & - & - & - & - \\
    m-CP & 0.752 & 0.704 & 0.785 & 0.837 & - & - & - & - \\
    m-TransH & 0.728 & 0.727 & 0.728 & 0.728 & - & - & - & - \\
    RAE       & 0.703 & 0.614 & 0.764 & 0.854 & 0.253 & 0.118 & 0.343 & 0.463 \\
    NaLP      & 0.672 & 0.611 & 0.712 & 0.774 & 0.338 & 0.272 & 0.362 & 0.466 \\
    tNaLP+    & 0.729 & 0.645 & 0.748 & 0.826 & 0.339 & 0.269 & 0.369 & 0.473 \\
    HINGE     & 0.678 & 0.630 & 0.706 & 0.765 & 0.333 & 0.259 & 0.361 & 0.477 \\
    NeuInfer  & 0.737 & 0.700 & 0.755 & 0.805 & 0.350 & 0.282 & 0.381 & 0.467 \\
    HypE      & 0.804 & 0.774 & 0.823 & 0.856 & 0.263 & 0.127 & 0.355 & 0.486 \\
    RAM       & 0.830 & 0.803 & 0.851 & 0.876 & 0.363 & 0.271 & 0.455 & 0.500 \\
    BoxE      & 0.844 & 0.814 & 0.863 & 0.898 & - & - & - & - \\
    HyperMLN  & 0.831 & 0.803 & 0.851 & 0.877 & - & - & - & - \\
    HyConvE   & 0.847 & 0.820 & 0.872 & 0.901 & 0.362 & 0.275 & 0.388 & 0.501 \\
    ReAIE     & 0.873 & \textbf{0.852} & 0.886 & 0.909 & - & - & - & - \\
    RD-MPNN   & 0.810 & 0.714 & 0.870 & 0.888 & - & - & - & - \\
    HJE & 0.872 & 0.848 & 0.886 & 0.903 & \textbf{0.450} & \textbf{0.375} & 0.487 & 0.582 \\
    HyCubE & \textbf{0.881} & 0.860 & \textbf{0.894} & 0.918 & 0.448 & 0.368 & \textbf{0.490} & \textbf{0.592} \\
    HyPlanE & 0.866 & 0.843 & 0.880 & 0.909 & 0.402 & 0.323 & 0.443 & 0.549 \\
    \midrule
    \hcnet & \begin{tabular}[c]{@{}c@{}}0.871\\ $\pm$ 0.005\end{tabular} & \begin{tabular}[c]{@{}c@{}}0.842\\ $\pm$ 0.007\end{tabular} & \begin{tabular}[c]{@{}c@{}}0.892\\ $\pm$ 0.003\end{tabular} & \begin{tabular}[c]{@{}c@{}}\textbf{0.922}\\ $\pm$ 0.004\end{tabular} & \begin{tabular}[c]{@{}c@{}}0.421\\ $\pm$ 0.004\end{tabular} & \begin{tabular}[c]{@{}c@{}}0.344\\ $\pm$ 0.005\end{tabular} & \begin{tabular}[c]{@{}c@{}}0.457\\ $\pm$ 0.005\end{tabular} & \begin{tabular}[c]{@{}c@{}}0.565\\ $\pm$ 0.007\end{tabular} \\
    \bottomrule
    \end{tabular}
    \vspace{-1em}
\end{table}

\begin{table}[t]
    \centering
    \caption{Transductive link prediction experiments on JF17K and MFB15K}
    \label{tab:transductive-jf-mfb}
    \begin{tabular}{lcccccccc}
    \toprule
     & \multicolumn{4}{c}{JF17K} & \multicolumn{4}{c}{MFB15K} \\
     & MRR & Hits@1 & Hits@3 & Hits@10 & MRR & Hits@1 & Hits@3 & Hits@10 \\
    \midrule
    m-DistMult & 0.463 & 0.372 & 0.510 & 0.634 & 0.705 & 0.633 & 0.740 & 0.844 \\
    m-CP & 0.392 & 0.303 & 0.441 & 0.560 & 0.680 & 0.605 & 0.715 & 0.828 \\
    m-TransH & 0.444 & 0.370 & 0.475 & 0.581 & 0.623 & 0.531 & 0.669 & 0.809 \\
    RAE       & 0.396 & 0.312 & 0.433 & 0.561 & - & - & - & - \\
    NaLP      & 0.366 & 0.290 & 0.334 & 0.516 & - & - & - & - \\
    tNaLP+    & 0.449 & 0.370 & 0.484 & 0.598 & - & - & - & - \\
    HINGE     & 0.473 & 0.397 & 0.490 & 0.618 & - & - & - & - \\
    NeuInfer  & 0.451 & 0.373 & 0.484 & 0.604 & - & - & - & - \\
    HypE      & 0.494 & 0.408 & 0.538 & 0.656 & 0.777 & 0.725 & 0.800 & 0.881 \\
    RAM       & 0.539 & 0.463 & 0.573 & 0.690 & - & - & - & - \\
    BoxE      & 0.560 & 0.472 & 0.604 & 0.722 & 0.761 & 0.702 & 0.791 & 0.877 \\
    HyperMLN  & 0.556 & 0.482 & 0.597 & 0.717 & - & - & - & - \\
    HyConvE   & \textbf{0.590} & 0.478 & 0.610 & 0.729 & - & - & - & - \\
    ReAIE     & 0.559 & 0.482 & 0.594 & 0.705 & 0.801 & \textbf{0.755} & \textbf{0.823} & \textbf{0.901} \\
    RD-MPNN   & 0.512 & 0.445 & 0.573 & 0.685 & - & - & - & - \\
    HJE & \textbf{0.590} & \textbf{0.507} & 0.613 & 0.729 & - & - & - & - \\
    HyCubE & 0.584 & 0.508 & 0.616 & \textbf{0.730} & - & - & - & - \\
    HyPlanE & 0.569 & 0.496 & 0.600 & 0.708 & - & - & - & - \\
    \midrule
    \hcnet & \begin{tabular}[c]{@{}c@{}}0.540\\ $\pm$ 0.002\end{tabular} & \begin{tabular}[c]{@{}c@{}}0.440\\ $\pm$ 0.001\end{tabular} & \begin{tabular}[c]{@{}c@{}}0.595\\ $\pm$ 0.007\end{tabular} & \begin{tabular}[c]{@{}c@{}}\textbf{0.730}\\ $\pm$ 0.006\end{tabular} & \begin{tabular}[c]{@{}c@{}}0.759\\ $\pm$ 0.003\end{tabular} & \begin{tabular}[c]{@{}c@{}}0.693\\ $\pm$ 0.002\end{tabular} & \begin{tabular}[c]{@{}c@{}}0.796\\ $\pm$ 0.005\end{tabular} & \begin{tabular}[c]{@{}c@{}}0.884\\ $\pm$ 0.007\end{tabular} \\
    \bottomrule
    \end{tabular}
    \vspace{-1em}
\end{table}

\begin{table*}[t!] 
\centering
\caption{Dataset statistics for the inductive relation prediction experiments. \textbf{\#Query*} is the number of queries used in the validation set. In the training set, all triplets are used as queries. }
\label{tab: binary-inductive-dataset-statistics}
\begin{tabular}{cccccccccc}
\toprule  \multirow{2}{*}{ \textbf{Dataset} } & \multirow{2}{*}{ }&\multirow{2}{*}{\textbf{\#Relation}}&\multicolumn{3}{c}{ \textbf{Train \& Validation} } & \multicolumn{3}{c}{\textbf{Test}}\\
 & & & \textbf{\#Nodes} & \textbf{\#Triplet} & \textbf{\#Query*} & \textbf{\#Nodes} & \textbf{\#Triplet}  & \textbf{\#Query} \\
 \midrule
\multirow{4}{*}{WN18RR} 
& $v_1$ & 9 & 2{,}746 & 5{,}410 & 630 & 922 & 1{,}618 & 188 \\
& $v_2$ & 10 & 6{,}954 & 15{,}262 & 1{,}838 & 2{,}757 & 4{,}011 & 441 \\
& $v_3$ & 11 & 12{,}078 & 25{,}901 & 3{,}097 & 5{,}084 & 6{,}327 & 605 \\
& $v_4$ & 9 & 3{,}861 & 7{,}940 & 934 & 7{,}084 & 12{,}334 & 1{,}429 \\
\midrule 
\multirow{4}{*}{FB15k-237}
& $v_1$ & 180 & 1{,}594 & 4{,}245 & 489 & 1{,}093 & 1{,}993 & 205 \\
& $v_2$ & 200 & 2{,}608 & 9{,}739 & 1{,}166 & 1{,}660 & 4{,}145 & 478 \\
&$v_3$ & 215 & 3{,}668 & 17{,}986 & 2{,}194 & 2{,}501 & 7{,}406 & 865 \\
& $v_4$ & 219 & 4{,}707 & 27{,}203 & 3{,}352 & 3{,}051 & 11{,}714 & 1{,}424 \\

\bottomrule
\end{tabular}
\end{table*}

\begin{table*}[t!] 
\centering
\caption{Hyperparameters for binary inductive experiments with $\hcnet$.}
\label{inductive-hyper-parameter-statistics}
\begin{tabular}{llcc}
\toprule  \multicolumn{2}{l}{ \textbf{Hyperparameter} } &\multicolumn{1}{c}{ \textbf{WN18RR} } & \multicolumn{1}{c}{\textbf{FB15k-237}} \\
\midrule
\multirow{2}{*}{\textbf{GNN Layer}} & Depth$(L)$ & $6$ & $6$ \\
& Hidden Dimension & $32$ & $32$ \\
\midrule
\multirow{2}{*}{\textbf{Decoder Layer}} & Depth & $2$ & $2$ \\
& Hidden Dimension & $64$ & $64$\\
\midrule
\multirow{2}{*}{\textbf{Optimization}} & Optimizer & Adam & Adam\\
& Learning Rate & 5e-3 & 5e-3\\
\midrule
\multirow{7}{*}{\textbf{Learning}} & Batch size & $32$ & $32$ \\
& \#Negative Samples & $32$ & $32$ \\
& Epoch & $30$ & $30$   \\
& \#Batch Per Epoch & $-$ & $-$ \\
& Adversarial Temperature & $0.5$ & $0.5$ \\
& Dropout & $0.2$ & $0.2$ \\
& Accumulation Iteration & $1$ & $1$ \\
\bottomrule
\end{tabular}

\end{table*}

\begin{table}[t]
\centering
\caption{Definition of $\init$ in the ablation study of initialization. Here, $\vq = (q,\tilde{\vu}, t)$,  and $d$ is the hidden dimension before passing to the first layer. }
\label{app:ablation-initialization-definition}
\begin{tabular}{cc|c}
\toprule
\multicolumn{2}{c}{ $\init$ } & \multirow{2}{*}{ $\vh_{v | \vq}^{(0)}$ } \\
$\vz_q$ & $\vp_i$ &  \\
\midrule
- & - & $\sum_{i \neq t} \mathbbm{1}_{v = u_i} *\mathbf{1}^d $ \\
\checkmark & - & $\sum_{i \neq t} \mathbbm{1}_{v = u_i} *\vp_{i} $ \\
- & \checkmark & $\sum_{i \neq t} \mathbbm{1}_{v = u_i} * \vz_q $ \\
\checkmark & \checkmark & $\sum_{i \neq t} \mathbbm{1}_{v = u_i} *(\vp_{i} + \vz_q ) $ \\
\bottomrule
\end{tabular}
\end{table}

\begin{table}[t]
\centering
\caption{Definition of $\vp_i$ in the ablation study of positional encoding. Here, $\mathbb{I}_i^d$ is the one-hot vector of $d$ dimension where only the index $i$ has entry $1$ and the rest $0$. Note that $d$ is the hidden dimension before passing to the first layer. $\hat{\vp}$ is a $d$-dimensional learnable vectors. $\vp_{i,j}$ is the $j$-th index of position encoding $\vp_{i}$, and $d$ is the dimension of the vector $\vp_{i}$.}
\label{app:ablation-positional-definition}
\begin{tabular}{c|c}
\toprule
PE &    \\ 
\midrule
Constant & $\vp_i = \mathbf{1}^d$ \\
One-hot & $\vp_i = \mathbb{I}_i^d$ \\
Learnable  & $\vp_i = \hat{\vp}_i$ \\
Sinusoidal & 
    $\vp_{i,2j} = \sin\left(\frac{i}{10000^{2j/d}}\right)$;
    $\vp_{i,(2j+1)} = \cos\left(\frac{i}{10000^{2j/d}}\right) $\\
\bottomrule
\end{tabular}
\end{table}

\begin{table}[t]
\small
    \centering
    \caption{Comparison of the execution time of $1$ epoch for inductive and transductive link prediction task with relational hypergraph using a single A10 GPU. Note that we use batch size $=1$ during the testing for all models, and 10k steps for MFB-IND during the training of HCNets.}
    \label{app: execution-time}
    \begin{tabular}{lcccccccccc}
    \toprule 
     & \multicolumn{2}{c}{ WP-IND } & \multicolumn{2}{c}{ JF-IND } & \multicolumn{2}{c}{ MFB-IND } & \multicolumn{2}{c}{ FB-AUTO } & \multicolumn{2}{c}{ WikiPeople }\\
    & Train & Test & Train & Test & Train & Test & Train & Test & Train & Test \\
   \midrule
    RD-MPNN & 2sec & 3.5min & 2sec & 3min & 14min & 38min &  3sec & 35min - & - &\\
    \hcnet &  3.5min  & 18sec & 8min & 10sec  & 80min  & 3.5min &  4.5min & 4min& 3hr  & 2hr    \\ 
    \bottomrule
    \end{tabular}
    
\end{table}

\end{document}